\documentclass{article}

\usepackage{microtype}
\usepackage{graphicx}
\usepackage{subcaption}
\usepackage{booktabs} 

\usepackage{hyperref}

\usepackage[accepted]{icml2026}

\usepackage{amsmath}
\usepackage{amssymb}
\usepackage{mathtools}
\usepackage{amsthm}

\usepackage[capitalize,noabbrev]{cleveref}

\theoremstyle{plain}
\newtheorem{theorem}{Theorem}[section]
\newtheorem{proposition}[theorem]{Proposition}
\newtheorem{lemma}[theorem]{Lemma}
\newtheorem{corollary}[theorem]{Corollary}
\theoremstyle{definition}
\newtheorem{definition}[theorem]{Definition}
\newtheorem{assumption}[theorem]{Assumption}
\theoremstyle{remark}

\newtheorem{property}{Property}
\newtheorem{claim}{Claim}
\usepackage[textsize=tiny]{todonotes}
\icmltitlerunning{Fixed Budget is No Harder Than Fixed Confidence}

\def\safedef#1{%
   \ifx#1\undefined
      \expandafter\def\expandafter#1%
   \else
      \errmessage{The \string#1 is defined already}%
      \expandafter\def\expandafter\tmp
   \fi
}

\usepackage{amsthm,amsmath,bbm,amsfonts,amssymb}
\usepackage{dsfont} 
\usepackage{mathtools}
\usepackage{commath}
\usepackage{eqnarray}
\mathtoolsset{showonlyrefs=false}
\allowdisplaybreaks 

\usepackage{xspace}
\usepackage[T1]{fontenc}
\usepackage[utf8]{inputenc}
\usepackage[usenames,dvipsnames]{xcolor} 

\usepackage{hyperref,url}

\usepackage{wrapfig}
\usepackage[export]{adjustbox}
\usepackage{graphicx,tabularx}
\usepackage{multirow,hhline}
\usepackage{booktabs}
\usepackage{pbox} 
\usepackage{algorithm,algorithmic}

\usepackage[export]{adjustbox}

\usepackage{enumitem}
\usepackage[normalem]{ulem}

{\color{kjsaved}}%

\newcommand{\tblue}[1]{{\color[rgb]{0,0,1}#1}} 

\usepackage{framed}
\usepackage[most]{tcolorbox}
\definecolor{kjgray}{rgb}{.7,.7,.7}

\newtheoremstyle{kjstyle}
{1ex} 
{\topsep} 
{\itshape} 
{} 
{\bfseries} 
{.} 
{.5em} 
{} 

\newtheoremstyle{kjstyle2}
{.0em} 
{.0em} 
{\itshape} 
{} 
{\bfseries} 
{.} 
{.5em} 
{} 

\newtheoremstyle{kjstylenoitalic}
{1ex} 
{\topsep} 
{} 
{} 
{\bfseries} 
{.} 
{.5em} 
{} 

\tcbset{kjboxstyle/.style={title={},breakable,colback=white,enhanced jigsaw,boxrule=1.3pt,sharp corners,colframe=kjgray,boxsep=0pt,coltitle={black},attach title to upper={},left=.8ex,bottom=.4em}}    

\tcbset{kjboxstylec/.style={title={},breakable,colback=white,enhanced jigsaw,boxrule=1.3pt,sharp corners,colframe=kjgray,boxsep=0pt,coltitle={black},attach title to upper={},before skip=1.5ex,left=.8ex,bottom=.4em,top=0.4em,enlarge top by=-0.3em,enlarge bottom by=-0.3em}}    

\usepackage{mdframed}
\usepackage{lipsum}
\definecolor{kjgray}{rgb}{.7,.7,.7}
\makeatletter



\makeatletter
\renewcommand{\paragraph}{%
  \@startsection{paragraph}{4}%
  {\z@}{0.50ex \@plus 1ex \@minus .2ex}{-1em}%
  {\normalfont\normalsize\bfseries}%
}
\makeatother

\usepackage[customcolors,shade]{hf-tikz} 

\usepackage{tabularx} 
\newcolumntype{P}[1]{>{\centering\arraybackslash}p{#1}}
\newcolumntype{M}[1]{>{\centering\arraybackslash}m{#1}}

\def\ddefloop#1{\ifx\ddefloop#1\else\ddef{#1}\expandafter\ddefloop\fi}

\def\ddef#1{\expandafter\def\csname #1#1\endcsname{\ensuremath{\mathbb{#1}}}}
\ddefloop ABCDFGHIJKLMNORSTUWXYZ\ddefloop 

\def\ddef#1{\expandafter\def\csname c#1\endcsname{\ensuremath{\mathcal{#1}}}}
\ddefloop ABCDEFGHIJKLMNOPQRSTUVWXYZ\ddefloop

\def\ddef#1{\expandafter\def\csname b#1\endcsname{\ensuremath{{\mathbf{#1}}}}}
\ddefloop ABCDEFGHIJKLMNOPQRSTUVWXYZ\ddefloop  
\def\ddef#1{\expandafter\def\csname b#1\endcsname{\ensuremath{{\boldsymbol{#1}}}}}
\ddefloop abcdeghijklmnopqrtsuvwxyz\ddefloop  

\def\ddef#1{\expandafter\def\csname h#1\endcsname{\ensuremath{\hat{#1}}}}
\ddefloop ABCDEFGHIJKLMNOPQRSTUVWXYZabcdefghijklmnopqrsuvwxyz\ddefloop 
\def\ddef#1{\expandafter\def\csname hc#1\endcsname{\ensuremath{\hat{\mathcal{#1}}}}}
\ddefloop ABCDEFGHIJKLMNOPQRSTUVWXYZ\ddefloop
\def\ddef#1{\expandafter\def\csname hb#1\endcsname{\ensuremath{\hat{\mathbf{#1}}}}}
\ddefloop ABCDEFGHIJKLMNOPQRSTUVWXYZ\ddefloop %
\def\ddef#1{\expandafter\def\csname hb#1\endcsname{\ensuremath{\hat{\boldsymbol{#1}}}}}
\ddefloop abcdefghijklmnopqrstuvwxyz\ddefloop %

\def\ddef#1{\expandafter\def\csname t#1\endcsname{\ensuremath{\tilde{#1}}}}
\ddefloop ABCDEFGHIJKLMNOPQRSTUVWXYZabcdefgijklmnpqtsuvwxyz\ddefloop 
\def\ddef#1{\expandafter\def\csname tc#1\endcsname{\ensuremath{\tilde{\mathcal{#1}}}}}
\ddefloop ABCDEFGHIJKLMNOPQRSTUVWXYZ\ddefloop
\def\ddef#1{\expandafter\def\csname tb#1\endcsname{\ensuremath{\tilde{\mathbf{#1}}}}}
\ddefloop ABCDEFGHIJKLMNOPQRSTUVWXYZ\ddefloop
\def\ddef#1{\expandafter\def\csname tb#1\endcsname{\ensuremath{\tilde{\boldsymbol{#1}}}}}
\ddefloop abcdefghijklmnopqrstuvwxyz\ddefloop %

\def\ddef#1{\expandafter\def\csname bar#1\endcsname{\ensuremath{\bar{#1}}}}
\ddefloop ABCDEFGHIJKLMNOPQRSTUVWXYZabcdefghijklmnopqrtsuvwxyz\ddefloop
\def\ddef#1{\expandafter\def\csname barc#1\endcsname{\ensuremath{\bar{\mathcal{#1}}}}}
\ddefloop ABCDEFGHIJKLMNOPQRSTUVWXYZ\ddefloop
\def\ddef#1{\expandafter\def\csname barb#1\endcsname{\ensuremath{\bar{\mathbf{#1}}}}}
\ddefloop ABCDEFGHIJKLMNOPQRSTUVWXYZ\ddefloop
\def\ddef#1{\expandafter\def\csname barb#1\endcsname{\ensuremath{\bar{\boldsymbol{#1}}}}}
\ddefloop abcdefghijklmnopqrstuvwxyz\ddefloop %

\def\ddef#1{\expandafter\def\csname war#1\endcsname{\ensuremath{\overline{#1}}}}
\ddefloop ABCDEFGHIJKLMNOPQRSTUVWXYZabcdefghijklmnopqrtsuvwxyz\ddefloop
\def\ddef#1{\expandafter\def\csname warc#1\endcsname{\ensuremath{\overline{\mathcal{#1}}}}}
\ddefloop ABCDEFGHIJKLMNOPQRSTUVWXYZ\ddefloop
\def\ddef#1{\expandafter\def\csname warb#1\endcsname{\ensuremath{\overline{\mathbf{#1}}}}}
\ddefloop ABCDEFGHIJKLMNOPQRSTUVWXYZ\ddefloop
\def\ddef#1{\expandafter\def\csname warb#1\endcsname{\ensuremath{\overline{\boldsymbol{#1}}}}}
\ddefloop abcdefghijklmnopqrstuvwxyz\ddefloop %

\def\sig{\sigma}

\def\dt{\delta}
\def\gam{\gamma}

\def\eps{\varepsilon}
\def\epsilon{\varepsilon}

\def\Dt{\Delta}

\def\Th{\Theta}

\usepackage{pgffor}
\def\greeksymbols{alpha,beta,gamma,gam,delta,dt,eps,epsilon,zeta,eta,theta,th,iota,kappa,kap,lambda,lam,mu,nu,xi,pi,rho,sigma,sig,tau,phi,chi,psi,omega,om,Gamma,Gam,Delta,Dt,Theta,Th,Lambda,Lam,Pi,Sigma,Sig,Phi,Psi,Omega,Om}
\def\greeksymbolsnoeta{alpha,beta,gamma,gam,delta,dt,eps,epsilon,zeta,theta,th,iota,kappa,kap,lambda,lam,mu,nu,xi,pi,rho,sigma,sig,tau,phi,chi,psi,omega,om,Gamma,Gam,Delta,Dt,Theta,Th,Lambda,Lam,Pi,Sigma,Sig,Phi,Psi,Omega,Om} 

\foreach \x in \greeksymbolsnoeta{\expandafter\xdef\csname b\x\endcsname{\noexpand\ensuremath{\noexpand\boldsymbol{\csname \x\endcsname}}}}

\foreach \x in \greeksymbols{\expandafter\xdef\csname h\x\endcsname{\noexpand\ensuremath{\noexpand\hat{\csname \x\endcsname}}}}
\foreach \x in \greeksymbolsnoeta{\expandafter\xdef\csname hb\x\endcsname{\noexpand\ensuremath{\noexpand\hat{\noexpand\boldsymbol{ \csname \x\endcsname}}}}}

\foreach \x in \greeksymbols{\expandafter\xdef\csname bar\x\endcsname{\noexpand\ensuremath{\noexpand\bar{\csname \x\endcsname}}}}
\foreach \x in \greeksymbolsnoeta{%
\expandafter\xdef\csname barb\x\endcsname{\noexpand\ensuremath{\noexpand\bar{\noexpand\boldsymbol{ \csname \x\endcsname}}}}
}

\foreach \x in \greeksymbols{\expandafter\xdef\csname t\x\endcsname{\noexpand\ensuremath{\noexpand\tilde{\csname \x\endcsname}}}}
\foreach \x in \greeksymbolsnoeta{\expandafter\xdef\csname tb\x\endcsname{\noexpand\ensuremath{\noexpand\tilde{\noexpand\boldsymbol{ \csname \x\endcsname}}}}}

\providecommand{\normz}[2][-1]{
\ensuremath{\mathinner{
\ifthenelse{\equal{#1}{-1}}{ 
\!\left\|#2\right\|}{}
\ifthenelse{\equal{#1}{0}}{ 
\|#2\|}{}
\ifthenelse{\equal{#1}{1}}{ 
\bigl\|#2\bigr\|}{}
\ifthenelse{\equal{#1}{2}}{ 
\Bigl\|#2\Bigr\|}{}
\ifthenelse{\equal{#1}{3}}{ 
\biggl\|#2\biggr\|}{}
\ifthenelse{\equal{#1}{4}}{ 
\Biggl\|#2\Biggr\|}{}
}} 
}  

\providecommand{\floor}[2][-1]{
\ensuremath{\mathinner{
\ifthenelse{\equal{#1}{-1}}{ 
\!\left\lfloor#2\right\rfloor}{}
\ifthenelse{\equal{#1}{0}}{ 
\lfloor#2\rfloor}{}
\ifthenelse{\equal{#1}{1}}{ 
\!\bigl\lfloor#2\bigr\rfloor}{}
\ifthenelse{\equal{#1}{2}}{ 
\!\Bigl\lfloor#2\Bigr\rfloor}{}
\ifthenelse{\equal{#1}{3}}{ 
\!\biggl\lfloor#2\biggr\rfloor}{}
\ifthenelse{\equal{#1}{4}}{ 
\!\Biggl\lfloor#2\Biggr\rfloor}{}
}} 
}

\providecommand{\ceil}[2][-1]{
\ensuremath{\mathinner{
\ifthenelse{\equal{#1}{-1}}{ 
\!\left\lceil#2\right\rceil}{}
\ifthenelse{\equal{#1}{0}}{ 
\lceil#2\rceil}{}
\ifthenelse{\equal{#1}{1}}{ 
\!\bigl\lceil#2\bigr\rceil}{}
\ifthenelse{\equal{#1}{2}}{ 
\!\Bigl\lceil#2\Bigr\rceil}{}
\ifthenelse{\equal{#1}{3}}{ 
\!\biggl\lceil#2\biggr\rceil}{}
\ifthenelse{\equal{#1}{4}}{ 
\!\Biggl\lceil#2\Biggr\rceil}{}
}} 
}

\newcommand{\fr}[2]{ { \frac{#1}{#2} }}

\def\wed{\wedge}

\def\cd{\cdot}

\def\la{\langle}
\def\ra{\rangle}

\def\lcl{\lceil}  
\def\rcl{\rceil}  
\def\larrow{\ensuremath{\leftarrow}} 
\def\rarrow{\ensuremath{\rightarrow}} 
\def\sm{{\ensuremath{\setminus}}}

\definecolor{mygrn}{rgb}{0,.8,0}
\definecolor{myred}{rgb}{.8,0,0}

\DeclareMathOperator{\EE}{\mathbb{E}} 

\DeclareMathOperator{\PP}{\mathbb{P}}

\DeclareMathOperator*{\argmax}{arg~max}

\DeclareMathOperator{\one}{\mathds{1}\hspace{-.1em}}
\providecommand{\onec}[2][-1]{
\ensuremath{\mathinner{
\one\cbr[#1]{#2}
} 
}  
}

\DeclarePairedDelimiterX{\inp}[2]{\langle}{\rangle}{#1, #2}

\newcommand\declareop[3]{%
  \newcommand#1{%
    \mskip\muexpr\medmuskip*#2\relax
    {#3}%
    \mskip\muexpr\medmuskip*#2\relax
}}
\declareop\capprox{1}{{\sr{\const}{\approx}}} 
\declareop\logapprox{1}{{\sr{\mathsf{log}}{\approx}}} 

\newcommand{\lsim}{\mathop{}\!\lesssim}

\def\const{\mathsf{const}}

\def\polylog{{\normalfont\text{polylog}}}

\usepackage{pifont}
\newcommand{\sr}{\stackrel}

\makeatletter
\newcommand{\vast}{\bBigg@{3}}
\newcommand{\Vast}{\bBigg@{4}}
\makeatother

\newenvironment{talign*}
 {\csname align*\endcsname}
 {\endalign}

\def\chrulefill{\leavevmode\leaders\hrule height 0.7ex depth \dimexpr0.4pt-0.7ex\hfill\kern0pt}

\newcommand{\kj}[1]{{\color{RedOrange}[KJ: #1]}}
\newcommand{\guide}[1]{{\color{Violet}[#1]~}}
\newenvironment{revised}
{\colorlet{kjsavedrevised}{.}\color{MidnightBlue}}%
{\color{kjsavedrevised}}%

\newcommand{\yinan}[1]{{\color{blue}[Yinan: #1]}}

\newcommand{\kapilan}[1]{{\color{magenta}[Kap: #1]}}

\newcommand{\houssam}[1]{{\color{red}[HN: #1]}}

\newcommand{\stkout}[1]{\ifmmode\text{\sout{\ensuremath{#1}}}\else\sout{#1}\fi}

\usepackage[toc,page,header]{appendix}
\usepackage{minitoc}
\usepackage{silence}
\newif\ifFINAL

\FINALtrue 

\ifFINAL
   
  \def\guide#1{}
  \def\kj#1{} 
  \def\kjnew#1{}

  \renewcommand{\kapilan}[1]{}
  \renewcommand{\yinan}[1]{}
  \renewcommand{\houssam}[1]{}

\else
  \usepackage{transparent}
  \usepackage[inline]{showlabels}
  \usepackage{rotating}
  \renewcommand{\showlabelfont}%
  {\transparent{0.8}\scriptsize\bf\slshape\color{Lavender}}
\fi

\begin{document}
	
\twocolumn[
\icmltitle{Fixed Budget is No Harder Than Fixed Confidence in Best-Arm Identification up to Logarithmic Factors}





\icmlsetsymbol{equal}{*}

\begin{icmlauthorlist}
	\icmlauthor{Kapilan Balagopalan}{equal,yyy}
	\icmlauthor{Yinan Li}{equal,yyy}
	\icmlauthor{Yao Zhao}{yyy}
	\icmlauthor{Tuan Nguyen}{yyy}
	\icmlauthor{Anton Daitche}{comp}
	\icmlauthor{Houssam Nassif}{comp}
	\icmlauthor{Kwang-Sung Jun}{xxx}
\end{icmlauthorlist}

\icmlaffiliation{xxx}{POSTECH CSE/GSAI}
\icmlaffiliation{yyy}{University of Arizona}
\icmlaffiliation{comp}{Meta Inc}

\icmlcorrespondingauthor{Kapilan Balagopalan}{kapilanbgp@arizona.edu}
\icmlcorrespondingauthor{Yinan Li}{yinanli@arizona.edu}

\icmlkeywords{Machine Learning, ICML}

\vskip 0.3in
]



\printAffiliationsAndNotice{}  

\doparttoc 
\faketableofcontents 

%

%

\setlength{\abovedisplayskip}{3pt}%
\setlength{\belowdisplayskip}{4pt}%
\setlength{\abovedisplayshortskip}{3pt}%
\setlength{\belowdisplayshortskip}{4pt}
\textfloatsep=.5em

\begin{abstract}
\vspace{-.3em}
  The best-arm identification (BAI) problem is one of the most fundamental problems in interactive machine learning, which has two flavors: the fixed-budget setting (FB) and the fixed-confidence setting (FC).
  For $K$-armed bandits with a unique best arm, the optimal sample complexities for both settings have been settled down and they match up to logarithmic factors.
  This prompts an interesting research question about the generic, potentially structured BAI problems: is FB harder than FC or the other way around?
  In this paper, we show that FB is no harder than FC up to logarithmic factors. 
  We do this constructively: we propose a novel algorithm called FC2FB (fixed confidence to fixed budget), which is a meta algorithm that takes in an FC algorithm $\mathcal{A}$ and turn it into an FB algorithm.
  We prove that FC2FB enjoys a sample complexity that matches, up to logarithmic factors, that of the sample complexity of $\mathcal{A}$. 
  This means that the optimal FC sample complexity is an upper bound of the optimal FB sample complexity up to logarithmic factors.
  Our result not only reveals a fundamental relationship between FB and FC, but also has a significant implication: FC2FB combined with existing state-of-the-art FC algorithms, leads to improved sample complexity for a number of FB problems.
  \looseness=-1
  \vspace{-1em}
\end{abstract}

\section{Introduction}

The best-arm identification (BAI) problem, also known as pure exploration in multi-armed bandits, is one of the most basic forms of interactive machine learning.
BAI has been successful in many applications including A/B testing~\cite{Fiez2024Anduril}, multivariate testing~\cite{katzsamuels20anempirical}, heteroskedastic variance designs~\cite{Weltz2023heteroskedastic}, and hyperparameter optimization~\cite{li18hyperband}.
In BAI, the learner is given $K$ arms.
Each arm has an unknown reward distribution -- in particular, the mean reward is unknown and can be different across the arms.
The learner sequentially interacts with the arms.
At each time step $t$, the learner chooses an arm and receives a reward drawn from the chosen arm's reward distribution. 
The goal is to identify the arm with the highest mean reward as accurately as possible.
Note that BAI can also be equipped with structural assumptions~\cite{huang17structured}.
For example, linear bandits assume that the mean reward of each arm is a linear function of its observed feature vector.

BAI comes with two flavors: the fixed-confidence setting (FC)~\cite{even-dar06action}, and the fixed-budget setting (FB)~\cite{audibert10best}.
In FC, the learner is given the target failure rate $\dt\in(0,1)$ and pulls arms as many times as she wants until she can output an arm $\hat J \in [K]$ with a correctness guarantee: $\PP(\hJ\neq 1) \le \dt$ (where $1$ is the index of the unique best arm).
The performance of an FC algorithm is typically analyzed by how fast it stops.
In FB, the learner is given a sampling budget $B$ instead of the target failure rate and pulls arms at most $B$ times in total.
Then, she outputs an estimated best arm $\hJ$.
Unlike FC, the FB setting does not require the learner to provide a certification.
While FC and FB have different protocols and performance criteria, the sample efficiency of the algorithm can both be summarized as so-called sample complexity.
Informally, the sample complexity is the amount of samples (i.e., budget) required until the estimated best arm $\hJ$ is correct with probability at least $1-\delta$ where $\delta$ is input to the algorithm for FC but can be set arbitrarily for FB. 
In this way, one can compare the sample efficiency guarantees from FC and FB algorithms.

This raises an interesting research question: 
\begin{center}
\mbox{\parbox{0.95\linewidth}{\centering \textit{Does there exist a relationship between FC and FB? That is, is one problem fundamentally easier/harder than the other?}}}
\end{center}
There are mixed observations in the literature.
In the standard $K$-armed bandit setting (unstructured), the optimal sample complexity of FC~\cite{garivier16optimal} and FB~\cite{katzsamuels20true,zhao23revisiting} do not exactly match, and there is a logarithmic-factor gap.
Turns out, the gap is not just an artifact of analysis.
\citet{carpentier16tight} report that there is a set of problem instances where FB algorithms must suffer an extra logarithmic factor in the sample complexity compared to FC algorithms.
For generic structured problems, it is unclear if the same pattern will persist or if such a logarithmic factor difference can be exacerbated to a polynomial factor.
Furthermore, there are many BAI problems where the best-known FB sample complexity is orderwise smaller than the FC counterpart; e.g., $K$-armed bandits with heterogeneous noise, linear bandits, etc (see Section~\ref{sec_main_app} for details).
On the other hand, in principle, compared to FB, FC has an extra verification/certification requirement (i.e., having to provide a correctness guarantee) that is typically known to be a barrier to having a further improved sample complexity in various situations~\cite{katzsamuels20anempirical}

\looseness=-1

In this paper, we provide an answer to the research question raised above: the sample complexity of FB is no larger than that of FC, up to logarithmic factors.
We achieve this by proposing a novel reduction algorithm called FC2FB (Fixed Confidence to Fixed Budget) that takes in an FC algorithm with a sample complexity guarantee of $T^*_\dt = A\ln(1/\dt)$ for some $A$ and turns it into an FB algorithm with a sample complexity of 
\begin{align*}
  A\ln(A/\dt) \cd \ln\ln(1/\dt)~.
\end{align*}
FC2FB does so without requiring knowledge of $A$.
This has a significant implication: an FC sample complexity can be transferred to an FB sample complexity up to $\polylog(A, \ln(1/\delta))$, or simply, up to $\polylog(T^*_\dt)$.
Thus, \textit{the optimal FC sample complexity is an upper bound of the optimal FB sample complexity up to logarithmic factors, leading to the conclusion that FB is no harder than FC up to logarithmic factors.}
We describe FC2FB and its guarantee in Section~\ref{section_main_fc2fbmeta}.
We also show that we can even weaken the requirement on the FC algorithm to having a guarantee for a particular constant $\dt$ rather than any $\dt$ in Section~\ref{section_main_weak2strong}.
\looseness=-1

FC2FB not only reveals the fundamental relationship between FB and FC, but also provides a useful algorithmic construction in FB. 
We showcase that FC2FB can be applied to numerous FB BAI problems to improve existing sample complexity guarantees, including heterogeneous noise bandits, linear bandits, unimodal bandits and cascading bandits (Section~\ref{sec_main_app}).
Finally, we also show our empirical study where our procedure can indeed outperform existing FB algorithms in Section~\ref{section_main_experiments}. 
Important related works are cited and discussed throughout the paper, and other related works are discussed in Appendix~\ref{section_appendix_related}.
We conclude with exciting future research directions in Section~\ref{section_main_conclusion}.

Our result should be interpreted as a non-asymptotic reduction between sample-complexity guarantees. It does not assert the existence of a universally exponent-optimal fixed-budget algorithm, a question known to be subtle in light of recent minimax~\citep{komiyama2022minimax} and no-complexity results~\citep{degenne2023existence} for fixed-budget BAI.

\section{Preliminaries}
\label{sec:prelim}


\textbf{Notations.~}
We denote $a_1,\ldots,a_n$ by $a_{1:n}$.
We define $a \vee b := \max\{a,b\}$ and $a \wed b := \min\{a,b\}$.
We add a tilde to the standard big-O notations to denote that we are ignoring logarithmic factors; e.g., $\tO(\cdot)$, $\tTh(\cdot)$, $\tOm(\cdot)$, etc.

\textbf{The structured best-arm identification problem.~}
%
%
%
%
We consider the best-arm identification (BAI) problem in structured bandits.
We are given $K$ arms.
Each arm $i \in [K]$ is associated with an unknown reward distribution $\nu_i$ over $\RR$.
Let $X_{i,t} \sim \nu_i$ denote the reward obtained from the $t$-th pull of arm $i$; rewards are assumed i.i.d.\ across pulls of the same arm.
For each arm $i$, we write $\mu_i := \EE_{X\sim \nu_i}[X]$ for its (unknown) mean.
Without loss of generality, we assume that $\mu_1 > \mu_2 \geq \cdots \geq \mu_K$, so arm $1$ is the unique optimal arm.
We define the suboptimality gap $\Dt_i := \mu_1 - \mu_i$.

In structured bandits, we assume that the bandit instance
$
\nu := (\nu_1,\ldots,\nu_K)
$
belongs to a known model class $\cM_K$ of admissible instances.
This model class can encode arbitrary distributional properties, including constraints on means, variances, tails, supports, or structural relationships across arms, and is known to the learner.
For example, the unstructured case corresponds to $\cM_K$ being the set of all $K$-tuples of distributions in some base family (e.g., all distributions supported on $[0,1]$ or being 1-sub-Gaussian).
A linear bandit model can be expressed by taking $\cM_K$ to be the set of instances for which there exists $\theta^* \in \RR^d$ such that
$
\mu_i = \langle x_i, \theta^*\rangle \quad \text{for all } i\in[K],
$
where the feature vectors $x_i\in\RR^d$ are known to the learner. 
More generally, $\cM_K$ may capture other structured families beyond linearity.




\textbf{The fixed-budget setting (FB).} 
In FB (Algorithm~\ref{alg:fixed-budget}), the learner is given a total arm pull budget of $B$.
For each time step $t$, the learner chooses an arm, pulls it, and then receives a reward sampled from its reward distribution.
At the end of $B$-th time step, the learner must output its estimated best arm $\hJ$.
We summarize the protocol in Algorithm~\ref{alg:fixed-budget}.
\begin{algorithm}[t]
	\caption{The fixed budget setting (FB)}
	\label{alg:fixed-budget} 
	\begin{algorithmic}[1]
		\STATE {\textbf{Input:} Sampling budget $B$}
		\FOR{$t = 1,2,\ldots, B$}
    		\STATE The learner chooses an arm $I_t \in [K]$.
            \STATE The environment generates a reward $R_t \sim \nu_{I_t}$
            \STATE The learner observes $R_t$.
            
		\ENDFOR
        \STATE The learner chooses an estimated best arm $\hJ \in [K]$ \!\!\!\!\!\!\!\!
		\STATE \textbf{Output}: $\hJ$
	\end{algorithmic}
\end{algorithm}

We consider the standard error probability proposed in~\citet{audibert10best}:
\begin{align*}
    \PP(\hJ \neq 1)~,
\end{align*}
i.e., the probability of best arm mis-identification, which is the smaller the better.
For example, Successive Rejects algorithm~\cite{audibert10best} results in a bound of $K^2\exp(- \frac{B}{H})$ for some instance-dependent complexity measure $H$.
For the sake of comparison, we found it easier to convert the error probability guarantee into the sample complexity.
\begin{proposition}
\label{prop:sc-of-fb}
    Suppose that an FB algorithm satisfies $B \ge B_0 \implies \PP(\hJ \neq 1) \le F\exp(- \frac{B}{H})$.
    Then, there exists $B' = \Th(H\ln(F/\dt) + B_0)$ such that if $B \ge B'$ then $\PP(\hJ \neq 1) \le \dt$.
\end{proposition}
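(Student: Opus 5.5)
The plan is to choose $B'$ explicitly and verify both conditions: it must exceed the threshold $B_0$ so the hypothesis applies, and it must make the exponential bound at most $\dt$. I would set
\begin{align*}
  B' := H\ln(F/\dt) \vee B_0~,
\end{align*}
and note that $\max\{a,b\} \le a+b \le 2\max\{a,b\}$ for nonnegative $a,b$. Hence $B' = \Th(H\ln(F/\dt) + B_0)$. If one prefers the sum itself, taking $B' := H\ln(F/\dt) + B_0$ works just as well; the argument below is unchanged.

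Now fix any $B \ge B'$. First, $B \ge B' \ge B_0$, so the hypothesis gives $\PP(\hJ \neq 1) \le F\exp(-B/H)$. Second, $B \ge H\ln(F/\dt)$ together with $H>0$ gives $B/H \ge \ln(F/\dt)$. Since $x\mapsto F e^{-x}$ is decreasing (with $F>0$), this yields
\begin{align*}
  F\exp(-B/H) \le F\exp(-\ln(F/\dt)) = \dt~.
\end{align*}
Chaining the two inequalities gives $\PP(\hJ\neq 1)\le\dt$.

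The only point needing care is degenerate parameters, which is where I expect the main (minor) obstacle. If $F \le \dt$, then $\ln(F/\dt)\le 0$. In that case I would replace $\ln(F/\dt)$ by $\ln(F/\dt)\vee 0$; the bound $F e^{-B/H}\le F\le\dt$ then holds trivially for $B\ge B_0$. Typically $F\ge1>\dt$ (e.g., $F=K^2$), so the stated order is unaffected. I would also state the implicit assumptions $H>0$ and $F>0$, under which the argument is complete.
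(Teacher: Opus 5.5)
Your proof is correct and follows the same route as the paper, which solves $F\exp(-B/H)=\delta$ for $B$ and takes the maximum with $B_0$ so that the hypothesis applies. Your treatment of the degenerate case $F\le\delta$ and of the implicit assumptions $H,F>0$ is a small refinement that the paper leaves implicit.
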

\begin{proof}
    Set $F\exp(- \frac{B}{H}) = \dt$ and solve it for $B$, along with taking a maximum with $B_0$ since otherwise the guarantee is not true.
    \looseness=-1
\end{proof}
The value of $B'$ above is what we call the sample complexity in FB.
Note that $B_0$ takes the role of ``warm up'' in that it does not scale with $\ln(1/\dt)$.


\textbf{The fixed-confidence setting (FC).~} 
In FC (Algorithm~\ref{alg:fixed-confidence}), the learner is given a target failure rate $\dt\in(0,1)$.
We describe the protocol in Algorithm~\ref{alg:fixed-confidence}.
\begin{algorithm}[h]
	\caption{The fixed-confidence setting (FC)}
	\label{alg:fixed-confidence} 
	\begin{algorithmic}[1]
		\STATE {\textbf{Input:} Failure rate $\dt$}
		\FOR{$t = 1,2,\ldots$}
    		\STATE The learner chooses an arm $I_t \in [K]$.
            \STATE The environment generates a reward $R_t \sim \nu_{I_t}$
            \STATE The learner observes $R_t$.
            \IF{the stopping condition of the learner is true}
                \STATE $\tau \larrow t$ ~~~ \tblue{// stopping time}
                \STATE The learner chooses an estimated best arm $\hJ \in[K]$. 
                \STATE \textbf{break}
            \ENDIF
            
		\ENDFOR
		\STATE \textbf{Output}: $\hJ$
	\end{algorithmic}
\end{algorithm}
The sampling steps are equivalent to FB, except that there is no hard limit on the number of samples the learner can take.
Instead, the learner decides to terminate itself if she can verify which arm is the best with probability at least $1-\delta$.
Unlike FB, FC has a strict requirement on the algorithm: the correctness.
We say that an FC algorithm is $\dt$-correct if it satisfies 
\begin{align}\label{eq:correctness}
  \PP(\hJ \neq 1, \tau<\infty) \le \dt ~. 
\end{align}

Since samples are costly, the main interest in FC is the stopping time $\tau$ defined in Algorithm~\ref{alg:fixed-confidence}, which is a random variable.
We choose the high probability sample complexity as the performance metric of FC.
In particular, we say that an FC algorithm has a high probability sample complexity of $T^*_\dt$ if:
\looseness=-1
\begin{align}\label{eq:hp-sample-complexity}
    \PP(\tau > T^*_\dt) \le \dt. 
\end{align}
In this paper, $\dt$ is the same value that is given as input to the learner, though it does not have to in general.

Establishing high-probability sample complexity bounds is a standard and widely adopted practice in the fixed-confidence literature. Just to name a few, \citet{jamieson14lil, karnin13almost, even-dar06action, kaufmann13information, katzsamuels20anempirical, zhong2020best}. 



\section{From Fixed Confidence to Fixed Budget} \label{section_main_fc2fbmeta}

We define two variants of fixed-confidence algorithms: strong and weak (A detailed discussion of these variants and their relation to existing methods with different types of guarantees is provided in Appendix \ref{section_app_weakstrongdis}. A curious reader may consult it, but it can be skipped without loss of continuity for the following material). We focus exclusively on strong fixed-confidence algorithms in this section and defer the definition of weak fixed-confidence algorithms to Section \ref{section_main_weak2strong}. Basically, the strong fixed-confidence best arm identification algorithm is capable of satisfying the high probability sample complexity of order $\ln(1/\delta)$ for any $\delta$.
In this section, we describe our proposed algorithm called FC2FB (Fixed Confidence to Fixed Budget) and provide theoretical guarantees.

FC2FB is a meta algorithm that takes in an FC algorithm and turn it into an FB algorithm, allowing us to transfer the sample complexity guarantee from FC to FB.
Due to being a meta algorithm, we need to define what guarantee we expect from the given FC algorithm.
For simplicity, we will first focus on the following definition of strong FC algorithms.
\begin{definition}[Strong FC algorithm] \label{def_main_strongFC}
    An FC algorithm is referred to as a strong FC algorithm if, for any $\dt\in(0,1)$, it satisfies $\dt$-correctness (see \eqref{eq:correctness}) and has a high probability sample complexity of $T^*_\dt = A \ln(1/\dt) + C$ (see \eqref{eq:hp-sample-complexity}) for some $A$ and $C$ independent of $\dt$.
\end{definition}
Essentially, strong FC algorithms are those that have a logarithmic dependency on $1/\dt$ rather than, say, a polynomial dependency.
We will later show that it is possible to relax this condition so we can allow a polynomial dependence on $1/\dt$, or even to having the guarantees for a numerical constant $\dt$ only.
\looseness=-1

\paragraph{Warm up: A naive framework for known $A$ and $C$: }
As a warmup, we present a simple framework that takes a Strong FC algorithm with known $A$ and $C$ as its input and delivers an FB algorithm with a sample complexity no larger than FC algorithm \textit{including logarithmic factors}.

To see this, first notice that for any budget $B$, we can find the $\delta_B$, such that $B = T^*_{\delta_B} =  A\log \frac{1}{\delta_B} + C $.

Now we design an FB algorithm such that, for a given budget $B$, we run $\text{FC}(\delta_B)$.
When the budget runs out we test if the FC algorithm has been self-terminated.
If yes, we return the arm recommended by FC denoted by $J_{FC}$.
Otherwise, we return an arbitrary arm.

Then,
\begin{align*}
    \PP(J_{FB} \neq 1) &= \PP (\text{FC stops and } J_{FC} \neq 1) \\
    &\;\;\;\; + \PP (\text{FC does not stop and } J_{FC} \neq 1) \\
    &\leq \delta_B + \delta_B 	~=  2\delta_B.
\end{align*}
To obtain the sample complexity for this FB algorithm, we need to compute how large the budget $B$ has to be to control the bound above to be at most $\delta$ (i.e., $\PP(J_{FB}\neq 1) \le 2\delta_B \le \delta$).
The computed sample complexity is:
\begin{align*}
    A\log \frac{2}{\delta} + C = A\log \frac{1}{\delta} + C + A\ln 2.
\end{align*}
Hence, it is clear that for known $A, C$, there exists an FB algorithm with the same sample complexity up to constant factors, and the term with $\ln(1/\delta)$ has the matching constant. 
However, our procedure is impractical because $A$ and $C$ are problem dependent constants unknown to the learner. 
\looseness=-1
\paragraph{The FC2FB algorithm.}
We now construct a meta algorithm that has the same effect as the naive framework above except that we now do not require the knowledge of the problem-dependent constants.
We present FC2FB in Algorithm~\ref{alg:fc2fb}.
\looseness=-1
%
FC2FB takes in a strong FC algorithm $\cA$ as input and executes it repeatedly over $R$ stages, each with roughly $B/R$ allocated samples.
The main idea is that we try a wide range of $\delta$'s while force-terminating runs that do not self-terminate before using up the allocated samples. 

Specifically, we increase failure rates $\dt$ at a doubly exponential rate over the stages.
Since the stopping time of $\cA$ scales with the input $\dt$, initial runs are likely to not finish within the desired budget.
In this case, we force-terminate $\cA$ and, clearly, we do not obtain any recommended arm.
When we encounter the first stage in which $\cA$ self-terminates and outputs a recommended arm $\hJ$, we stop the loop and finally output the recommended arm.

The parameters $R$ and $B'$ are chosen to achieve an exponentially decaying error rate w.r.t. the budget $B$ without requiring knowledge of $A$ and $C$ defined in Definition~\ref{def_main_strongFC}. 
Note that the hyperparameters $\dt_0$ can be set to be a constant like $1/2$ or $1/e$, and $Q$ is recommended to be set to $2K \ln K$.
This is because if $Q$ is too small, then per-stage budget $B'$ would not be large enough to provide meaningful guarantees, though the guarantees  will be true as long as $Q \le B/2$. 
\looseness=-1




\begin{algorithm}[t]
	\caption{FC2FB }
    \label{alg:fc2fb} 
	\begin{algorithmic}[1]
		\STATE \textbf{Input:} Total budget $B$, algorithm $\mathcal{A}$, base failure rate $\delta_0$, $Q \leq \frac{B}{2}$ 
		\STATE Set $R := \lfloor \log_2(\frac{B}{Q})\rfloor$, $B^{'} := \lfloor \frac{B}{R} \rfloor$
		\STATE $\hat{J} \gets $ any arbitrary arm
		\FOR{ each stage $r =1,2,\dots, R$} 
    		\STATE $L_r = 2^{R-r}$ 
    		\STATE Run the algorithm $\mathcal{A}(\dt_0^{L_r})$ with the budget limit of $B'$  
    		\STATE If the algorithm did not self-terminate, set $\hat{J}_r\larrow 0$; otherwise, let $\hat{J}_r$ be the output arm from the algorithm. 
    		\IF{$\hat{J}_r \neq 0$ }
        		\STATE $\hat{J} \gets \hat{J}_r$
        		\STATE \textbf{break} 
    		\ENDIF
		\ENDFOR
		\STATE{\textbf{Output: $\hat{J}$ } }
	\end{algorithmic}
\end{algorithm}

This simple yet elegant algorithm achieves the following guarantee:

\begin{theorem} [Correctness of FC2FB] \label{thoerem_main_FC2FB} 
	For a strong fixed-confidence algorithm $\mathcal{A}$, given a total budget $B \geq 2\left( A\ln \left(\frac{1}{\delta_0}\right) + \left(C+ 1\right) \right)\ln \left( 2\frac{A }{Q} \ln \left(\frac{1}{\delta_0}\right) + \frac{2(C+1)}{Q} \right)$, $\delta_0 \leq 0.5$ and $Q \leq \frac{B}{2}$, algorithm FC2FB satisfies,
		\begin{align*}
		\PP \left( \hat{J} \neq 1 \right) &\leq 3 \exp\left( -\frac{B }{\frac{4 Q}{ \ln(\frac{1}{\delta_0})}  + 4 A \log_2\left( \frac{B}{Q} \right)  }  \right).
	\end{align*}
\end{theorem}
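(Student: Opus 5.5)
The plan is to decompose the error event by the stage at which FC2FB stops. Let $\delta_r := \delta_0^{L_r}$ with $L_r = 2^{R-r}$, so $\delta_R = \delta_0$ and the failure rates shrink doubly exponentially as we go back toward $r=1$. Let $r^\star$ be the first stage whose nominal budget suffices, i.e., the smallest $r$ with $T^*_{\delta_r} = A L_r \ln(1/\delta_0) + C \le B'$. The error event $\{\hat J \neq 1\}$ is contained in the union of three events: (i) some stage $r \le r^\star$ self-terminates with a wrong arm; (ii) no stage up to $r^\star$ self-terminates, in particular stage $r^\star$ is force-terminated; (iii) the loop passes $r^\star$ and a later stage outputs a wrong arm. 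Event (iii) is contained in (ii), since passing $r^\star$ requires that $\mathcal{A}$ fail to stop at $r^\star$. Thus
\begin{align*}
\PP(\hat J\neq 1) \le \sum_{r\le r^\star}\delta_r + \PP(\tau_{r^\star} > B') \le \sum_{r\le r^\star}\delta_r + \delta_{r^\star},
\end{align*}
using $\delta$-correctness for each run and the high-probability sample complexity at $r^\star$, since $B' \ge T^*_{\delta_{r^\star}}$. Because $\delta_0\le 1/2$ and the $L_r$ double as $r$ decreases, the sum is dominated by its last term: $\sum_{r\le r^\star}\delta_r \le \delta_{r^\star}(1+\delta_{r^\star}+\delta_{r^\star}^3+\cdots)\le 2\delta_{r^\star}$. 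Hence $\PP(\hat J\ne1)\le 3\delta_{r^\star}$, which gives the constant $3$.

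Next I bound $\delta_{r^\star} = \exp(-L_{r^\star}\ln(1/\delta_0))$ from above. This requires a lower bound on $L_{r^\star}$.
\begin{itemize}
\item \textbf{Case $r^\star = 1$:} here $L_1 = 2^{R-1} \ge B/(4Q)$, since $R = \lfloor\log_2(B/Q)\rfloor$. So $\delta_{r^\star} \le \exp(-B\ln(1/\delta_0)/(4Q))$. This matches the exponent $B/(4Q/\ln(1/\delta_0))$ and is at most the claimed bound.
\item \textbf{Case $r^\star > 1$:} stage $r^\star-1$ failed the budget test, so $A L_{r^\star-1}\ln(1/\delta_0) + C > B'$. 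With $L_{r^\star-1} = 2L_{r^\star}$ this gives
\begin{align*}
L_{r^\star}\ln(1/\delta_0) > \frac{B'-C}{2A}.
\end{align*}
Combining $B' \ge B/R - 1$ with $R\le\log_2(B/Q)$, the exponent is roughly $B/(2A\log_2(B/Q)) - (C+1)/(2A)$. The condition on $B$ then has to absorb the additive $(C+1)$ and the factor loss into the constant $4$ in the denominator.
\end{itemize}
In both cases $\delta_{r^\star} \le \exp(-B/(4Q/\ln(1/\delta_0)+4A\log_2(B/Q)))$, because a denominator that is a sum is at least each summand.

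The main obstacle is the existence of $r^\star$, together with that last absorption step. We need $T^*_{\delta_R} = A\ln(1/\delta_0)+C \le B'$, i.e., $B/\lfloor\log_2(B/Q)\rfloor \gtrsim A\ln(1/\delta_0)+C+1$. This is exactly where the hypothesis $B \ge 2(A\ln(1/\delta_0)+C+1)\ln(2(A\ln(1/\delta_0)+C+1)/Q)$ is used, via the standard fact that $B \ge 2M\ln(2M/Q)$ implies $B/\log_2(B/Q)\ge M$ up to constants. I would verify this inequality carefully, since $x/\ln x$ is monotone for $x \ge e$. The same slack $B' \ge 2(C+1)$, roughly, also lets me replace $B'-C$ by $B'/2$ in the second case, which produces the factor $4A\log_2(B/Q)$.
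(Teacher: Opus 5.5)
Your plan follows the paper's proof almost step for step. It uses the same threshold stage $r^\star$, the same split into a wrong self-termination at some $r\le r^\star$ versus forced termination at $r^\star$, the same bound $3\delta_0^{L_{r^\star}}$, and the same two cases. The gap is the step you mark ``roughly''. Case~2 needs $B/R\ge 2(C+1)$, so that $B'-C\ge B/R-(C+1)\ge B/(2R)$, and the budget hypothesis does not provide it. What the hypothesis does give is $B/R\ge M:=A\ln(1/\delta_0)+C+1$. (This existence step holds with no constant loss, which you need because the test $B'\ge T^*_{\delta_0}$ is exact. Set $X=B/Q$ and $Z=M/Q$; then $X\ge 2Z\ln(2Z)$ implies $X\ge Z\log_2 X\ge ZR$. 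For $Z\le e\ln 2$ this holds for every $X$. Otherwise $x\mapsto x-Z\log_2 x$ is increasing beyond $x=2Z\ln(2Z)$. At that point the inequality reduces to $u/\ln u\ge (1/\ln 2)/(2-1/\ln 2)\approx 2.59$ with $u=\ln(2Z)$, which is true since $u/\ln u\ge e$. The paper's lemma instead replaces $R$ by $\ln(B/Q)$, which goes the wrong direction.) But $M\ge 2(C+1)$ only when $A\ln(1/\delta_0)\ge C+1$. The paper makes the same leap: it asserts $B\ge 2R(C+1)$ on the grounds that $A\ln(1/\delta_0)$ is ``generally orderwise larger'' than $C+1$.

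This cannot be patched under the stated hypothesis, because the statement is false when $C\gg A\ln(1/\delta_0)$. Take $K=2$, $\delta_0=1/e$, $Q=1$, and an instance on which some strong FC algorithm $\mathcal A_0$ has $A_0\le 1$ and $C_0\le 10^6-1$. Define $\mathcal A(\delta)$ as follows. With probability $\delta/2$ it stops after one pull and outputs a uniformly random arm. Otherwise it runs $\mathcal A_0(\delta/2)$ and stops at $\max\{\tau_0,\lfloor\ln(1/\delta)\rfloor+10^6\}$. This $\mathcal A$ is a strong FC algorithm with $(A,C)=(1,10^6)$. For $B=3\cdot 10^7$ the hypothesis holds (its right-hand side is about $2.90\cdot 10^7$), with $R=24$ and $B'=1.25\cdot 10^6$. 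Stages $r\le 6$ can self-terminate only through the random branch, so $r^\star=7$ and $\PP(\hat J\neq 1)\ge e^{-2^{17}}/5$. The claimed bound, by contrast, is $3\exp(-B/(4+4\log_2 B))<3e^{-2.9\cdot 10^5}$. Your estimate $3\delta_0^{L_{r^\star}}=3e^{-2^{17}}$ is correct here; only the conversion to the stated exponent fails. To repair the result, either assume $C+1\le A\ln(1/\delta_0)$, or replace $A\ln(1/\delta_0)$ by $\max\{A\ln(1/\delta_0),C+1\}$ inside the budget condition. The exact inequality above then gives $B/R\ge\max\{A\ln(1/\delta_0),C+1\}+C+1$. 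This yields both $r^\star\le R$ and $B/R\ge 2(C+1)$, and your Case~2 computation then delivers the stated exponent with the original $A$.
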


Using an argument similar to Proposition~\ref{prop:sc-of-fb}, one can show that the theorem above with the choice of $\dt_0 = 1/e$ implies the sample complexity of 
\begin{align*}
    O\del[2]{Q \ln(1/\dt) + A\ln(1/\dt) \cd \ln( \fr{A\ln(1/\dt)}{Q}) + C}    ~.
\end{align*}
While setting $Q$ to be $A$ would cancel out the extra factor of $\ln(A)$, one rarely has knowledge of $A$. 
Thus, our generic recommendation is to simply set $Q = 1$. 
Then, we obtain a sample complexity bound of order $A \ln(1/\dt) + C$ up to $\polylog(A, \ln(1/\dt))$ factors or simply, up to $\polylog(T^*_\dt)$.

In practice, we recommend that $Q$ be set to the minimal number of samples for the algorithm $\cA$ to enjoy any meaningful guarantee.
For example, for the unstructured $K$-armed bandit problem, set $Q = K$, and for linear bandits, set $Q = d$.


It is noteworthy that in the above guarantee, the factor $A$ from the high-probability sample complexity bound of the strong fixed-confidence algorithm is preserved in the correctness guarantee of the fixed-budget setting, up to logarithmic factors. 
Hence, this opens the door to numerous applications where there exists a gap between the sample complexity bounds of fixed-confidence and fixed-budget settings. 
We can use our algorithm to bridge this gap up to logarithmic factors.
We discuss these applications in Section \ref{sec_main_app}. The proof of Theorem~\ref{thoerem_main_FC2FB} is deferred to Appendix \ref{section_app_FC2FB}.

\textbf{Fixed-confidence to anytime fixed-budget: } Furthermore, we present another algorithm, FC2AT, that transforms a strong fixed-confidence algorithm into an anytime algorithm. In the anytime setting, for any fixed deterministic time $t$ that is unknown for the learner, the agent aims at achieving a low probability of error at time $t$ (\citet{jun16anytime}, \citet{zhao23revisiting}). While $B$ is fixed and known in the fixed-budget setting, $t$ is fixed and unknown in the anytime setting. The analysis and guarantees of FC2AT are deferred to Appendix \ref{section_app_fc2abytimemeta}.

\section{From Weak to Strong Fixed-Confidence Algorithms}\label{section_main_weak2strong}

In the previous section, we conditioned that the FC algorithm required by FC2FB has to be a strong FC. In this section, we establish that this condition can be relaxed. In fact, we can transform a weak fixed-confidence algorithm into a strong fixed-confidence algorithm using FCW2S, a meta framework inspired by the Brakebooster algorithm of \citet{balagopalan25brakebooster}. Before proceeding, we define the weak fixed-confidence algorithm. Essentially, a weak fixed-confidence algorithm either satisfies correctness and high-probability sample complexity guarantees only for \textbf{some} error rate $\delta$, or the high probability sample complexity lacks dependence on $\ln (1/\delta)$.
\looseness=-1
\begin{definition}[Weak Fixed-Confidence Algorithm] \label{def_main_weakFC}
	An FC algorithm $\mathcal{A}^w$ is referred to be a weak FC algorithm if, for \textbf{some} $\dt_0\in(0,1)$, it satisfies $\dt_0$-correctness (see \eqref{eq:correctness}) and has a high probability sample complexity of $T^*_{\dt_0} = f(\delta_0)$ (see \eqref{eq:hp-sample-complexity}), where $f(\delta_0)$ denotes some function of $\delta_0$ that does not necessarily have logarithmic dependence on $1/\delta_0$.
\end{definition}
Now, we present a framework to transform any weak fixed-confidence algorithm into a strong fixed-confidence algorithm. The framework, \textit{FCW2S} (Fixed-Confidence Weak to Strong), is presented in Algorithm \ref{alg_main_FCW2S}. FCW2S runs $L$ independent instances of a weak FC algorithm ($\mathcal{A}^w$) with $\delta_0$ failure rate in parallel, i.e, each instance gets to pull the arm in a round-robin style allocation. If any of these instances self-terminates (stopping condition of the $\mathcal{A}^w$ met), we record the output arm $\hat{J}_\ell$ (this is one vote for that arm) and eliminate the instance from the round-robin. If $\lfloor L/2\rfloor$ instances self-terminate, then we stop the round-robin allocation and start to count votes. Here, the number of votes $v_i$ means the number of instances that output arm $i$. Whoever wins this voting becomes the final output $\hat{J}$.

One can relate this algorithm to the so-called \textit{boosting} or \textit{amplification}, where weak learners are used to create a strong guarantee. Here, we also use weak FC to create a strong FC. When we use multiple instances, the probability of getting wrong output from more than half of the instances decays exponentially.

FCW2S achieves the following correctness and sample complexity guarantees.

\begin{algorithm}[t]
	\caption{FCW2S}
	\label{alg_main_FCW2S}
	\begin{algorithmic}
		\STATE{\textbf{Input:} algorithm $\mathcal{A}^w$, the number of trials $L$, base failure rate $\delta_0$}
		\STATE \textbf{Initialize}: $L$ instances of $\cA^w$: $\mathcal{A}^w_1, \ldots, \mathcal{A}^w_L$
        \STATE Set the surviving set: $\cS \larrow \{\mathcal{A}^w_1, \ldots, \mathcal{A}^w_L\}$
		\WHILE {$|\cS| \ge \lfloor L/2\rfloor$}
            \STATE Choose $\cA^w_\ell \in \cS$ with the smallest internal time step.
            \STATE Choose an arm $J$ recommended by $\cA^w_\ell$ and receive a reward $R$.
            \STATE Send $(J,R)$ to $\cA^w_\ell$ for its update.
            \IF{ $\cA^w_\ell$ terminates}
                \STATE Set $\hJ_\ell \larrow $ the arm output from $\mathcal{A}^w_\ell$.
                \STATE $\cS \larrow \cS \sm \{\cA^w_\ell\}$
            \ENDIF
		\ENDWHILE
        \STATE Count the votes: 
            \begin{align*}
                v_i = \textstyle\sum_{\ell = 1}^{L}\onec[0]{\ell \not \in \cS, \hat{J}_{\ell} = i},\; \forall\;\;i \in[K]
            \end{align*}
        \STATE \textbf{Output}: $\hat{J} \gets \argmax_{i \in [K]} v_i$ 
	\end{algorithmic}
\end{algorithm}

\begin{proposition}[Correctness of $FCW2S(\delta)$] \label{proposition_main_strong_correct}
    For any error rate $\delta \in (0,1)$, for any weak fixed-confidence algorithm $\mathcal{A}^w$ that satisfies Definition~\ref{def_main_weakFC} with $\delta_0 < \fr 1{4e}$, running the fixed confidence algorithm FCW2S with $L \geq 4 \frac{\ln \frac{1}{\delta}}{\ln \frac{1}{4e\delta_0} }$ satisfies
		$\PP\left(\hat{J} \neq 1, \tau < \infty \right) < \delta.$
\end{proposition}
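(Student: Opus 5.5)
We need to bound $\PP(\hat J\neq 1, \tau<\infty)$ for FCW2S. The plan is to reduce the event of a wrong final output to the event that many of the $L$ independent instances individually produce a wrong (terminated) output. Then we control that event with a binomial tail bound, using the $\dt_0$-correctness of each instance.

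\textbf{Step 1 (reduction to a counting event).} On $\{\tau<\infty\}$, the loop ends with $|\cS|<\lfloor L/2\rfloor$, so more than $L-\lfloor L/2\rfloor\ge L/2$ instances have self-terminated and voted. If $\hat J\neq 1$, then arm $1$ did not strictly win the vote, i.e., $v_1\le v_{\hat J}\le \sum_{i\neq 1}v_i$. Since $v_1+\sum_{i\ne1}v_i\ge L/2$, at least $L/4$ terminated instances output a wrong arm. Hence
\begin{align*}
\{\hat J\neq 1,\tau<\infty\}\subseteq \Big\{\textstyle\sum_{\ell=1}^L Z_\ell \ge L/4\Big\},
\end{align*}
where $Z_\ell$ is the indicator that instance $\ell$, run on its own sample stream until its own stopping time, stops and outputs $\hat J_\ell\neq 1$. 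One point needs care here. The instances are interleaved round-robin, and each instance's internal trajectory depends only on the rewards it receives. So I would couple each instance $\mathcal{A}^w_\ell$ with an independent stand-alone run, for example by giving each instance its own i.i.d. reward tape per arm. Then the $Z_\ell$ are independent Bernoulli with mean at most $\dt_0$, by $\dt_0$-correctness~\eqref{eq:correctness}. Instances still in $\cS$ at termination of FCW2S contribute no votes, so dominating their votes by $Z_\ell$ is valid.

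\textbf{Step 2 (binomial tail).} For independent Bernoulli variables with means at most $\dt_0$, a union bound over subsets gives
\begin{align*}
\PP\Big(\textstyle\sum_\ell Z_\ell\ge m\Big)\le \binom{L}{m}\dt_0^{m}\le \Big(\frac{eL}{m}\dt_0\Big)^{m}, \qquad m=\lceil L/4\rceil.
\end{align*}
With $m\ge L/4$ this is at most $(4e\dt_0)^{L/4}=\exp\!\big(-\tfrac L4\ln\tfrac{1}{4e\dt_0}\big)$; here we use that $4e\dt_0<1$, so raising to the power $m\ge L/4$ only decreases the quantity. Under the hypothesis $L\ge 4\ln(1/\dt)/\ln(1/(4e\dt_0))$, this bound is at most $\dt$.

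\textbf{Main obstacle.} The delicate part is Step 1. Two things must be handled carefully: the threshold arithmetic (floor of $L/2$, ties in the $\argmax$, and whether the count is $\ge L/4$ or $>L/4$), and the rigorous independence and coupling argument for the interleaved instances. The target is a strict inequality $<\dt$. I would get it either from the strict bound $\binom{L}{m}<(eL/m)^m$ for $m\ge1$, or by noting that the argmax tie-breaking already forces strictly more than the minimal number of wrong votes. In either case I expect slack in the constants $4$ and $4e$ to absorb the issue.
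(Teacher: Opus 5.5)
Your proof is correct. Step 1 is the paper's reduction: an error forces the wrong votes to be at least half of the at least $L/2$ terminated instances, hence at least $L/4$ wrong terminations among all $L$ instances.

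Step 2 reaches the tail bound differently. The paper invokes its KL-divergence Chernoff bound (Lemma~\ref{lem:alphafractionNew} with $\alpha=1/4$) and relaxes the term $(1-\alpha)\ln\frac{1-\alpha}{1-\mu}$ to $-\alpha$. You instead use a union bound over $m$-subsets together with $\binom{L}{m}\le (eL/m)^m$. Both arguments end at the identical bound $(4e\delta_0)^{L/4}$. Yours is more elementary and self-contained. The paper's lemma is packaged so it can be reused in the stopping-time result (Proposition~\ref{proposition_main_strong_stop}, with $\alpha=1/2$), and your subset argument would serve there just as well.

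Your version also supplies two things the paper's proof leaves implicit:
\begin{itemize}
\item The per-instance reward-tape coupling justifies the independence of the $L$ interleaved trials. The paper's lemma simply assumes that independence.
\item The strict inequality $m!>(m/e)^m$ for $m\ge 1$ (with $\delta_0>0$; the case $\delta_0=0$ is trivial) yields the strict $<\delta$ claimed in the statement. The paper's chain only concludes $\le\delta$.
\end{itemize}
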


\begin{proposition}[Strong stopping time guarantee of $FCW2S(\delta)$] \label{proposition_main_strong_stop}
    For any error rate $\delta \in (0,1)$,  for any weak fixed-confidence algorithm $\mathcal{A}^w$ that satisfies Definition~\ref{def_main_weakFC} with $\delta_0 < \fr 1{4e}$, running the fixed confidence algorithm FCW2S with $L \geq 4 \frac{\ln \frac{1}{\delta}}{\ln \frac{1}{4e\delta_0} }$ satisfies,  ($\tau$ denotes the stopping time)
    \begin{align*}
		\PP\left( \tau >   L \cdot f(\delta_0) \right) < \delta.
	\end{align*}
Furthermore, set $L = 4 \frac{\ln \frac{1}{\delta}}{\ln \frac{1}{4e\delta_0} }$, then,
	\begin{align*}
		\PP\left( \tau >   \left(4 \frac{f(\delta_0)}{\ln \frac{1}{4e\delta_0} }\right) \cdot \ln \frac{1}{\delta}  \right) < \delta.
	\end{align*}
\end{proposition}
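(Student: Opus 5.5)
The plan is to prove the stopping-time guarantee of Proposition~\ref{proposition_main_strong_stop} by reducing it to a binomial tail bound over the $L$ independent instances. For each instance $\ell\in[L]$, let $\tau_\ell$ be its internal stopping time, i.e., the number of samples it would consume if run on its own. Because the instances use disjoint reward streams (each sample goes to exactly one instance) and the environment is i.i.d., the $\tau_\ell$ are i.i.d. copies of the stopping time of $\mathcal{A}^w(\delta_0)$. Definition~\ref{def_main_weakFC} then gives $p:=\PP(\tau_\ell > f(\delta_0)) \le \delta_0$.

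\textbf{Step 1 (deterministic reduction).} Under the round-robin rule that always advances the surviving instance with the smallest internal time, suppose at least $\lceil L/2\rceil$ instances (more than enough for the loop condition) have $\tau_\ell \le f(\delta_0)$. I claim the loop halts before any surviving instance's internal clock exceeds $f(\delta_0)$. Indeed, while the loop runs, the internal clocks of survivors differ by at most one. Once every survivor has clock at least $f(\delta_0)$, all the ``fast'' instances have terminated, so $|\cS|<\lfloor L/2\rfloor$ and the loop exits. Hence each instance consumes at most $f(\delta_0)$ samples (up to an additive one, which I will absorb by the integrality of $f$ or a harmless rounding), and the total is $\tau\le L\cdot f(\delta_0)$. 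Consequently $\{\tau > L f(\delta_0)\}\subseteq\{\#\{\ell:\tau_\ell>f(\delta_0)\}\ge L/2\}$, after adjusting floor and ceiling conventions.

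\textbf{Step 2 (tail bound).} Using a union bound over subsets of size $\lceil L/2\rceil$,
\begin{align*}
\PP\bigl(\textstyle\sum_\ell \one\{\tau_\ell>f(\delta_0)\}\ge L/2\bigr)\le \binom{L}{\lceil L/2\rceil}\delta_0^{L/2}\le (2e\,\delta_0)^{L/2}\le(4e\delta_0)^{L/2}.
\end{align*}
Alternatively, $2^L\delta_0^{L/2}=(4\delta_0)^{L/2}$. It then suffices that $\tfrac{L}{2}\ln\tfrac{1}{4e\delta_0}>\ln\tfrac1\delta$, and the assumption $L\ge 4\ln(1/\delta)/\ln(1/(4e\delta_0))$ gives this with a factor of two to spare. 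The same slack should also cover the floor issues in Step 1. The second display follows by substituting the stated $L$ into $L f(\delta_0)$.

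\textbf{Main obstacle.} The delicate part is Step 1: making the scheduling argument rigorous. This includes the boundary cases $\lfloor L/2\rfloor$ versus $\lceil L/2\rceil$, the fact that terminated instances are removed, and a possible off-by-one in the clock comparison. It also requires justifying independence of the $\tau_\ell$, since the interleaving is adaptive. For independence, I would construct a coupling in which each instance reads from its own pre-drawn i.i.d. reward table, one per arm. Under this coupling the $\tau_\ell$ are functions of disjoint independent tables, and the interleaving does not affect their distribution. If $L$ is not an integer, I would take the ceiling; this only strengthens the bound.
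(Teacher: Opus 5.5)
Your proposal is correct and follows the paper's proof. Both reduce $\{\tau > L f(\delta_0)\}$ to the event that at least half of the $L$ independent copies of $\mathcal{A}^w$ fail to stop within $f(\delta_0)$ samples, and then bound this binomial tail; the paper does so with its KL--Chernoff Lemma~\ref{lem:alphafractionNew} at $\alpha=1/2$, obtaining $(2e\delta_0)^{L/2}$, while your elementary union bound $\binom{L}{\lceil L/2\rceil}\delta_0^{L/2}\le(4\delta_0)^{L/2}$ is equally sufficient (even marginally sharper), and your clock argument and reward-table coupling make explicit two steps the paper only asserts (that every surviving instance has received at least $f(\delta_0)$ samples by time $Lf(\delta_0)$, and that the $L$ stopping times are independent).
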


Propositions~\ref{proposition_main_strong_correct} and \ref{proposition_main_strong_stop} show that any weak fixed-confidence algorithm can be transformed into a strong fixed-confidence algorithm by carefully choosing $L$. This is particularly useful when an algorithm lacks a stopping time guarantee that depends on $\log(1/\delta)$ but satisfies Definition \ref{def_main_weakFC} (e.g., with polynomial dependence on $\delta$). In such cases, we can use that algorithm as an input to FCW2S and tune $L$ to achieve stronger guarantees.
Proof of the Propositions~\ref{proposition_main_strong_correct} and \ref{proposition_main_strong_stop} are deferred to Appendix~\ref{section_app_weak2strong}.

By now, we have addressed the questions: \textit{Does there exist a relationship between FC and FB? That is, is one problem easier/harder than the other?} YES, there exists a relationship between FC and FB and FB is no harder than FC (Section \ref{section_main_fc2fbmeta}); \textit{Is a strong fixed-confidence algorithm a necessary requirement for FC2FB?} NO, we can use weak FC as well (Section \ref{section_main_weak2strong}). In the next section, we discuss several applications of FC2FB instantiated with both strong and weak FC algorithms. That is being said, we would like to acknowledge that, as of now, although FC2FB can be instantiated algorithmically, it cannot be analyzed with FC algorithms that have only asymptotic stopping time guarantees (e.g., Track-and-Stop \citet{garivier16optimal}).

\section{Applications}\label{sec_main_app}

FC2FB not only reveals the fundamental relationship between FC and FB, but also provides a useful tool for constructing new FB algorithms with superior guarantees than existing ones. FC2FB achieves this by transferring sample complexity guarantees of superior FC algorithms to FB domains.
%
In this section, we provide case studies demonstrating such examples. 
Hereafter, we denote by FC2FB($\cA$) the instantiation of FC2FB with $\cA$ as the input FC algorithm.
\looseness=-1

\subsection{Known Heterogeneous Noise}

In this section, we consider the case where each arm $i$ has a $\sigma_i^2$-sub-Gaussian reward distribution where each $\sigma_i$ is known to the learner.
For FC, it is easy to adapt a standard elimination algorithm by adjusting the sample assignment for each arm based on its noise level, which we present in Algorithm \ref{alg_main_FCESR}.
Algorithm~\ref{alg_main_FCESR} attains the following sample complexity guarantee.
 
 \begin{algorithm}[t]
 	\caption{Phased elimination for known heterogeneous noise (PE-KHN)}
 	\label{alg_main_FCESR}
 	\begin{algorithmic}[1]
 		\STATE{\textbf{Input:} Target failure rate $\delta$}
 		\STATE $\mathcal{S}_1 \gets [K]$, $\ell \gets 1$
 		\WHILE {$\lvert \mathcal{S}_{\ell} \rvert > 1 $} 
     		\STATE $\epsilon_\ell \gets 1/2^\ell$, $\delta_\ell \gets \frac{1}{\ell(\ell+1)}  \cdot \delta$
     		\STATE For every arm $i \in S_\ell$, collect additional samples so that the total sample size for arm $i$ is $\lceil \frac{2\sig_i^2}{\eps_\ell^2} \ln(\frac{K}{\dt_\ell}) \rceil$.
     		\STATE For every arm $i\in S_\ell$, compute the sample mean $\hmu^{(\ell)}_{i}$.
     		\STATE $\mathcal{S}_{\ell+1} \gets \mathcal{S}_\ell \setminus \{i \in S_\ell \mid \hat{\mu}^{(\ell)}_i \le \max_{j\in S_\ell} \hat{\mu}^{(\ell)}_{j} - 2 \epsilon_\ell\}$\!\!\!\!\!\!
     		\STATE  $\ell \gets \ell+1$
 		\ENDWHILE
 		\STATE{{Output: The only remaining arm in $\mathcal{S}_\ell$ } }
 	\end{algorithmic}
 \end{algorithm}

\begin{theorem}[The sample complexity of Algorithm \ref{alg_main_FCESR}] \label{main_theorem_elim_stop}
    Assume $\Delta_j \leq 1 \;\; \forall j \in \mathcal{S}$ and define { $T^*_{\delta} :=  \frac{64\sigma_1^2 }{\Delta_2^2}  \ln \left( \frac{ 4 K (\ln 2)^2 }{\delta} \ln^2 (4 / \Delta_2 )  \right)  + \sum_{j\neq 1}^{K}\frac{64\sigma_j^2 }{\Delta^2_j}  \ln \left( \frac{ 4 K (\ln 2)^2 }{\delta} \ln^2 (4 / \Delta_j )  \right) $}.
    Then, Algorithm~\ref{alg_main_FCESR} satisfies		  
    \begin{align*}
        \PP (\tau  > T^*_{\delta}  ) \leq \delta ~.
    \end{align*}
\end{theorem}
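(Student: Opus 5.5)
We prove the bound with a single ``good event'' argument. Let $\hmu_i^{(\ell)}$ be the sample mean of arm $i$ at phase $\ell$, based on $n_{i,\ell} = \lceil \frac{2\sig_i^2}{\eps_\ell^2}\ln(K/\dt_\ell)\rceil$ samples. Define
\begin{align*}
\cE := \bigcap_{\ell\ge 1}\bigcap_{i\in[K]} \cbr{ |\hmu_i^{(\ell)} - \mu_i| \le \eps_\ell/2 }.
\end{align*}
The width $\eps_\ell/2$ is a choice; we may rescale it so that constants match the $64$ in $T^*_\dt$.

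\textbf{Step 1: the good event is likely.} By $\sig_i^2$-sub-Gaussianity and the choice of $n_{i,\ell}$, each deviation event fails with probability at most $2\exp(-n_{i,\ell}\eps_\ell^2/(8\sig_i^2)) \lsim \dt_\ell/K$. A union bound over $i\in[K]$ and over $\ell$, using $\sum_\ell \frac{1}{\ell(\ell+1)} = 1$, gives $\PP(\cE^c)\le \dt$. We fix the confidence-width constant so that this sum is exactly $\le\dt$.

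\textbf{Step 2: behavior on $\cE$.} Assume $\cE$ holds.
\begin{itemize}
\item Arm $1$ is never eliminated. For any $j$, we have $\hmu_1^{(\ell)} \ge \mu_1-\eps_\ell/2 \ge \mu_j - \eps_\ell/2 \ge \hmu_j^{(\ell)}-\eps_\ell > \hmu_j^{(\ell)} - 2\eps_\ell$.
\item Every suboptimal arm $j$ is eliminated once $\eps_\ell < \Dt_j/3$ (or some constant fraction of $\Dt_j$). In that case $\hmu_j^{(\ell)} \le \mu_1 - \Dt_j + \eps_\ell/2 \le \hmu_1^{(\ell)} - \Dt_j+\eps_\ell < \max_k \hmu_k^{(\ell)} - 2\eps_\ell$.
\end{itemize}
So arm $j$ survives at most until phase $\ell_j := \min\{\ell: 2^{-\ell} < \Dt_j/c\}$, which gives $\ell_j \le \log_2(4/\Dt_j)$ for a suitable $c$. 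Hence $2^{\ell_j}\le 4c'/\Dt_j$, and $\ell_j \le \ln(4/\Dt_j)/\ln 2$. The algorithm terminates once all $j\ne1$ are removed, that is, after phase $\ell_2 := \max_j \ell_j$, determined by $\Dt_2$. Arm $1$ is therefore sampled until phase $\ell_2$.

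\textbf{Step 3: count samples.} Since sample sizes are cumulative targets, the total number of pulls of arm $j\ne1$ equals its last target:
\begin{align*}
n_{j,\ell_j} \le \frac{2\sig_j^2 4^{\ell_j}}{1}\ln\!\Big(\frac{K\ell_j(\ell_j+1)}{\dt}\Big)+1.
\end{align*}
Substituting $4^{\ell_j}\lsim 1/\Dt_j^2$ and $\ell_j(\ell_j+1)\le 2\ell_j^2 \le 2\ln^2(4/\Dt_j)/(\ln 2)^2$ yields a term of the form $\frac{64\sig_j^2}{\Dt_j^2}\ln(\frac{4K\ln^2(4/\Dt_j)}{(\ln 2)^2\dt})$. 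The constant inside the log in the statement should be reconciled here, possibly with loose bounding; the $+1$ from the ceiling is absorbed using $\Dt_j\le1$. The same computation for arm $1$ with $\ell_2$ gives the first term of $T^*_\dt$. Summing over arms gives $\tau\le T^*_\dt$ on $\cE$, and thus $\PP(\tau>T^*_\dt)\le\PP(\cE^c)\le\dt$.

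The main obstacle is the bookkeeping in Step 3. It requires choosing the confidence width and the elimination phase threshold consistently, so that the geometric factor $4^{\ell_j}$, the ceiling, and the $\ell(\ell+1)$ union-bound weights collapse into exactly the constant $64$ and the $\ln^2(4/\Dt_j)$ term. The logical structure itself is standard.
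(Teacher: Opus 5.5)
Your outline matches the paper's: a good event over all phases with a union bound using $\sum_\ell 1/(\ell(\ell+1))=1$, survival of arm~1, elimination of arm $j$ after about $\log_2(4/\Delta_j)$ phases, and counting each arm's last cumulative target. Step~1, however, fails as written.

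\textbf{The gap in Step~1.} The confidence width is not yours to choose. The algorithm fixes $n_{i,\ell}=\lceil 2\sigma_i^2\varepsilon_\ell^{-2}\ln(K/\delta_\ell)\rceil$. With this many samples, a deviation of $\varepsilon_\ell/2$ is only controlled as $2\exp(-n_{i,\ell}\varepsilon_\ell^2/(8\sigma_i^2))\le 2(\delta_\ell/K)^{1/4}$, which is not $\lesssim\delta_\ell/K$. Your union bound $\sum_\ell 2K\bigl(\delta/(K\ell(\ell+1))\bigr)^{1/4}$ in fact diverges. The per-event bound $\delta_\ell/K$ requires width $\varepsilon_\ell$, and even then a two-sided event costs $2\delta_\ell/K$, for $2\delta$ in total. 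The missing idea is to use one-sided events, as the paper does: $\{\mu_1-\hat{\mu}_1^{(\ell)}\le\varepsilon_\ell\}$ for the best arm and $\{\hat{\mu}_j^{(\ell)}-\mu_j\le\varepsilon_\ell\}$ for $j\neq 1$. Each fails with probability at most $\exp(-n_{i,\ell}\varepsilon_\ell^2/(2\sigma_i^2))\le\delta_\ell/K$, so the total is at most $\delta$. These are exactly the two directions your Step~2 uses.

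\textbf{Consequences for Steps~2 and~3.} With width $\varepsilon_\ell$, Step~2 guarantees elimination of arm $j$ only once $\Delta_j\ge 4\varepsilon_\ell$. So the last phase in which $j$ is sampled is at most $\lceil\log_2(4/\Delta_j)\rceil\le\log_2(4/\Delta_j)+1$. (Your own threshold $3\varepsilon_\ell$ would not give $\ell_j\le\log_2(4/\Delta_j)$ either.) Hence $4^{\ell_j}\le 64/\Delta_j^2$. Using $\ell_j(\ell_j+1)\le(x+1)(x+2)\le 4x^2$ with $x=\log_2(4/\Delta_j)\ge 2$, Step~3 gives per arm
$\frac{128\sigma_j^2}{\Delta_j^2}\ln\bigl(\frac{4K\ln^2(4/\Delta_j)}{(\ln 2)^2\delta}\bigr)+1$, and the same with $\Delta_2$ for arm~1.

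So the reconciliation you defer cannot be done by loose bounding, because the stated $T^*_\delta$ is smaller than what this argument produces. The paper's own proof reaches $64$, and $(\ln 2)^2$ in the numerator, only through two slips:
\begin{itemize}
\item it writes $2\sigma_j^2\,2^{2\log_2(4/\Delta_j)+2}=64\sigma_j^2/\Delta_j^2$, whereas the left side equals $128\sigma_j^2/\Delta_j^2$;
\item it writes $\log_2^2 y=(\ln 2)^2\ln^2 y$, whereas it equals $\ln^2 y/(\ln 2)^2$.
\end{itemize}
It also silently drops the ceiling's $+1$. Contrary to your remark, that $+1$ cannot be absorbed via $\Delta_j\le 1$: $\sigma_j^2/\Delta_j^2$ can be arbitrarily small, while every arm is pulled at least once in phase~1.

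What the argument actually proves is the bound with $64$ replaced by $128$, $(\ln 2)^2$ moved to the denominator, and an additive $K$. This is the same order, which is all the downstream corollary uses.
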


Proof of Theorem \ref{main_theorem_elim_stop} is deferred to Appendix \ref{section_app_elim}. Ignoring the $\ln (\ln (\cdot))$ factors, we have $T^*_{\delta}  = O \del[2]{ \left(\frac{\sigma_1^2}{\Delta_2^2} + \sum_{i = 2}^{K}  \frac{\sigma_i^2}{\Delta_i^2}\right) \cdot \ln\frac{K}{\delta}}$, which matches the lower bound established in \citet{lu2021variance} up to $\ln K $ terms. Therefore, FC2FB can take Algorithm \ref{alg_main_FCESR} as input and transform it into an FB algorithm whose sample complexity reflects the guarantee established in Theorem~\ref{main_theorem_elim_stop}. 
This results in performance superior to state-of-the-art fixed-budget algorithms for known heterogeneous noise settings. See Table \ref{table_main_comp}.
\looseness=-1
\begin{corollary}[Error probability bound for FC2FB(PE-KHN)] \label{corollary_PEKHN_corectness}
		Given a total budget $B \geq   2A \ln (4A )$ where, $A = \Th\left( \left(\frac{\sigma_1^2}{\Delta_2^2} + \sum_{i = 2}^{K}  \frac{\sigma_i^2}{\Delta_i^2}\right) \cdot \ln K \right)$, by choosing $\delta_0 = \frac{1}{e}$ and $Q =1 $, algorithm FC2FB(PE-KHN) satisfies 
	\begin{align*}
		\PP \left( \hat{J} \neq 1 \right) &\leq 3 \exp\left( - \frac{B }{4  + 4 A \ln B} \right) \\ 
	\end{align*}
\end{corollary}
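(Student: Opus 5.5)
The plan is to obtain the corollary as a direct instantiation of Theorem~\ref{thoerem_main_FC2FB}. To do so we first cast PE-KHN as a strong FC algorithm in the sense of Definition~\ref{def_main_strongFC}.

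\textbf{Step 1: PE-KHN is $\dt$-correct.} This is the standard elimination argument. On the good event, every confidence interval in every phase holds. A union bound over arms and phases, using $\sum_\ell \dt_\ell \le \dt$ and the $K/\dt_\ell$ factor in the sample sizes, shows this event has probability at least $1-\dt$. On it, arm $1$ is never eliminated, so the output is correct. We take this as given from the analysis in Appendix~\ref{section_app_elim}.

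\textbf{Step 2: put $T^*_\dt$ in the form $A\ln(1/\dt)+C$.} Theorem~\ref{main_theorem_elim_stop} gives $\PP(\tau>T^*_\dt)\le\dt$. Expanding the logarithms there,
\[
T^*_\dt = H\ln(1/\dt) + \sum_j \frac{64\sigma_j^2}{\Delta_j^2}\ln\bigl(4K(\ln 2)^2\ln^2(4/\Delta_j)\bigr),
\]
where $H = 64\bigl(\sigma_1^2/\Delta_2^2+\sum_{j\ge2}\sigma_j^2/\Delta_j^2\bigr)$ and the $j=1$ term uses $\Delta_2$. The second term is a constant $C$ independent of $\dt$. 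This is already strong form. To match the stated $A$, we relax once more: since $\ln(1/\dt)\ge 1$ for $\dt\le 1/e$, and every constant term is at most $H\cdot O(\ln K+\ln\ln(4/\Delta_j))$, we can absorb $C$ into the slope. Treating the $\ln\ln(1/\Delta)$ factors as dropped, exactly as in the text after Theorem~\ref{main_theorem_elim_stop}, this gives $T^*_\dt \le A\ln(1/\dt)$ with $A=\Theta(H\ln K)$ and effectively $C=0$. The absorption is legitimate for all $\dt\in(0,1)$ once the constants in $A$ are chosen large enough.

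\textbf{Step 3: plug into Theorem~\ref{thoerem_main_FC2FB}} with $\dt_0=1/e$, $Q=1$ and $C=0$. The error bound becomes
\[
3\exp\bigl(-B/(4+4A\log_2 B)\bigr).
\]
The $\log_2$ is converted to $\ln$ by a constant that is absorbed into the $\Theta$ defining $A$.

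The budget condition of the theorem becomes $B\ge 2(A+1)\ln(2A+2)$, and we must show the stated $B\ge 2A\ln(4A)$ implies it. Since $A\ge 1$ (it contains $\ln K$ times a sum of terms that are bounded below), we have $2A+2\le 4A$. Combined with absorbing the $+1$ into $A$ through its constant, the stated condition suffices. We also need $Q=1\le B/2$, which holds trivially.

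\textbf{Main obstacle.} The only real care needed is the bookkeeping that absorbs $C$ and the $+1$/$+2$ slack into $A$. This requires a lower bound on $A$ (e.g.\ $A\ge1$ via $\Delta_j\le1$ and a normalization of $\sigma_j$). If no such lower bound is available, we keep $C$ explicit and carry it through the theorem.
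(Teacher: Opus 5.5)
Your route is the same as the paper's. The paper gives no separate proof of this corollary; it is meant as a direct plug-in of Theorem~\ref{main_theorem_elim_stop}, together with PE-KHN's appendix correctness argument, into Theorem~\ref{thoerem_main_FC2FB} with $\delta_0=1/e$ and $Q=1$. The $\Theta$ in $A$ then absorbs the $\log_2$-versus-$\ln$ constant and the additive slack in the budget condition.

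One claim in your Step 2 is false. Absorbing $C$ into the slope is \emph{not} legitimate for all $\delta\in(0,1)$. As $\delta\to1$, $A\ln(1/\delta)\to0$, while PE-KHN always uses at least $K$ pulls. Hence $\PP(\tau>A\ln(1/\delta))=1>\delta$, and with $C=0$ PE-KHN is not a strong FC algorithm in the sense of Definition~\ref{def_main_strongFC}, however large the constant in $A$.

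This does not sink the argument. FC2FB with $\delta_0=1/e$ only calls PE-KHN at levels $\delta_0^{L_r}\le 1/e$, and the proof of Theorem~\ref{thoerem_main_FC2FB} only evaluates $T^*_\delta$ there. Your own fallback is cleaner. Keep the exact slope $H$ and constant $C_0$, so the strong-FC property holds for every $\delta$. Then note that the error bound of Theorem~\ref{thoerem_main_FC2FB} does not involve $C$ at all. $C$ enters only the budget condition $B\ge 2(H+C_0+1)\ln(2H+2C_0+2)$. The stated condition $B\ge 2A\ln(4A)$ implies it once the constant in $A$ guarantees $A\ge H+C_0+1$ and $A\ge H/\ln 2$, after dropping the $\ln\ln(1/\Delta_j)$ factors as the paper does.

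No normalization of the $\sigma_j$ is needed for this. When $H+C_0$ is tiny, the requirement $B\ge 2Q=2$ already exceeds $2(H+C_0+1)\ln(2H+2C_0+2)$.
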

 We show that our sample complexity from FC2FB(PE-KHN) can be order-wise smaller than that of SHVar by considering the following instance. Let $\Delta_2 = \fr 1{K^5}, ~\Delta_i = \fr 1{K^4}$ for $i = 3, \ldots, K$, $\sigma_1^2 = \sigma_2^2 = \fr 1{K^5}, ~\sigma_i^2 = \fr 1{K^2}$ for $i = 3, \ldots, K$. For this instance, our sample complexity from FC2FB is: 
\begin{align*}
    O\left(\frac{\sigma_1^2}{\Delta^2_2} +   \sum_{i=2}^{K} \frac{\sigma_i^2}{\Delta_i^2}\right)
    = O\left(K^7\right)~,
\end{align*}
whereas the sample complexity from SHVar yields: 
\begin{align*}
    O\left(\frac{\sigma_1^2}{\Delta^2_2} +   \sum_{i=2}^{K} \frac{\sigma_i^2}{\Delta_2^2}\right)
    = O\left(K^9\right)~.
\end{align*}

\begin{table}[t!]
	\centering
	\caption{Sample complexity comparison for various FB algorithms, for sub-Gaussian reward distribution with known noise level. $\tilde{O}(\cdot)$ ignores logarithmic factors.}
	\begin{tabular}{|c|c|}
		\hline
		{Algorithm} & {Sample complexity} \\
		\hline
		{\tiny SHVar  {\tiny \cite{lalitha2023fixed} } } & {\small $\tilde{O}\left(\frac{\sigma_1^2}{\Delta^2_2} +   \sum_{i=2}^{K} \frac{\sigma_i^2}{\Delta_2^2}\right)$ } \\
		\hline
		{\tiny SH  {\tiny \cite{karnin13almost} }}  &{\small $\tilde{O}\left( \frac{\sigma_1^2}{\Delta^2_2} +   \sum_{i=2}^{K} \frac{\sigma_1^2 + \sigma_i^2}{\Delta_i^2}\right)  $}  \\
		\hline
		{\tiny FC2FB(PE-KHN) ({ours})} & {\small $\tilde{O}\left(\frac{\sigma_1^2}{\Delta^2_2} +   \sum_{i=2}^{K} \frac{\sigma_i^2}{\Delta_i^2}\right)$}  \\
		\hline
	\end{tabular}
	\label{table_main_comp}
\end{table}

\subsection{Heterogeneous Noise with Unknown Variance}

In this section, we consider the case where the reward distributions of the arms are bounded with unknown variances. Fixed-confidence algorithm VD-BESTARMID from \citet{lu2021variance} achieves samples complexity of $O\left( \sum_{i = 1 }^{K} \left( \frac{\sigma_i^2}{\Delta_i^2} + \frac{1}{\Delta_i}\right) \ln \delta^{-1} \right) $ with $\Delta_1 := \Delta_2$ for this case, ignoring doubly logarithmic factors. We apply FC2FB to VD-BESTARMID and achieve the following guarantee, 

\begin{corollary}[Error probability bound for FC2FB(VD-BESTARMID)]\label{corollary_VD_corectness}
	For algorithm VD-BESTARMID, given a total budget $B \geq   2A \ln (4A )$ where, $A = O\left( \sum_{i = 1 }^{K} \left( \frac{\sigma_i^2}{\Delta_i^2} + \frac{1}{\Delta_i}\right) \ln \delta^{-1} \right) $ with $\Delta_1 := \Delta_2$, by choosing $\delta_0 = \frac{1}{e}$ and $Q =1 $, algorithm FC2FB satisfies
	\begin{align*}
		\PP \left( \hat{J} \neq 1 \right) &\leq 3 \exp\left( - \frac{B }{4  + 4 A \ln B} \right) \\ 
	\end{align*}
\end{corollary}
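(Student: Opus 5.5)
This corollary follows by instantiating Theorem~\ref{thoerem_main_FC2FB} with $\cA=$ VD-BESTARMID, $\dt_0=1/e$ and $Q=1$. The first step is to put the guarantee of \citet{lu2021variance} in the form of Definition~\ref{def_main_strongFC}. VD-BESTARMID is $\dt$-correct for every $\dt\in(0,1)$. Its high-probability stopping time is
\[
O\Bigl(\textstyle\sum_{i=1}^K \bigl(\frac{\sigma_i^2}{\Delta_i^2}+\frac{1}{\Delta_i}\bigr)\bigl(\ln\dt^{-1}+\ln\ln(\cdot)\bigr)\Bigr),
\]
with $\Delta_1:=\Delta_2$. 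I would define $A$ as the coefficient of $\ln(1/\dt)$, i.e.\ $A$ equals $\sum_i(\sigma_i^2/\Delta_i^2+1/\Delta_i)$ times a constant (and the doubly logarithmic factors in the gaps, which the corollary suppresses). All remaining $\dt$-independent terms go into $C$.

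If the original bound has additional $\ln K$ or $\ln\ln(1/\Delta_i)$ terms inside the logarithm, I would use $\ln(x/\dt)=\ln(1/\dt)+\ln x$ to split them into an $A\ln(1/\dt)$ part and a constant part. That constant part is at most a constant multiple of $A$, so I would absorb it by enlarging $A$, in the same way the paper treats PE-KHN in Corollary~\ref{corollary_PEKHN_corectness}. This also gives $C+1\le A$ after rescaling. Thus VD-BESTARMID is a strong FC algorithm with $T^*_\dt\le A\ln(1/\dt)+C$.

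Next I verify the budget condition of Theorem~\ref{thoerem_main_FC2FB}. With $\ln(1/\dt_0)=1$ and $Q=1$, the required condition reads
\[
B\ge 2(A+C+1)\ln\bigl(2A+2(C+1)\bigr).
\]
Using $C+1\le A$, this is implied by $B\ge 2\cdot 2A\ln(4A)$. Rescaling the constant in $A$ once more gives the stated hypothesis $B\ge 2A\ln(4A)$. The condition $Q\le B/2$ is trivial once $B\ge 2$.

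Finally, plug into the error bound. The denominator $\frac{4Q}{\ln(1/\dt_0)}+4A\log_2(B/Q)$ becomes $4+4A\log_2 B$, and $\log_2 B=\ln B/\ln 2$ differs from $\ln B$ only by a constant, which I absorb into $A$. This yields
\[
\PP(\hJ\neq1)\le 3\exp\bigl(-B/(4+4A\ln B)\bigr).
\]

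The only delicate point is the bookkeeping that puts the additive constant $C$ and the $\ln\ln$ terms inside $A$ consistently, so that both the budget condition and the final bound hold with the same $A$. Everything else is direct substitution into Theorem~\ref{thoerem_main_FC2FB}.
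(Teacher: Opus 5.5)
Your proposal is correct and takes the same route as the paper, which leaves this corollary implicit: a direct instantiation of Theorem~\ref{thoerem_main_FC2FB} with $\delta_0 = 1/e$ (so $\ln(1/\delta_0)=1$ and $\delta_0\le 0.5$) and $Q=1$, with $C$, the doubly logarithmic terms, and the factor $1/\ln 2$ from $\log_2 B$ absorbed into a single enlarged $A$. Your bookkeeping fills in details the paper does not state: you read $A$ as the $\delta$-free coefficient of $\ln(1/\delta)$, and you note that enlarging $A$ both strengthens the budget hypothesis and weakens the final bound, so one $A$ serves for both.
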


 In Table \ref{table_main_comp2}, we see that FC2FB(VD-BESTARMID) is better than SHAdaVar \citep{lalitha2023fixedbudgetbestarmidentificationheterogeneous} by a factor of $K$, when $\Delta_2 \ll \Delta_{3} \leq \cdots \leq \Delta_K $ and better than VBR \citep{faella20vbr} by a factor of $K$, when $\sigma_1 \gg \sigma_{i},\; \forall K \geq i > 1$.

\begin{table}[t!]
	\centering
    \vskip -.5em
	\caption{Sample complexity comparison for various FB algorithms, for bounded reward distribution with unknown variances.  $\tilde{O}(\cdot)$ ignores logarithmic factors.}
	\begin{tabular}{|c|c|}
		\hline
		{Algorithm} & {Sample complexity} \\
		\hline 
		{ \tiny VBR \cite{faella20vbr} }  &{ \small $\tilde{O} \left( \max\limits_{i = 2 \cdots K }\frac{\sigma_1^2 + \sigma_i^2 }{\Delta^2_i}  \cdot K \right)$ } \\
		\hline
		 { \tiny SHAdaVar \cite{lalitha2023fixedbudgetbestarmidentificationheterogeneous} }  &{ \small $\tilde{O} \left(\frac{\sigma_{max}^2}{\Delta^2_2}  \cdot K \right)$ } \\
		\hline
		{\tiny FC2FB(VD-BESTARMID) ({ours}) }& {\small $\tilde{O}\left(  \sum_{i = 1 }^{K} \left(\frac{\sigma_i^2}{\Delta_i^2} + \frac{1}{\Delta_i} \right)\right)$  }  \\
		\hline
	\end{tabular}
	\label{table_main_comp2}
\end{table}

The improvement comes from transferring an FC guarantee whose leading term depends on the individual gap and variance of each arm, thereby improving upon the more conservative dependence of existing FB guarantees on global worst-case quantities, such as the largest variance or the smallest gap across all arms.

\subsection{Linear Bandits}

In the linear bandit problem, each arm $i$ is equipped with a feature vector $x_i \in \RR^d$ and its reward distribution $\nu_i$ has the mean reward of $\la x_i, \theta^*\ra$ for some unknown $\theta^*$.

\citet{katzsamuels20anempirical} report the best known sample complexity for FC linear bandits, which is 
\begin{align}\label{eq:sc-linear-fc}
    \gam^* + \rho^*\ln(1/\dt) + d
\end{align}
where we ignore logarithmic factors except for $\ln(1/\dt)$ and both $\gam^*$ and $\rho^*$ are some instance-dependent constants that are precisely defined in~\citet{katzsamuels20anempirical}.

On the other hand, \citet{katzsamuels20anempirical} propose an FB linear bandit algorithm whose sample complexity is 
\begin{align*}
    (\gam^* + \rho^*)\ln(1/\dt) + d
\end{align*}
where we again ignore logarithmic factors except for $\ln(1/\dt)$.
Note that $\gam^*$ now comes with a logarithmic factor.

Using FC2FB, we improve the error probability bound in FB linear bandits as follows.
\begin{corollary}
    Consider FC2FB combined with the algorithm Fixed Confidence Peace \citep[Algorithm 1]{katzsamuels20anempirical} with $\dt_0 = 1/e$ and $Q = 1$.
    This algorithm satisfies that, for some $B_0 = \tTh(\gam^* + \rho^* + d)$, 
    \begin{align*}
        B \ge B_0 \implies \PP(\hJ \neq 1) \le c \exp\del[2]{-\fr{B}{\tTh(\rho^*) \ln (B)}}
    \end{align*}
    for some numerical constant $c$.
\end{corollary}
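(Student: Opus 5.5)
The plan is to cast Fixed Confidence Peace as a strong FC algorithm in the sense of Definition~\ref{def_main_strongFC} and then apply Theorem~\ref{thoerem_main_FC2FB} with $\dt_0 = 1/e$ and $Q=1$. The one nontrivial point is that the FC guarantee \eqref{eq:sc-linear-fc} is not literally of the form $A\ln(1/\dt)+C$ with $A,C$ independent of $\dt$, because of the hidden logarithmic factors.

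\textbf{Step 1: put the FC guarantee into strong-FC form.} The Peace guarantee from \citet{katzsamuels20anempirical} states that with probability at least $1-\dt$ the algorithm is correct and stops within
\[
c_1\bigl(\gam^* + \rho^*\ln(1/\dt) + d\bigr)\,\polylog\bigl(K,\Dt_{\min}^{-1}\bigr)\cdot \ln\ln(1/\dt)
\]
samples. I will first recheck the exact form of the stopping-time bound there, since I expect the only $\dt$-dependence besides $\ln(1/\dt)$ to be at most a $\ln\ln(1/\dt)$ or $\ln(\ln(1/\Dt_{\min})/\dt)$ type factor. Such a term is absorbed into the rest, using $\ln(a/\dt)\le \ln a + \ln(1/\dt)$ and, for any $x\ge e$ and $b\ge1$, $b\ln(1/\dt)+\ln\ln(1/\dt)\le 2b\ln(1/\dt)+O(1)$. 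This yields $T^*_\dt \le A\ln(1/\dt)+C$ with
\[
A=\tTh(\rho^*+\gam^*+d), \qquad C=\tTh(\gam^*+d).
\]
To keep $\rho^*$ as the leading coefficient of $\ln(1/\dt)$, I will be more careful: only $\rho^*$ is multiplied by $\ln(1/\dt)$, so the only $\dt$-dependent part of the remaining terms is the doubly logarithmic factor. That factor can be bounded by $\rho^*\ln(1/\dt)$ up to a constant, since $\rho^*\gtrsim 1$ in the normalization where $\Dt\le 1$. This gives $A=\tTh(\rho^*)$ and $C=\tTh(\gam^*+d)$. Correctness for every $\dt$ is part of the Peace guarantee, so Peace is a strong FC algorithm.

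\textbf{Step 2: apply Theorem~\ref{thoerem_main_FC2FB}.} With $\dt_0=1/e$ and $Q=1$, the budget condition reads
\[
B\ge 2(A+C+1)\ln\bigl(2(A+C+1)\bigr),
\]
and its right-hand side is $\tTh(\rho^*+\gam^*+d)=:B_0$. The additional requirement $Q\le B/2$ holds trivially. The theorem then gives
\[
\PP(\hJ\neq 1)\le 3\exp\bigl(-B/(4+4A\log_2 B)\bigr).
\]
Since $A=\tTh(\rho^*)\gtrsim 1$ and $B\ge 2$, we have $4+4A\log_2 B\le \tTh(\rho^*)\ln B$, which is the claimed bound with $c=3$.

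\textbf{Main obstacle.} The delicate part is Step 1. I need to extract from \citet{katzsamuels20anempirical} a high-probability stopping-time bound that holds for every $\dt$, in the form \eqref{eq:hp-sample-complexity}, and then confirm that the $\dt$-dependent lower-order terms do not multiply $\gam^*$ by $\ln(1/\dt)$. If the stated bound were only in expectation, or had a $\gam^*\ln\ln(1/\dt)$ term, I would handle it as follows. I would first try to show that $\gam^*\lesssim\rho^*\ln K$, which I believe holds, so that $\gam^*\ln\ln(1/\dt)$ is absorbed into $\tTh(\rho^*)\ln(1/\dt)$ up to logs. As a fallback, I would go through FCW2S via Propositions~\ref{proposition_main_strong_correct} and~\ref{proposition_main_strong_stop} with a constant $\dt_0$, which gives a strong FC algorithm with $A=\tTh(f(\dt_0))$ at the cost of a worse leading coefficient.
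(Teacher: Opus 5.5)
Your proposal is correct and follows essentially the paper's (implicit) argument: read the Peace guarantee \eqref{eq:sc-linear-fc} as a strong FC bound with $A=\tTh(\rho^*)$ and $C=\tTh(\gam^*+d)$, then plug $\dt_0=1/e$ and $Q=1$ into Theorem~\ref{thoerem_main_FC2FB}, exactly as in your Step~2, which gives $B_0=2(A+C+1)\ln(2(A+C+1))$ and $c=3$. The hedging in Step~1 is unnecessary, since in that guarantee $\dt$ multiplies only $\rho^*$ and the $\gam^*$ and $d$ terms do not depend on $\dt$; also, the condition $\rho^*\gtrsim 1$ you use to get $4+4A\log_2 B\lsim A\ln B$ comes from bounded means, via Cauchy--Schwarz giving $\rho^*\ge 1/\max_x \la x,\theta^*\ra^2$, not from bounded gaps alone.
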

The bound above results in a matching sample complexity with~\eqref{eq:sc-linear-fc}, ignoring doubly logarithmic factors, which is a significant improvement.

As another comparison, \citet{yang2022minimax} has shown that their FB algorithm called OD-LinBAI achieves the sample complexity of
\begin{align*}
    H_{2,\text{lin}} \ln(1/\dt) 
\end{align*}
ignoring logarithmic factors except for $\ln(1/\dt)$, where $H_{2,\text{lin}} := \max_{2\le i\le d} i \Delta_i^{-2}$.
In the Appendix~\ref{sec:appendix-linear}, we show that $\rho^* = \tO(H_{2,\text{lin}})$.
Together with the fact that $\gam^* \le c\rho^*\ln(K)$ for some numerical constant $c$ (\citet{katzsamuels20anempirical}, Proposition 1), our sample complexity is no larger than that of OD-LinBAI.

One may speculate that $H_{2,\text{lin}}$ might be of the same order as $\rho^*$.
We show in our appendix~\ref{sec:appendix-linear} that this is not the case by constructing a set of instances where the difference between $\rho^*$ and $H_{2,\text{lin}}$ can be arbitrarily large.

\subsection{Unimodal Bandits}


In unimodal bandits, the arms are arranged along an ordered structure, and the mean rewards form a unimodal function of the indices: they first increase up to a single peak and then decrease. This structural assumption enables more efficient exploration compared to general multi-armed bandits, but it also leads to different complexities depending on the exploration objective. In the fixed-budget setting, the existing best upper bound in the literature on the sample complexity of best-arm identification is of order $\Delta^{-2}$~\citep{ghosh2024fixed}, which is governed by the global smallest gap $\Delta := \min_{2 \leq i \leq K} |\mu_i - \mu_{i-1}|$. 
In contrast, in the fixed-confidence setting, the optimal sample complexity is determined by the local information-theoretic quantity $T^*(\mu)$~\citep{poiani2024best}, which for Gaussian arms is of order $\sum_{i \in \cN(*)} (\mu_* - \mu_i)^{-2}$, depending only on the neighbors of the best arm. 
Beyond asymptotic optimality,~\citet{poiani2024best} also provides a finite-time upper bound on the expected stopping time in their Theorem 3.7, $\EE_\mu[\tau_\delta] \le T_\mu(\delta) + 15K$, where $T_\mu(\delta)$ is precisely defined in their Theorem 3.7. 

As we will show in Proposition~\ref{prop:unimodal-FC-FB-comparison}, $T_\mu(\delta) + 15K$ can be significantly smaller than $\Delta^{-2}$, implying that existing fixed-budget algorithms may require substantially more samples than fixed-confidence algorithms. This gap highlights the potential benefit of our FC2FB meta-algorithm: by leveraging the more refined guarantees available in the fixed-confidence setting, it offers a systematic way to obtain stronger fixed-budget algorithms for unimodal bandits. Along the way, we also showcase the usage of our FCW2S algorithm. 

One can apply Markov's inequality to convert the expected sample complexity guarantee into a high-probability guarantee: \looseness=-1
\begin{corollary}[Corollary of Theorem 3.7 in~\citet{poiani2024best}]
\label{corr:weak-FC-unimodal}
    Consider running UniTT in Section 3.4 in~\citet{poiani2024best} with the input failure rate $\delta$ being a small constant and the other parameters specified in Theorem 3.7 in~\citet{poiani2024best}, UniTT is $\delta$-correct, and, with probability at least $1-\delta$, for all Gaussian instances
with unit variance and unimodal means, the stopping time satisfies, 
\[
\tau \leq \fr{T_\mu(\delta) + 15K}{\delta} ~,
\]
where $T_\mu(\delta)$ is precisely defined in their Theorem 3.7 in~\citet{poiani2024best}. 
\end{corollary}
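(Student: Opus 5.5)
The claim has two parts, $\delta$-correctness and the high-probability stopping bound, and I would take both directly from Theorem 3.7 of \citet{poiani2024best} and combine them with Markov's inequality.

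\textbf{Correctness.} Theorem 3.7 of \citet{poiani2024best} is stated for UniTT run with a given failure rate $\delta$ and the stopping rule and parameters specified there. For every Gaussian instance with unit variance and unimodal means, that theorem gives $\delta$-correctness,
\[
\PP(\hJ \neq 1, \tau<\infty)\le\delta,
\]
which is exactly \eqref{eq:correctness}. This part is just a citation. I only need to check that the theorem's correctness notion coincides with \eqref{eq:correctness}, or implies it. This holds because their stopping rule is a GLR test with a threshold calibrated to $\delta$, and that calibration is precisely what controls the probability of stopping with a wrong answer.

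\textbf{Stopping time.} The same theorem bounds the expected stopping time: $\EE_\mu[\tau]\le T_\mu(\delta)+15K$. In particular $\EE_\mu[\tau]<\infty$, so $\tau$ is a nonnegative, almost surely finite random variable. Markov's inequality then gives
\[
\PP\Big(\tau>\tfrac{T_\mu(\delta)+15K}{\delta}\Big)\le \frac{\EE_\mu[\tau]\,\delta}{T_\mu(\delta)+15K}\le\delta.
\]
This is the high-probability sample complexity \eqref{eq:hp-sample-complexity} with $T^*_\delta=(T_\mu(\delta)+15K)/\delta$. Together with the correctness part, this shows UniTT with a fixed small $\delta$ is a weak FC algorithm in the sense of Definition~\ref{def_main_weakFC}, with $f(\delta)=(T_\mu(\delta)+15K)/\delta$.

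\textbf{Main obstacle.} The work lies in checking the hypotheses of Theorem 3.7 rather than in the argument above. That theorem may require $\delta$ below some threshold, or particular parameter choices such as a leader/challenger mixing parameter. This is why the statement fixes $\delta$ to be a small constant, which must also satisfy $\delta<\frac1{4e}$ so the result can later be fed into FCW2S. I also need to confirm that the expectation bound holds uniformly over all unit-variance Gaussian unimodal instances for that fixed $\delta$, and not only asymptotically as $\delta\to0$. Finally, the bound $1/\delta$ is only polynomial, so I would state clearly that this yields a weak FC algorithm, not a strong one. Turning it into a strong one is the job of FCW2S via Propositions~\ref{proposition_main_strong_correct} and~\ref{proposition_main_strong_stop}.
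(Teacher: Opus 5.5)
Your proposal is correct and follows the paper's argument. You cite Theorem 3.7 of \citet{poiani2024best} for $\delta$-correctness and for $\EE_\mu[\tau]\le T_\mu(\delta)+15K$, then apply Markov's inequality to get $\PP\bigl(\tau>(T_\mu(\delta)+15K)/\delta\bigr)\le\delta$, which is exactly the expected-to-high-probability conversion the paper uses (cf.\ item 3 of Claim~\ref{claim:weak}).
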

Applying our weak-to-strong conversion algorithm FCW2S yields the following proposition:
\begin{proposition}
For any error rate $\delta \in (0,1)$, 
    running FCW2S algorithm based on UniTT with a constant confidence level $\delta_1 < \frac{1}{4e}$, and number of trials $L \geq 4 \frac{\ln \frac{1}{\delta}}{\ln \frac{1}{4e\delta_1} }$ (we call it FCW2S(UniTT, $\delta_1$, $L$)), we have:
    \begin{align*}
		\PP\left(\hat{J} \neq 1, \tau < \infty \right) < \delta.
	\end{align*}
    and 
    \begin{align*}
		\PP\left( \tau >   L \cdot \fr{T_\mu(\delta_1) + 15K}{\delta_1} \right) < \delta.
	\end{align*}
\end{proposition}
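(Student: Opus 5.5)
This proposition is an instantiation of Propositions~\ref{proposition_main_strong_correct} and~\ref{proposition_main_strong_stop}. The plan is to show that UniTT, run with a constant failure rate $\delta_1<\frac{1}{4e}$, is a weak FC algorithm in the sense of Definition~\ref{def_main_weakFC}, with base failure rate $\delta_1$ and
\[
f(\delta_1) = \frac{T_\mu(\delta_1)+15K}{\delta_1}.
\]
Once that is in place, both claims follow by plugging into the two propositions.

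\textbf{Verifying the weak FC conditions.} By Corollary~\ref{corr:weak-FC-unimodal}, UniTT run with input $\delta_1$ is $\delta_1$-correct, i.e.\ $\PP(\hJ\neq 1,\tau<\infty)\le\delta_1$. This is the correctness requirement \eqref{eq:correctness}. The same corollary gives $\PP(\tau > (T_\mu(\delta_1)+15K)/\delta_1)\le \delta_1$, which is the high-probability sample complexity requirement \eqref{eq:hp-sample-complexity} with $T^*_{\delta_1}=f(\delta_1)$.

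If one wants to re-derive that bound rather than quote it, apply Markov's inequality to Theorem~3.7 of \citet{poiani2024best}:
\[
\PP(\tau > t) \le \frac{\EE_\mu[\tau]}{t} \le \frac{T_\mu(\delta_1)+15K}{t},
\]
and take $t = (T_\mu(\delta_1)+15K)/\delta_1$. The only point needing care is that the parameters of Theorem~3.7 are set as that theorem requires for input $\delta_1$, so that both the correctness and the expectation bound hold simultaneously. The corollary already asserts this. Definition~\ref{def_main_weakFC} does not require any logarithmic dependence of $f$ on $1/\delta_1$, so the polynomial factor $1/\delta_1$ is allowed.

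\textbf{Applying the propositions.} Since $\delta_1<\frac1{4e}$ and $L\ge 4\ln(1/\delta)/\ln\frac{1}{4e\delta_1}$:
\begin{itemize}
\item Proposition~\ref{proposition_main_strong_correct}, applied with $\mathcal{A}^w=$ UniTT and $\delta_0=\delta_1$, gives $\PP(\hJ\neq 1,\tau<\infty)<\delta$ for FCW2S.
\item Proposition~\ref{proposition_main_strong_stop} gives $\PP(\tau > L\cdot f(\delta_1))<\delta$, which is exactly the second display after substituting $f$.
\end{itemize}

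The only potential obstacle is confirming that the independent copies run by FCW2S each inherit the weak guarantees. They do: each copy sees an i.i.d.\ reward stream from the same instance, because the round-robin interleaving does not change any single copy's conditional distribution. This independence is already built into the proofs of the two propositions, so nothing new is needed.
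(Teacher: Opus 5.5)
Your proposal is correct and matches the paper's proof. The paper proves this in one line: Corollary~\ref{corr:weak-FC-unimodal} makes UniTT a weak FC algorithm with $\delta_0=\delta_1$ and $f(\delta_1)=(T_\mu(\delta_1)+15K)/\delta_1$, and Propositions~\ref{proposition_main_strong_correct} and~\ref{proposition_main_strong_stop} then give the two claims.
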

\begin{proof}
    Combining Corollary~\ref{corr:weak-FC-unimodal} together with Propositions~\ref{proposition_main_strong_correct} and~\ref{proposition_main_strong_stop}. 
\end{proof}
Finally, we are ready to present the result from applying FC2FB on FCW2S(UniTT, $\delta_1$, $L$). 

\begin{corollary}
    Consider running Algorithm FC2FB with the base
 algorithm FCW2S(UniTT, $\delta_1 = \fr{1}{8e}$, $L = \lceil 4 \frac{\ln \frac{1}{\delta}}{\ln \frac{1}{4e\delta_1} } \rceil$, base failure rate $\delta_0 = 1/e$, $Q=1$ and the total budget $B$. This algorithm satisfies, for some $B_0 = \tTh(T_\mu(\delta_1) + K)$
	\begin{align*}
    B \ge B_0 \implies \PP ( \hat{J} \neq 1 ) &\leq 3 \exp( - \frac{B }{ \tTh(T_\mu(\delta_1) + K)  \ln B} ) 
	\end{align*}
\end{corollary}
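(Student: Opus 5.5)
The plan is to chain the three earlier results: Corollary~\ref{corr:weak-FC-unimodal} supplies a weak FC algorithm, the preceding proposition (via FCW2S) upgrades it to a strong FC algorithm, and Theorem~\ref{thoerem_main_FC2FB} converts that into an FB guarantee.

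\textbf{Step 1: identify UniTT as a weak FC algorithm.} Fix $\delta_1 = \fr{1}{8e} < \fr{1}{4e}$. By Corollary~\ref{corr:weak-FC-unimodal}, UniTT run with $\delta_1$ satisfies Definition~\ref{def_main_weakFC} with
\begin{align*}
  f(\delta_1) = \fr{T_\mu(\delta_1)+15K}{\delta_1} = 8e\,\bigl(T_\mu(\delta_1)+15K\bigr).
\end{align*}
This quantity does not depend on the target $\delta$.

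\textbf{Step 2: obtain the strong FC constants.} With $L=\lceil 4\ln(1/\delta)/\ln(1/(4e\delta_1))\rceil$, note that $\ln(1/(4e\delta_1))=\ln 2$. The preceding proposition (Propositions~\ref{proposition_main_strong_correct} and~\ref{proposition_main_strong_stop}) then gives $\delta$-correctness for every $\delta\in(0,1)$. It also gives
\begin{align*}
  \PP(\tau > L f(\delta_1))<\delta, \qquad L f(\delta_1) \le \fr{4 f(\delta_1)}{\ln 2}\ln(1/\delta) + f(\delta_1),
\end{align*}
where the extra $f(\delta_1)$ comes from the ceiling. Hence FCW2S(UniTT, $\delta_1$, $L$) is a strong FC algorithm in the sense of Definition~\ref{def_main_strongFC}, with
\begin{align*}
  A = \fr{4f(\delta_1)}{\ln 2} = \Th\bigl(T_\mu(\delta_1)+K\bigr), \qquad C = f(\delta_1) = \Th\bigl(T_\mu(\delta_1)+K\bigr).
\end{align*}
Inside FC2FB this algorithm is invoked with failure rate $\dt_0^{L_r}$. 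We therefore need $L$ to be recomputed from each stage's input $\delta$; this is the natural reading of the corollary.

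\textbf{Step 3: apply Theorem~\ref{thoerem_main_FC2FB}.} Take $\delta_0=1/e\le 0.5$ and $Q=1$, so $\ln(1/\delta_0)=1$. The budget condition becomes
\begin{align*}
  B \ge 2(A+C+1)\ln\bigl(2A+2(C+1)\bigr) =: B_0 = \tTh\bigl(T_\mu(\delta_1)+K\bigr).
\end{align*}
The constraint $Q\le B/2$ holds automatically once $B\ge B_0\ge 2$. The theorem then yields
\begin{align*}
  \PP(\hat J\neq 1) \le 3\exp\bigl(-B/(4+4A\log_2 B)\bigr).
\end{align*}
Since $A\ge 1$ and $B\ge 2$, we have $4+4A\log_2 B \le c\,A\ln B$ for a numerical constant $c$. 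This gives the denominator $\tTh(T_\mu(\delta_1)+K)\ln B$.

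\textbf{Main obstacle.} The key point is checking that $A$ and $C$ are genuinely independent of $\delta$. This requires $f(\delta_1)$ to be a fixed instance-dependent number, which holds because $\delta_1$ is a constant and $T_\mu(\delta_1)$ is defined in Theorem~3.7 of \citet{poiani2024best} for that fixed confidence level. The rest is bookkeeping to absorb the ceiling in $L$ into $C$ and the numerical constants into the $\tTh$.
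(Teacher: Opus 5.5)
Your proposal is correct and follows the paper's route. The corollary is obtained by chaining Corollary~\ref{corr:weak-FC-unimodal} with the FCW2S guarantees (Propositions~\ref{proposition_main_strong_correct} and~\ref{proposition_main_strong_stop}), and then with Theorem~\ref{thoerem_main_FC2FB} at $\delta_0=1/e$, $Q=1$; your bookkeeping fills in details the paper leaves implicit: $\ln(1/(4e\delta_1))=\ln 2$, $A=4f(\delta_1)/\ln 2$, $C=f(\delta_1)$ from the ceiling, $L$ recomputed from each stage's input $\delta_0^{L_r}$, and $4+4A\log_2 B\le (8/\ln 2)\,A\ln B$.
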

We show in the next proposition that $\Delta^{-2}$ is of higher order than $T_\mu(\delta_1) + K$, up to logarithmic factors, which indicates our FC2FB framework can achieve a lower FB sample complexity compared to the existing best sample complexity in the literature for unimodal bandits. The proof of Proposition~\ref{prop:unimodal-FC-FB-comparison} is in Appendix~\ref{sec:appendix-unimodal}. 
\begin{proposition}
\label{prop:unimodal-FC-FB-comparison}
Suppose the unimodal means lies in $[0,1]$, then
    \begin{align*}
        T_\mu(\delta_1) + K = \tO(\Delta^{-2})
    \end{align*}
    and there exists an instance where the above two quantities are order-wise different, i.e., $T_\mu(\delta_1) + K = o(\Delta^{-2})$.
    The quantity $T_\mu(\delta)$ is defined in their Theorem 3.7 in~\citet{poiani2024best} and $\Delta := \min_{2 \leq i \leq K} |\mu_i - \mu_{i-1}|$ as defined in~\citet{ghosh2024fixed}.
\end{proposition}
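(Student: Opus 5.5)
We need the form of $T_\mu(\delta)$ from Theorem 3.7 of \citet{poiani2024best}. Up to logarithmic factors in $K$ and $\ln(1/\delta)$, it is governed by the local characteristic time $T^*(\mu)$ of the best arm $*$, which for unit-variance Gaussians satisfies
\begin{align*}
T^*(\mu) \lesssim \textstyle\sum_{i\in\cN(*)} (\mu_*-\mu_i)^{-2}.
\end{align*}
There may also be lower-order terms polynomial in $T^*(\mu)$ and $\ln K$. Write $\tO(\cdot)$ to absorb $\ln$ and $\ln\ln$ factors, and note that $\delta_1$ is a fixed constant.

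\textbf{Step 1: the upper bound.} Every neighbor $i\in\cN(*)$ is adjacent to the peak, so $\mu_*-\mu_i = |\mu_{*}-\mu_{*\pm1}| \ge \Delta$. Since $|\cN(*)|\le 2$, this gives $T^*(\mu) \le 4\Delta^{-2}$ up to constants.

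Next, compare $K$ with $\Delta^{-2}$ using the assumption that the means lie in $[0,1]$. Unimodality means the sequence is strictly monotone on each side of the peak, with consecutive differences at least $\Delta$. Let $k_1$ and $k_2$ be the number of steps on the left and right sides. Then $k_1\Delta\le 1$ and $k_2\Delta\le 1$, so $K\le 2/\Delta+1 = O(\Delta^{-1})$. Because $\Delta\le 1$, this is at most $O(\Delta^{-2})$.

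Plugging both bounds into the expression for $T_\mu(\delta_1)$ from their Theorem 3.7 yields $T_\mu(\delta_1)+K = \tO(\Delta^{-2})$. Any lower-order polynomial terms in $T^*(\mu)$ are controlled the same way, provided they are at most linear up to logs.

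\textbf{Step 2: an instance with a strict gap.} Take $K=3$ with means $(0,\tfrac12,\tfrac12-\eps)$, so the best arm is arm 2. Then $\Delta=\eps$, while the best arm's gaps to its neighbors are $\tfrac12$ and $\eps$. This fails: $T^*$ still scales as $\eps^{-2}$.

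The small gap therefore has to sit away from the peak. Take $K=4$ with means $(0,\eps,\tfrac12,0)$, peak at arm 3. The peak's neighbors have gaps $\tfrac12-\eps$ and $\tfrac12$, so $T^*(\mu)=O(1)$. Meanwhile $\Delta = \eps$. Hence $T_\mu(\delta_1)+K = O(\polylog)$ while $\Delta^{-2}=\eps^{-2}\to\infty$, which gives $o(\Delta^{-2})$ as $\eps\to0$.

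If $K$ should grow instead, one can use a staircase far from the peak with tiny steps. This keeps $K\lesssim 1/\Delta$ and still gives $K=o(\Delta^{-2})$.

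\textbf{Main obstacle.} The hard part is confirming that the precise $T_\mu(\delta)$ in \citet{poiani2024best} depends only on the local gaps $\mu_*-\mu_i$ for $i\in\cN(*)$, through $T^*(\mu)$ or its $\beta$-version with $\beta=1/2$, plus terms polynomial in $\ln(1/\delta)$, $\ln K$ and $T^*$. It must carry no hidden dependence on non-neighbor gaps, or on $\Delta$, through its finite-time correction terms. I would first read off their definition and bound each additive term by $\mathrm{poly}(T^*(\mu),\ln K)$. Then I would check that the polynomial degree does not break the $\tO(\Delta^{-2})$ claim. If a term is, say, quadratic in $T^*$, I would fall back on the constant-$\delta_1$ regime, where $T^*=O(1)$ in the separating example, and restrict the upper-bound claim to the leading order.
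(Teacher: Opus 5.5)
You never extract the form of $T_\mu(\delta)$ from Theorem 3.7 of \citet{poiani2024best}, and Step 1 depends on the form you assume. You assume it depends only on the gaps of $\mathcal{N}(*)$, but that is not the bound the theorem provides. The paper uses $T_\mu(\delta_1)\le\max\{T_0(\delta_1),C_{\mu,1}\}$, with $\tilde T_0$ when the peak has two neighbors. With $\varepsilon=1$ and $\gamma=a_\mu/2$, this gives $T^*_{1/2}(\mu)=O(H_1)$ and $C_{\mu,1}=O(H_1\log H_1)$. Here $H_1=\sum_{i\neq i^*}\Delta_i^{-2}$ with $\Delta_i=\mu_{i^*}-\mu_i$, summed over \emph{all} suboptimal arms.

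This is exactly the hidden dependence on non-neighbor gaps you were worried about, and your argument does not cover it. Knowing $\mu_*-\mu_i\ge\Delta$ for the at most two neighbors says nothing about the other arms. The crude per-arm bound $\Delta_i\ge\Delta$ only gives $H_1\le (K-1)\Delta^{-2}$. With $K\lesssim 1/\Delta$, that is $O(\Delta^{-3})$, not $\tO(\Delta^{-2})$. Your fallback of restricting to the leading-order term would not prove the statement as written.

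The missing idea is that gaps to the peak grow linearly with index distance. On each side of $i^*$, consecutive differences have constant sign and magnitude at least $\Delta$, so $\Delta_i\ge |i-i^*|\,\Delta$. Hence $H_1\le 2\Delta^{-2}\sum_{k\ge 1}k^{-2}=O(\Delta^{-2})$. Also $K-1\le H_1$, because each $\Delta_i\le 1$ when the means lie in $[0,1]$; your bound $K\le 2/\Delta+1$ works equally well. Together these give $T_\mu(\delta_1)+K=O(H_1\log H_1)=\tO(\Delta^{-2})$.

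Your Step 2 instance $(0,\varepsilon,\tfrac12,0)$ does survive the correct form, but for a different reason than the one you give. It works because \emph{every} suboptimal arm sits at least $\tfrac12-\varepsilon$ below the peak, not just the neighbors. So $H_1=O(1)$ while $\Delta^{-2}=\varepsilon^{-2}$. This is the same mechanism as the paper's instance $(0,\varepsilon,\varepsilon+\tfrac12,\varepsilon,0)$. Your staircase variant likewise works only because the staircase lies well below the peak in value, which gives $H_1=O(K)=O(1/\Delta)$.
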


\section{Experiments}\label{section_main_experiments}

In this section, we empirically evaluate our proposed meta algorithm, FC2FB. We choose Sequential Halving (SH) \citep{karnin13almost} and its known variance-adaptive version SHVar \citep{lalitha2023fixed}, discussed in the previous section. We show two experiment settings here and include a few other instances in the appendix~\ref{section_app_experiments}. Our experiment is on a Gaussian bandit with $K$ arms. The sub optimal gap of second arm is $\Delta_2 = 0.1$ and the sub optimal gap of other arm is $\Delta_i = 0.8,\, \forall i > 2$. We uniformly sample the variances in range of $[1, 2]$. In Figure~\ref{main-figure:exp_prob_vs_budget}, we report the probability of misidentifying the best arm among $K = 32$ arms as the budget increases. As expected, our proposed algorithm FC2FB(PE-KHN) outperforms both SH and SHVar except when the budget is 1000. This could possibly result from the $\ln (B)$ in the denominator of Theorem \ref{thoerem_main_FC2FB}, which can be non-negligible at small budgets. For small budget regime $\ln (B)$ is comparable to $B$, however $B$ dominates in larger budget regime.

\begin{figure}[t!]
	\begin{center}
		{\centering
			\includegraphics[width=.45\textwidth]{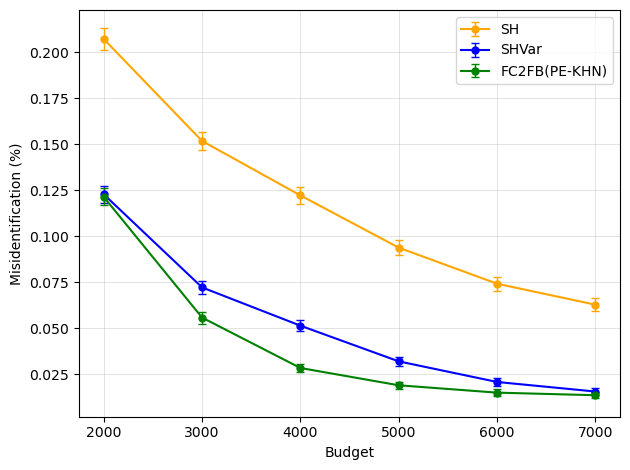}}
	\end{center}
    \vskip -1em
	\caption{Probability of misidentifying the best arm in the Gaussian bandit as a function of the budget $B$, with $K = 32$ arms. Results are averaged over $5{,}000$ trials.}
	\label{main-figure:exp_prob_vs_budget}
\end{figure}

In our next experiment, we fix the budget $B$ at $6000$ and vary the number of arms from $32$ to $128$. As shown in Figure~\ref{main-figure:exp_prob_vs_arms}, the probability of misidentifying the best arm increases as the number of arms $K$ grows. Our FC2FB(PE-KHN) demonstrates strong performance across a wide range of $K$, whereas SHVar performs well only when the number of arms is small and performs poorer as $K$ increases.

\begin{figure}[t!]
	\begin{center}
		{\centering
			\includegraphics[width=.45\textwidth,valign=t]{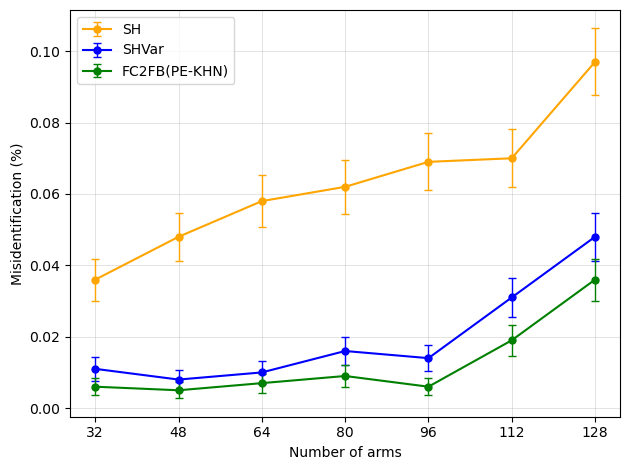}}
	\end{center}
    \vskip -1em
	\caption{Probability of misidentifying the best arm in the Gaussian bandit as a function of the number of arms. The budget is fixed at $6000$. Results are averaged over $5{,}000$ trials.
}
	\label{main-figure:exp_prob_vs_arms}
\end{figure}

\section{Conclusion}
\label{section_main_conclusion}

In this paper, we have shown that our proposed meta algorithm can take an FC algorithm and turn it into an FB algorithm with a comparable sample complexity.
This reveals a fundamental relationship between FC and FB, indicating that FB is no harder than FC, up to logarithmic factors.

Our work opens up numerous interesting research problems.
First, we have focused on the high probability guarantees only.
It would be interesting to consider other performance measures such as simple regret and see if there are similar relationships.
Second, our work assumes that there is a unique best arm.
If such a condition is not met, it is desirable to focus on returning an $\eps$-good arm. 
Since it is known that the FB setting can enjoy an $\eps$-good arm guarantee for all $\eps$ simultaneously with a single algorithm~\cite{zhao23revisiting}, it would be interesting to see if there exists a meta algorithm that can turn a fixed-confidence algorithm with an $\eps$-good arm guarantee into an FB algorithm that can have a guarantee for any $\eps$.
Finally, our reduction, by design, does not share the samples across different runs of the base FC algorithm, and thus tends to be less practical for small problem sizes.
It would be interesting to develop more practical versions.

\section*{Impact Statement}

This paper presents work whose goal is to advance the field of Machine Learning. There are many potential societal consequences of our work, none of which we feel must be specifically highlighted here.

\section*{Acknowledgments}

Kapilan Balagopalan, Yinan Li, Yao Zhao, Tuan Ngo Nguyen, Kwang-Sung Jun were supported in part by the National Science Foundation under grant CCF-2327013 and Meta Platforms, Inc.
This work was partly supported by Institute of Information \& communications Technology Planning \& Evaluation (IITP) grant funded by the Korea government (MSIT) (No.RS-2019-II191906, Artificial Intelligence Graduate School Program (POSTECH)).

%
%
%
%

\bibliographystyle{icml2026}

\bibliography{library-overleaf}

\clearpage

\newpage
\appendix
\onecolumn
\addcontentsline{toc}{section}{Appendix} 
\part{\LARGE Appendix}

\parttoc 
\clearpage

\section{Related Work}
\label{section_appendix_related}

The closest work in the literature that tries to connect FB and FC is~\citet{gabillon12aunified} that is restricted to the standard unstructured bandit problem.
This work shows that there exists a unified algorithmic framework that uses the same arm selection and return strategies for FB and FC.
However, their FB algorithm requires a very strong assumption that the learner has access to the problem-dependent complexity parameter $H_\eps = \sum_{i\ge2} \frac{1}{(\Delta_i \vee \eps)^2}$.
Furthermore, their work does not provide a blackbox reduction as we do -- instead, they provide an algorithmic template and call it a meta algorithm, which is instantiated to FB or FC.
Therefore, their work does not reveal any fundamental relationship between FB and FC that we do.

Another attempt appeared in \citet{jun16anytime}, which attempts to construct an (anytime) fixed-budget algorithm specifically based on LUCB, and whose analysis does not extend beyond LUCB. However, their approach suffers from a soundness issue (in particular their Lemma 7 and 8) that we believe is not possible to fix. Specifically, their algorithm waits until the LUCB algorithm satisfies the stopping condition and then restarts it with a smaller $\dt$, but the distribution of the stopping time (the time step satisfying the stopping condition) may not have a light enough tail to lead to an exponentially-decaying error probability. In fact, Theorem 2.5 in~\citet{balagopalan25brakebooster} showed that a certain version of LUCB's stopping time can be infinity with a constant probability. Our algorithm goes around this issue by employing a forced termination when the algorithm does not self-terminate within a predefined time horizon. Finally, our strategy is better than \citet{jun16anytime} in that it works for any strong FC algorithm rather than being specific to LUCB. Furthermore, it works for the generic problem of BAI with structure (See Section~\ref{sec:prelim}), whereas \citet{jun16anytime} only considers the unstructured version.

To compare UCB-E~\citep{audibert10best} with ours, UCB-E requires the user to input a parameter $a$ whose efficient range depends on the complexity $H$ (sum of inverse squared gaps).
In reality, $H$ is almost never available.
Our FC2FB algorithm's input is just the budget $B$ and a base failure rate $\delta_0$ (e.g., can easily set to $1/e$ with no harm) -- in particular, it does not require as input the problem complexity.
While UCB-E has a tighter bound than ours if one knows $H$, our result is significant because it achieves comparable performance (up to log factors) without knowing $H$, solving the practical issue of unknown instance hardness.

Recent work has clarified why fixed-budget BAI should not be viewed as a direct analogue of fixed-confidence BAI. \citet{degenne2023existence} formalizes the notion of an asymptotic fixed-budget complexity, motivated by \citet{qin2022open}’s COLT open problem, and studies whether a sufficiently large class of fixed-budget algorithms can admit an instance-wise complexity in the same spirit as fixed-confidence BAI. In fixed confidence, Track-and-Stop-type algorithms~\citep[e.g.,][]{garivier16optimal} can match an instance-dependent complexity 
asymptotically, and the optimal static-proportion oracle plays a central role. In fixed budget, however, \citet{degenne2023existence} shows that if an algorithm class containing all static-proportion algorithms admits such a complexity, then it must coincide with the oracle difficulty of static proportions; nevertheless, such a complexity does not exist for several basic tasks, including Gaussian BAI for sufficiently large $K$ and Bernoulli BAI already with two arms. Thus, unlike the fixed-confidence setting, no single fixed-budget algorithm can be uniformly exponent-optimal over all instances in these settings. \citet{degenne2023existence} also identifies remaining open cases, including thresholding bandits and Gaussian BAI for small $K>2$. 

\citet{komiyama2022minimax} study a complementary minimax formulation of fixed-budget BAI. They characterize minimax exponent rates through optimization problems, introduce the rates $R^{\text{go}}$ and $R_\infty^{\text{go}}$, and propose $R^{\text{go}}$-tracking as well as delayed optimal tracking (DOT). DOT asymptotically attains $R^{\text{go}}$ for Bernoulli and Gaussian arms, but computing or approximating the required optimization is difficult because it is over high-dimensional allocation functions; the paper explicitly leaves the existence of a computationally tractable, provably optimal algorithm as an important open problem. 

Our work is orthogonal to these asymptotic exponent-optimality questions. 
We are not concerned with identifying the sharp fixed-budget error exponent or optimizing constant and logarithmic factors in the exponent. 
Rather, our results compare non-asymptotic fixed-confidence and fixed-budget sample complexities, up to logarithmic factors.
Our result is also consistent with prior results on fixed-confidence and fixed-budget sample complexities in standard $K$-armed BAI. \citet{carpentier16tight} show that fixed-budget algorithms can require an additional logarithmic factor in sample complexity compared with fixed-confidence algorithms on certain instances. 
The logarithmic overhead in FC2FB means that the sharp fixed-budget exponent problem, including the tractable attainment of the minimax rates studied by \citet{komiyama2022minimax} and the existence/nonexistence issues studied by \citet{degenne2023existence}, remains open.

\section{A Meta-Algorithm for Fixed-Budget Conversion: FC2FB} \label{section_app_FC2FB}

In this section, we prove Theorem \ref{thoerem_main_FC2FB}. Algorithm \ref{alg:fc2fb} is designed such that the budget $B$ is strategically split into each stage budget $B^{'}$, with the strong fixed-confidence algorithm $\mathcal{A}$ executed at each stage using an exponentially increasing error rate. 
The intuition underlying this construction is as follows: if a stage with low $\delta$ self-terminates, the probability of error is correspondingly low. 
Conversely, if, only a stage with higher $\delta$ self-terminates, the correctness of the output cannot be guaranteed; however, given sufficient budget, it is improbable that all preceding stages with lower $\delta$ values failed to self-terminate. 

\begin{definition}\label{def_app_rstar}
	Let $B^{'}$ denotes the per stage budget and $R$ denotes the number of stages in Algorithm \ref{alg:fc2fb}, for some $\delta_0 < 1$, define
	\begin{align*}
		r^* = \min \{ r \ge 1: B^{'} \geq T^*_{\delta_0^{L_r}} \}.
	\end{align*}
The threshold stage $r^*$ is defined as the point at which the allocated stage budget $B^{'}$ first exceeds $T_{\delta_0^{L_{}r^*}}$.  Consequently, for stage $r^*$ and all subsequent stages, the allocated budget surpasses the high-probability sample complexity associated with each respective stage. 
\end{definition}
\begin{assumption} \label{assump_app_budgetsuff}
	The total budget $B$ is large enough such that, for some $\delta_0 < 1$,
	\begin{align*}
	   B \geq 2\left( A\ln \left(\frac{1}{\delta_0}\right) + \left(C+ 1\right) \right)\ln \left( 2\frac{A }{Q} \ln \left(\frac{1}{\delta_0}\right) + \frac{2(C+1)}{Q} \right).
	\end{align*}
\end{assumption}

\begin{proposition} \label{prop_app_Rrstar}
Under Assumption~\ref{assump_app_budgetsuff}, we have
	\begin{align*}
		r^* \leq R 
	\end{align*}
    where $R$ is the number of stages (defined in Algorithm~\ref{alg:fc2fb}).
\end{proposition}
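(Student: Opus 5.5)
Since $r^*$ is the first stage $r$ with $B'\ge T^*_{\delta_0^{L_r}}$, it suffices to show that the final stage $r=R$ already satisfies this inequality. Then the set defining $r^*$ is nonempty and its minimum is at most $R$. At stage $R$ we have $L_R=2^{0}=1$, so the input failure rate is $\delta_0$, and by Definition~\ref{def_main_strongFC},
\[
T^*_{\delta_0}=A\ln(1/\delta_0)+C .
\]
The goal is therefore to show
\[
B'=\lfloor B/R\rfloor \ge A\ln(1/\delta_0)+C .
\]
Because $\lfloor x\rfloor\ge x-1$, it is enough to prove $B/R\ge A\ln(1/\delta_0)+C+1$. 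This is where the ``$C+1$'' in Assumption~\ref{assump_app_budgetsuff} comes from.

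Write $M:=A\ln(1/\delta_0)+C+1$. Since $R=\lfloor\log_2(B/Q)\rfloor\le \log_2(B/Q)$ (and $R\ge1$ because $Q\le B/2$), the target becomes
\[
B\ge M\log_2(B/Q).
\]
Set $x:=B/Q$ and $m:=M/Q$. The inequality reads $x\ge m\log_2 x$, and Assumption~\ref{assump_app_budgetsuff} reads $x\ge 2m\ln(2m)$. I will prove the standard implication $x\ge 2m\ln(2m)\Rightarrow x\ge m\log_2 x$ by a monotonicity argument. The function $g(x)=x-m\log_2 x$ has $g'(x)=1-m/(x\ln 2)$, so $g$ is increasing for $x\ge m/\ln 2$. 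It therefore suffices to check two things: that $x_0:=2m\ln(2m)$ is at least $m/\ln2$, and that $g(x_0)\ge0$.

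The step I expect to be the main obstacle is making these two checks hold with the exact constants given, because the base-2 logarithm costs a factor $1/\ln 2\approx1.44$.
\begin{itemize}
\item For $g(x_0)\ge 0$: the condition is $2\ln(2m)\ge \log_2(2m\ln(2m))$, i.e.\ $2\ln 2\cdot\ln(2m)\ge \ln(2m)+\ln\ln(2m)$. This reduces to $(2\ln2-1)\ln(2m)\ge\ln\ln(2m)$, which holds for all $m$ because $0.386\,y\ge\ln y$ for every $y>0$ (the maximum of $\ln y/y$ is $1/e\approx0.368$).
\item For $x_0\ge m/\ln 2$: this needs $\ln(2m)\ge 1/(2\ln 2)$, i.e.\ $m\gtrsim 1.03$. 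This can fail when $M<Q$. In that regime I will argue directly: $B\ge 2Q$ and $R\le\log_2(B/Q)$, so $B/R\ge Q/(\log_2(B/Q)/(B/Q))\ge Q\ge M$, using $\log_2 y\le y$. So the case is handled separately.
\end{itemize}
Combining the two cases gives $B/R\ge M$, hence $B'\ge M-1=T^*_{\delta_0^{L_R}}$, and so $r^*\le R$.
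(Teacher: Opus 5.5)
There is a small but genuine hole in your case split. Your first case needs $x_0=2m\ln(2m)\ge m/\ln 2$, i.e.\ $m\ge \tfrac12 e^{1/(2\ln 2)}\approx 1.0286$, as you computed yourself. Your second case uses $\log_2 y\le y$, which only gives $B/R\ge Q$, so it covers only $m\le 1$; the sentence ``this can fail when $M<Q$'' understates the failure region. For $1<m<1.0286$ neither argument applies. There $g$ is decreasing on $[x_0, m/\ln 2]$, so $g(x_0)\ge 0$ says nothing about $g(x)$ on that interval, and $Q\ge M$ is false.

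The patch is a sharper bound in the second case. One option is $\ln y\le y/e$, which gives $\log_2 y\le y/(e\ln 2)$ and hence $B/R\ge e\ln 2\cdot Q\approx 1.88\,Q$. Alternatively, since $R$ is an integer, $\lfloor\log_2 y\rfloor\le y/2$ for $y\ge 2$, which gives $B/R\ge 2Q$. Either bound overlaps your first case. Equivalently, $\min_x g(x)=\frac{m}{\ln 2}\bigl(1-\ln\frac{m}{\ln 2}\bigr)\ge 0$ for all $m\le e\ln 2$.

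With that fix your argument is complete, and it differs from the paper's route. The paper argues by contraposition: from $r^*>R$ it gets $B<R\,M$, then replaces $R$ by $\ln(B/Q)$, and applies $\ln x\le x-1$ to $\ln\frac{X}{2Z}$ to reach $B<2M\ln(2M/Q)$. That replacement of $R$ by $\ln(B/Q)$ is not justified, since $R=\lfloor\log_2(B/Q)\rfloor$ can exceed $\ln(B/Q)$; for example, $B/Q=8$ gives $R=3>\ln 8$. Redone with the correct base, the same chain only yields the larger threshold $\frac{2M}{\ln 2}\ln\frac{2M}{Q\ln 2}$. Your route handles the base-2 logarithm directly: the monotonicity analysis of $g(x)=x-m\log_2 x$, together with $(2\ln 2-1)y\ge\ln y$ for all $y>0$ (true because $2\ln 2-1>1/e$), recovers the stated constant.
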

\begin{proof}
    This is a direct consequence of Lemma~\ref{lemma_app_budgetsuff}.
\end{proof}
The proposition above establishes that the event in which the allocated budget exceeds the high-probability sample complexity occurs at some stage within $R$, and persists for all subsequent stages thereafter.	

\begin{figure*}[h!]
	\begin{center}
		{\centering
			\includegraphics[width=1\textwidth,valign=t]{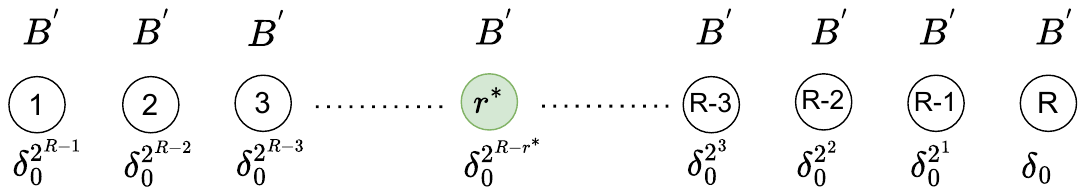}}
	\end{center}
	\label{figure_app_fctfb}
	\vspace{-50em}
	\caption{Execution routine of Algorithm~\ref{alg:fc2fb}}
\end{figure*}

\newtheorem*{thoerem_app_FC2FB}{Theorem \ref{thoerem_main_FC2FB}}
\begin{thoerem_app_FC2FB} [Error probability bound of FC2FB] \label{thoerem_app_FC2FB} 
	For a strong fixed confidence algorithm $\mathcal{A}$, under Assumption \ref{assump_app_budgetsuff}, for a given budget $B$, with $\delta_0 \leq 0.5$, FC2FB satisfies,
\begin{align*}
	\PP \left( \hat{J} \neq 1 \right) &\leq 3 \exp\left( -\frac{B }{\frac{4 Q}{ \ln(\frac{1}{\delta_0})}  + 4 \log_2\left( \frac{B}{Q} \right) A }  \right).
\end{align*}
\end{thoerem_app_FC2FB}

\begin{proof}
	Let $\hat{r}$ be the random variable indicating the stage at the end of which the algorithm self-terminates and returns its output. 
    We set $\hr = -1$ if the algorithm does not self-terminate prior to exhausting the budget $B$.
	\begin{align*}
		\PP \left( \hat{J} \neq 1 \right) 
        &= \PP\left( \hat{J} \neq 1 ,\left( (\hat{r} = -1) \vee ( \hat{r} > r^*) \right)\right ) +\PP\left( \hat{J} \neq 1 , 1 \leq\hat{r} \leq r^* \right ). \tag{law of total probability}\\
	\end{align*}
	
	The first term, where $\left( (\hat{r} = -1) \vee (\hat{r} > r^*) \right)$, indicates that stage $r^*$ did not self-terminate (this is true since by Proposition \ref{prop_app_Rrstar}, $r^* \leq R$), despite the fact that the allocated budget of that stage ($B^{'}$) exceeds the $\delta_0^{L_{r^*}}\text{-stopping time.}$ 
	\begin{align*}
		\PP\left( \hat{J} \neq 1,\left( (\hat{r} = -1) \vee ( \hat{r} > r^*) \right) \right )  
        &\leq \PP\left( (\hat{r} = -1) \vee ( \hat{r} > r^*)  \right )  
    \\  &\leq \PP\left( \hat{J}_{r^*} = 0 \right )
    \\	&\leq \delta_0^{L_{r^*}} \tag{stopping time guarantee, Definition \ref{def_main_strongFC}}\\
		&= \delta_0^{2^{R-r^*}} \\
		&= \exp\left( -2^{R-r^*} \ln(\frac{1}{\delta_0}) \right).
	\end{align*}
	
Furthermore, if $\hat{r} \leq r^*$, we can leverage the correctness guarantee of $\mathcal{A}$ to establish that the probability of misidentification is correspondingly low due to the small $\delta$ value.

	\begin{align*}
		\PP\left( \hat{J} \neq 1 , 1 \leq\hat{r} \leq r^* \right )  
        &\le \PP\left(1 \leq\hat{r} \leq r^* \right )  
      \\&= \sum_{r= 1}^{r^*}\PP\left( \hat{J} > 1 , \hat{r} = r \right )\tag{law of total probability}\\
		& = \sum_{r= 1}^{r^*} \PP\left( \hat{J}_r > 1 \right ) \\
		&\leq  \sum_{r= 1}^{r^*}  \delta_0^{2^{R-r}} \tag{correctness, Definition \ref{def_main_strongFC}} \\
		&= \delta_0^{2^{R-r^*}} + \delta_0^{2^{R-r^* + 1}} + \delta_0^{2^{R-r^* + 2}} +  \cdots + \delta_0^{2^{R- r^* + (r^* - 1)}}   \\
		&\leq \delta_0^{2^{R-r^*}} +  \delta_0^{2^{R-r^* + 1}} + \delta_0^{2^{R-r^* + 2}} +  \cdots  \tag{infinite sum} \\
		&\leq \delta_0^{2^{R-r^*}} +  \delta_0^{(2^{R-r^*} + 1) } + \delta_0^{(2^{R-r^*} + 2 )} +  \cdots  \tag{additional terms} \\
		&\leq \frac{\delta_0^{2^{R-r^*}}}{1 - \delta_0^{2^{R-r^*}}} \tag{geometric sum} \\
		&\leq 2\delta_0^{2^{R-r^*}} \tag{$\delta_0 \leq  0.5$} \\
		&= 2\exp\left( -2^{R-r^*} \ln(\frac{1}{\delta_0}) \right).
	\end{align*}
	
It remains to lower bound $R-r^*$.
	
	By Definition $\ref{def_app_rstar}$ and Proposition \ref{prop_app_Rrstar},
	
	\begin{align*}
		1 \leq r^* \leq R
	\end{align*}
	Case 1: $r^* = 1$
	
	\begin{align*}
		r^* = 1 \implies R- r^* = R -1.
	\end{align*}
	
	Thus, we want to bound $R$.
	\begin{align*}
		R &= \lfloor \log_2\left( \frac{B}{Q} \right)\rfloor \\
		\log_2\left( \frac{B}{Q} \right) 	\geq R &\geq  \log_2\left( \frac{B}{Q} \right) - 1 \\
		&= \log_2\left( \frac{B}{2Q} \right) \\
		2^{R-1} &\geq \frac{B}{4Q} \\
		2^{R-r^*} &\geq \frac{B}{4Q}.
	\end{align*}
	
	Case 2: $ 1 < r^* \leq R$

	\begin{align*}
		B^{'} &\leq A \ln \frac{1}{\delta_0^{2^{R - r^*+ 1}}} + C  \tag{Definition \ref{def_app_rstar}}\\
		2^{R - r^*} &\geq \frac{B^{'} -C }{2A \ln(\frac{1}{\delta_0})} \\
		 &= \frac{\lfloor \frac{B}{R}\rfloor -C }{2A \ln(\frac{1}{\delta_0})} \\
		&\geq \frac{\frac{B}{R} -1 -C }{2A \ln(\frac{1}{\delta_0})}\\
		&= \frac{B -R(C+1) }{2RA \ln(\frac{1}{\delta_0})}\\
		&\geq \frac{B- R (C+ 1) }{2 \log_2\left( \frac{B}{Q} \right) A \ln(\frac{1}{\delta_0})}  \\
		&\geq \frac{B }{4 \log_2\left( \frac{B}{Q} \right) A \ln(\frac{1}{\delta_0})}. \tag{$ B \geq 2R\cdot (C+ 1) $, Lemma \ref{lemma_app_budgetsuff}} \\
	\end{align*}
	
	Hence,
	
	\begin{align*}
		2^{R-r^*} &\geq \min\bigg\{\frac{B}{4Q} , \frac{B }{4 \log_2\left( \frac{B}{Q} \right) A \ln(\frac{1}{\delta_0})} \bigg\}\\
		&= \frac{B }{\max\bigg\{4Q , 4\log_2\left( \frac{B}{Q} \right) A \ln(\frac{1}{\delta_0})\bigg\} } \\
		&\geq \frac{B }{4Q  + 4 \log_2\left( \frac{B}{Q} \right) A \ln(\frac{1}{\delta_0}) }.
	\end{align*}

	
	Therefore,
	\begin{align*}
		\PP \left( \hat{J} \neq 1 \right) &\leq 3 \exp\left( -2^{R-r^*} \ln(\frac{1}{\delta_0}) \right)\\
		&\leq 3 \exp\left( -\frac{B }{\frac{4 Q}{ \ln(\frac{1}{\delta_0})}  + 4 A \log_2\left( \frac{B}{Q} \right)  }  \right).
	\end{align*}
	This completes the proof.
\end{proof}

\section{From Weak to Strong Fixed-Confidence Algorithms: FCW2S}\label{section_app_weak2strong}

Before proceeding to the analysis, we clarify an ambiguity in Algorithm \ref{alg_main_FCW2S}. Once an instance $\mathcal{A}^{w}_{\ell}$ self-terminates, no further samples are allocated to that instance. 
The algorithm maintains a set $V \subseteq [L]$, which is initialized as $V \gets \emptyset$, that tracks the indices of terminated trials. 
This detail is not explicitly stated in the algorithm.


\newtheorem*{proposition_main_strong_correct}{Proposition~\ref{proposition_main_strong_correct}}
\begin{proposition_main_strong_correct}[Correctness of $FCW2S(\delta)$]
	For any error rate $\delta \in (0,1)$, for any weak fixed-confidence algorithm $\mathcal{A}^w$ that satisfies Definition~\ref{def_main_weakFC} with $\delta_0 < \fr 1{4e}$, running the fixed confidence algorithm FCW2S with $L \geq 4 \frac{\ln \frac{1}{\delta}}{\ln \frac{1}{4e\delta_0} }$ satisfies, ($1$ denotes the optimal arm and $\tau$ denotes the stopping time)
	\begin{align*}
		\PP\left(\hat{J} \neq 1, \tau < \infty \right) < \delta.
	\end{align*}
\end{proposition_main_strong_correct}
\begin{proof}
    Let $\cE$ be the error event for the FCW2S algorithm, so $\cE := \cbr{\hJ \neq 1, \tau < \infty}$. 

    By Definition \ref{def_main_weakFC}, for any $\ell \leq L$, the probability of incorrect termination satisfies: 
    \begin{align*}
    	\PP \left( \hat{J}_{\ell} > 1 \right) \leq \delta_0 ~.
        \tag{weak correctness guarantee}
    \end{align*}

    The FCW2S algorithm terminates at time $\tau$, when at least half of the trials have terminated, i.e., $|V| \geq \fr L2$. It produces an error if, among this set $V$ of terminated trials, the number of incorrect votes is greater than or equal to the number of correct votes. Let $N_V$ be the number of incorrect votes. The condition for an error implies $2 N_V \geq |V|$. 

    Given that the algorithm only stops when $|V| \geq \fr L2$, a necessary condition for an error to occur is: \[
    N_V \geq \fr{|V|}2 \geq \fr L4 ~.
    \]
    The set of incorrect votes from the terminated set is a subset of incorrectly terminated trials out of all $L$ trials. Let $N$ be this total count, then $N \geq N_V$. Therefore, a necessary condition for an error is that at least $\fr L4$ of the total $L$ trials terminated incorrectly. We bound $\PP(\cE)$ as follows: 
    \begin{align*}
        \PP(\cE) &\leq \PP(N \geq \fr L4)
        \\&\leq \exp\left( -\frac{L}{4} \ln \left( \frac{1}{4 e\delta_0}\right)\right) \tag{Lemma \ref{lem:alphafractionNew}, $\delta_0 < \fr 1{4e}$} 
        \\&= \left( 4 e \delta_0 \right)^{\frac{L}{4}} 
        \\&\leq \delta. \tag{$L \geq 4 \frac{\ln \frac{1}{\delta}}{\ln \frac{1}{4e\delta_0} }$}
    \end{align*}
\end{proof}

\newtheorem*{proposition_main_strong_stop}{Proposition~\ref{proposition_main_strong_stop}}
\begin{proposition_main_strong_stop}[Strong stopping time guarantee of $FCW2S(\delta)$] 
	For any error rate $\delta \in (0,1)$,  for any weak fixed-confidence algorithm $\mathcal{A}^w$ that satisfies Definition~\ref{def_main_weakFC} with $\delta_0 < \fr 1{4e}$, running the fixed confidence algorithm FCW2S with $L \geq 4 \frac{\ln \frac{1}{\delta}}{\ln \frac{1}{4e\delta_0} }$ satisfies,  ($\tau$ denotes the stopping time)
    \begin{align*}
		\PP\left( \tau >   L \cdot f(\delta_0) \right) < \delta.
	\end{align*}
\end{proposition_main_strong_stop}
\begin{proof}
Let $T^*_\delta : = L \cdot f(\delta_0)$.

Then, at time $T^*_\delta$, each not-yet-terminated trial $\mathcal{A}^w_{\ell}$ has been allocated at least $ f(\delta_0)$ samples.

By Definition \ref{def_main_weakFC}, for any $\ell \leq L$, the probability a trial has not terminated by $f(\delta_0)$ samples satisfies, 
\begin{align*}
	\PP \left( \hat{J}_{\ell} = 0 \right) \leq \delta_0. \tag{weak stopping time guarantee}
\end{align*}

The FCW2S algorithm stops at $\tau \leq T^*_\delta$ iff at least half of the $L$ trials have terminated by time $T^*_\delta$. Therefore, the event $\cbr{\tau > T^*_\delta}$ is the same as at time $T^*_\delta$, at least half of the trials are still running. Let $S$ be the total number of learners who have not terminated by $T^*_\delta$. 
We bound $\PP(\tau > T^*_\delta)$ as follows:
    \begin{align*}
        \PP(\tau > T^*_\delta) 
        &= \PP(S \geq \fr L2)
        \\&\leq \exp\left( -\frac{L}{2} \ln \left( \frac{1}{2 e\delta_0}\right)\right) \tag{Lemma \ref{lem:alphafractionNew}, $\delta_0 < \fr 1{4e}$} 
        \\&= \left( 2 e \delta_0 \right)^{\frac{L}{2}} 
        \\&\leq \delta. \tag{$L \geq 4 \frac{\ln \frac{1}{\delta}}{\ln \frac{1}{4e\delta_0} }$}
    \end{align*}
\end{proof}

\section{A Meta-Algorithm for Anytime Conversion: FC2AT} \label{section_app_fc2abytimemeta}

In this section, we present an anytime meta-algorithm that builds upon FC2FB using the doubling trick. Algorithm \ref{alg_app_FC2AT} (FC2AT) invokes FC2FB in phases with exponentially increasing budgets allocated to each phase. This algorithm does not require knowledge of the total budget and provides an exponential error probability bound.

\begin{algorithm}[h]
	\caption{FC2AT }
	\label{alg_app_FC2AT} 
	\begin{algorithmic}
		\STATE{\textbf{Input:} Algorithm $\mathcal{A}$, base failure rate $\delta_0$, base budget $Q$ }
		\STATE $\hat{J}_0 \gets \text{ any arbitrary arm}$, $t \gets 0$ 
		\FOR {each phase $i = 1, 2, \cdots$} 
		\STATE $T_i := 2^{i}\cdot Q$ 
		\STATE Initialize $\mathcal{A}_i =  FC2FB \left(T_i, \delta_0, \mathcal{A}, Q\right)$  
		\FOR {$s = 1, 2, \cdots, T_i$}
		\STATE $t \gets t + 1$
		\STATE pull arm $I_t$ recommended by $\mathcal{A}_i$ at its local time step $s$.
		\IF{$s  \neq  T_i$  }
		\STATE set $\hat{J}_t \gets \hat{J}$
		\ENDIF
		\ENDFOR
		\STATE $\hat{J} \gets $ the estimated best arm from $\mathcal{A}_i$: $\hat{J}_i$
		\STATE $\hat{J}_t \gets \hat{J}$
		\ENDFOR
	\end{algorithmic}
\end{algorithm}

\begin{definition} \label{def_opt_round_fc2at}
	Let $B^* := \max \Biggl\{2\left( A\ln \left(\frac{1}{\delta_0}\right) + \left(C+ 1\right) \right)\ln \left( 2\frac{A }{Q} \ln \left(\frac{1}{\delta_0}\right) + \frac{2(C+1)}{Q} \right), 2Q \Biggr\}$. Define the optimal phase as
	\begin{align*}
		i^* := \min \bigg\{i \geq 1: T_i \geq  B^* \bigg\} \tag{where $T_i = 2^{i}\cdot Q$ }.
	\end{align*}
	Index $i^*$ is the phase in which the allocated budget for a phase exceeds the budget requirement of FC2FB instance, as specified in Theorem \ref{thoerem_main_FC2FB}.
\end{definition}

\begin{definition} \label{def_final_round}
	For any $T \geq \max \bigg\{ \left( 4B^* - 2 Q \right) , 2 Q \bigg\}$, define the final phase as
	\begin{align*}
		I_f := \max \bigg\{k \geq 1: \sum_{i=1}^{k} T_i \leq T \bigg\}. \tag{where $T_i = 2^{i}\cdot Q$ }
	\end{align*}
\end{definition}
For any time $T \geq \max \bigg\{ \left( 4B^* - 2 Q \right) , 2 Q \bigg\}$, $I_f$ denote the index of the last phase that completed. Since $T \geq 2Q \implies T \geq T_1$, $I_f$ is well defined.

\begin{proposition} \label{proposition_final_round_1}
	For any $T \geq \max \bigg\{ \left( 4B^* - 2 Q \right) , 2 Q \bigg\}$, 
	\begin{align*}
		T_{I_f} \geq B^*. \tag{where $T_{I_f} = 2^{I_f}\cdot Q$}
	\end{align*}
	\begin{proof}
		\begin{align*}
			\sum_{i=1}^{i^*} T_i &= \frac{T_1 \cdot 2^{i^*}  - T_1 }{2 - 1}  \\
			&=  2Q \cdot 2^{i^*}  - 2Q  \\
			&=  4 Q \cdot 2^{i^* - 1}  - 2Q.
		\end{align*}
	Case 1: $i^* = 1$,
		\begin{align*}
			\sum_{i=1}^{i^*} T_i &=  4 Q \cdot 2^{i^* - 1}  - 2Q \\
			&=  4Q  - 2Q \\
			&= 2Q\\
			&\leq T.
		\end{align*}
	
	Case 2: $i^* > 1$,
	\begin{align*}
		\sum_{i=1}^{i^*} T_i  &=  4 Q \cdot 2^{i^* - 1}  - 2Q \\
		&= 4T_{i^* - 1} - 2Q  \\
		&\leq  4B^* - 2Q  \tag{ Definition \ref{def_opt_round_fc2at}} \\
		&\leq T.
	\end{align*}
	
	Hence,
	\begin{align*}
		T &\geq \sum_{i=1}^{i^*} T_i.
	\end{align*}
		Therefore, 
		\begin{align}
			I_f \geq i^*.
		\end{align} 
		Subsequently,
		\begin{align}
			T_{I_f} \geq B^*. \tag{Definition \ref{def_opt_round_fc2at}}
		\end{align}
	\end{proof}
\end{proposition}

\begin{proposition} \label{proposition_final_round}
	For any $T \geq \max \bigg\{ \left( 4B^* - 2 Q \right) , 2 Q \bigg\}$,
	\begin{align*}
		T_{I_f}   \geq \frac{T}{4}.
	\end{align*}
\end{proposition}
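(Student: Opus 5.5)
The plan is to exploit the maximality of $I_f$ in Definition~\ref{def_final_round} together with the closed form for the geometric sum of phase budgets. Since $T_i = 2^i Q$, we have
\begin{align*}
\textstyle\sum_{i=1}^{k} T_i = 2Q(2^k - 1) = 2T_k - 2Q.
\end{align*}
By the definition of $I_f$ as the largest $k$ with $\sum_{i=1}^k T_i \le T$, the next phase does not fit:
\begin{align*}
T < \textstyle\sum_{i=1}^{I_f+1} T_i = 2T_{I_f+1} - 2Q = 4T_{I_f} - 2Q \le 4T_{I_f}.
\end{align*}
This gives $T_{I_f} > (T+2Q)/4 \ge T/4$ directly.

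The order of steps is as follows. First, note that $I_f$ is well defined and finite: it is at least $1$ because $T \ge 2Q = T_1$, and the partial sums are unbounded, so the maximum exists. Second, record the geometric sum identity above. Third, invoke the maximality of $I_f$ to get the strict inequality with the $(I_f+1)$-th partial sum. Fourth, rearrange using $Q \ge 0$.

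There is no real obstacle here; the only care needed is the well-definedness of $I_f$, which is exactly what the assumption $T \ge 2Q$ provides. The condition $T \ge 4B^* - 2Q$ is not needed for this particular bound. It was used only in Proposition~\ref{proposition_final_round_1}, and the two results are presumably combined later so that phase $I_f$ both meets the budget requirement of Theorem~\ref{thoerem_main_FC2FB} and has budget proportional to $T$.
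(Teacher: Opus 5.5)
Your proof is correct and follows the paper's argument. The paper also uses maximality of $I_f$ to get $\sum_{i=1}^{I_f+1} T_i \ge T$, evaluates the geometric sum as $2^{I_f+2}Q - 2Q$, and drops the $-2Q$ to conclude $4T_{I_f} \ge T$ (via an unnecessary detour through $I_f \ge \log(T/(4Q))$); your remarks that well-definedness of $I_f$ comes from $T \ge 2Q$ (with $Q>0$ making the partial sums unbounded) and that $T \ge 4B^*-2Q$ is unused here are accurate.
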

\begin{proof}
	\begin{align*} 
		&\sum_{i=1}^{I_f + 1}  T_i \geq T  \\
		\stackrel{}{\Rightarrow}\,&  \fr{2^{I_f + 1}-  1}{2 -1}T_1 \geq T  \\
		\stackrel{}{\Rightarrow}\,& 2^{I_f + 2 }Q - 2 \cdot Q \geq T  \\
		\stackrel{}{\Rightarrow}\,&  2^{I_f + 2 }Q \geq T  \\
		\stackrel{}{\Rightarrow}\,& I_f \geq \log\left(\frac{T}{4 \cdot Q}\right).
	\end{align*}
	
	Therefore,
	\begin{align*}
		T_{I_f}  &= 2^{I_f} \cdot Q  \geq \frac{T}{4}.
	\end{align*}
\end{proof}

\begin{theorem} [Error probability bound for FC2AT] \label{thoerem_main_FC2AT} 
	For a strong fixed-confidence algorithm $\mathcal{A}$, let $T \geq \max \bigg\{ \left( 4B^* - 2 Q \right) , 2 Q \bigg\}$, FC2AT satisfies,
	\begin{align*}
	\PP\left( \hat{J}_T \neq 1 \right)  &\leq 3 \exp \left( -   \frac{ T   }{\frac{16 Q}{ \ln(\frac{1}{\delta_0})}  + 16 \log_2\left( \frac{T}{Q} \right) A } \right). 
\end{align*}
\end{theorem}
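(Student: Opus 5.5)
The plan is to reduce the anytime statement at time $T$ to the fixed-budget guarantee of Theorem~\ref{thoerem_main_FC2FB}, applied to the last completed phase $I_f$. By construction of FC2AT, the recommendation $\hat{J}_T$ at time $T$ is the output $\hat{J}_{I_f}$ of the FC2FB instance run in phase $I_f$. If $T$ is itself the last step of phase $I_f$, this is explicit. Otherwise $T$ lies strictly inside phase $I_f+1$, and the algorithm keeps reporting the most recently finalized output, which is again $\hat{J}_{I_f}$. Since each phase runs a fresh FC2FB instance on fresh samples with the deterministic budget $T_{I_f}=2^{I_f}Q$, it suffices to bound $\PP(\hat{J}_{I_f}\neq 1)$. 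Here $I_f$ is a deterministic function of $T$ and $Q$, so no conditioning issues arise.

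First, I will check that the phase-$I_f$ instance meets the hypotheses of Theorem~\ref{thoerem_main_FC2FB} with budget $B=T_{I_f}$.
\begin{itemize}
\item By Proposition~\ref{proposition_final_round_1}, $T_{I_f}\ge B^*$. Since $B^*$ is the maximum of the Assumption~\ref{assump_app_budgetsuff} threshold and $2Q$, the budget condition holds, and so does $Q\le T_{I_f}/2$.
\item The condition $\delta_0\le 0.5$ is inherited from the input.
\end{itemize}
The theorem then gives
\[
\PP(\hat{J}_T\neq 1)\le 3\exp\Bigl(-\frac{T_{I_f}}{\frac{4Q}{\ln(1/\delta_0)}+4A\log_2(T_{I_f}/Q)}\Bigr).
\]

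Second, I convert this bound into a function of $T$. The exponent's magnitude $x\mapsto x/(a+b\log_2(x/Q))$ is increasing in $x$ once $x\ge Q$: its derivative has numerator $a+b\log_2(x/Q)-b/\ln 2$, and one needs this to be nonnegative. I will bound it directly instead of relying on monotonicity, since that avoids the sign issue.
\begin{itemize}
\item In the numerator, Proposition~\ref{proposition_final_round} gives $T_{I_f}\ge T/4$.
\item In the denominator, $T_{I_f}\le T$ because the phases up to $I_f$ fit inside $T$. Hence $\log_2(T_{I_f}/Q)\le\log_2(T/Q)$.
\end{itemize}
Combining the two gives an exponent of at least
\[
\frac{T/4}{\frac{4Q}{\ln(1/\delta_0)}+4A\log_2(T/Q)}=\frac{T}{\frac{16Q}{\ln(1/\delta_0)}+16A\log_2(T/Q)},
\]
which is the claimed bound.

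The main obstacle is not the calculation but pinning down exactly which recommendation FC2AT reports at an arbitrary time $T$, given the index bookkeeping in Algorithm~\ref{alg_app_FC2AT}. I must confirm that the answer reported mid-phase is the previous phase's finalized output, and not an arbitrary arm or the in-progress instance's output. If it instead reflected phase $I_f-1$, I would redo the step above with $T_{I_f-1}\ge T/8$, and the constants would change. Independence across phases is not needed for this argument, since only one phase's output matters.
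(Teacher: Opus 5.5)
Your proposal is correct and follows essentially the same route as the paper: identify $\hat J_T$ with $\hat J_{I_f}$, apply Theorem~\ref{thoerem_main_FC2FB} with budget $T_{I_f}\ge B^*$ (Proposition~\ref{proposition_final_round_1}), then replace $\log_2(T_{I_f}/Q)$ by $\log_2(T/Q)$ and $T_{I_f}$ by $T/4$ (Proposition~\ref{proposition_final_round}). Your bookkeeping concern resolves in your favor, and no $T_{I_f-1}$ fallback is needed: Algorithm~\ref{alg_app_FC2AT} sets $\hat J_t \gets \hat J$ at every non-final step of a phase, and $\hat J$ there is the output of the last completed phase.
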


\begin{proof}
	\begin{align*}
		\PP\left( \hat{J}_T \neq 1 \right) &= \PP\left( \hat{J}_{I_f} \neq 1 \right) \\
		&\leq   3 \cdot \exp \left( -   \frac{ T_{I_f}   }{\frac{4 Q}{ \ln(\frac{1}{\delta_0})}  + 4 \log_2\left( \frac{ T_{I_f}}{Q} \right) A } \right)  \tag{Theorem \ref{thoerem_main_FC2FB} and Proposition \ref{proposition_final_round_1}: $T_{I_f} \geq B^*$} \\
		&\leq    3 \exp \left( -   \frac{ T_{I_f}   }{\frac{4 Q}{ \ln(\frac{1}{\delta_0})}  + 4 \log_2\left( \frac{ T}{Q} \right) A } \right)   \\
		&\leq 3 \exp \left( -   \frac{ T   }{\frac{16 Q}{ \ln(\frac{1}{\delta_0})}  + 16 \log_2\left( \frac{T}{Q} \right) A } \right).  \tag{Proposition \ref{proposition_final_round}} 
	\end{align*}
	This completes the proof.
\end{proof}

\section{Strong and Weak Fixed-Confidence Algorithms and Their Relation to Existing Methods}\label{section_app_weakstrongdis}
		
In this section, we introduce strong and weak fixed-confidence BAI algorithms and explain how they relate to fixed-confidence algorithms with guarantees on high-probability sample complexity, expected sample complexity, asymptotic sample complexity, and exponentially decaying stopping times. Throughout this section, the correctness guarantee is uniform
over the model class, while sample-complexity quantities such
as \(f(\cd)\), \(A\), and \(C\) may depend on the
underlying instance \(\nu\), but not on the target confidence
level \(\delta\), unless explicitly stated otherwise.
		
		\vspace{1em}
		\begin{property}[High probability sample complexity bound]\label{prop:hps}
			An FC algorithm is said to enjoy a high-probability sample complexity bound if for any $\delta \in (0,1)$, it satisfies $\dt$-correctness (see \eqref{eq:correctness}) and has a high probability sample complexity of $T^*_\dt = f(\delta)$ (see \eqref{eq:hp-sample-complexity}), where $f(\delta)$ denotes some function of $\delta$ that does not necessarily have logarithmic dependence on $1/\delta$.
		\end{property}
		\vspace{1em}
		\begin{property}[Weak sample complexity bound]\label{prop:wfc}
			An FC algorithm is said to enjoy a high-probability sample complexity bound if for \textbf{some} $\delta_0 \in (0,1)$, it satisfies $\delta_0$-correctness (see \eqref{eq:correctness}) and has a high probability sample complexity of $T^*_{\delta_0} = f(\delta_0)$ (see \eqref{eq:hp-sample-complexity}), where $f(\delta_0)$ denotes some function of $\delta_0$ that does not necessarily have logarithmic dependence on $1/\delta_0$.
		\end{property}
		All algorithms with weak sample complexity bound are Weak FC algorithms (Definition \ref{def_main_weakFC}).
		\vspace{1em}
		\begin{property}[Strong sample complexity bound]\label{prop:sfc}
			An FC algorithm is said to enjoy a Strong sample complexity bound if for any $\delta \in (0,1)$, it satisfies $\dt$-correctness (see \eqref{eq:correctness}) and has a high probability sample complexity of $T^*_\dt = A \ln(1/\dt) + C$ (see \eqref{eq:hp-sample-complexity}) for some $A$ and $C$ independent of $\dt$.
		\end{property}
		
		All algorithms with Strong sample complexity bound are Strong FC algorithms (Definition \ref{def_main_strongFC}).
		\vspace{1em}
		\begin{property}[Expected sample complexity bound]\label{prop:efc}
			An FC algorithm is said to enjoy an expected sample complexity bound if for any $\delta \in (0,1)$, it satisfies $\dt$-correctness (see \eqref{eq:correctness}) and
			\begin{align*}
				\EE[\tau] \le T^*_\dt ~,
			\end{align*}
			where $\tau$ (stopping time) depends on $\dt$.
		\end{property}
		
		\vspace{1em}
		\begin{property}[Asymptotic expected sample complexity]\label{prop:asymefc}
			An FC algorithm is said to enjoy an asymptotic expected sample complexity of $A$, if for any $\delta \in (0,1)$, it satisfies $\dt$-correctness (see \eqref{eq:correctness}) and
			\begin{align*}
				\limsup_{\dt\rarrow 0} \fr{\EE[\tau]}{\ln(1/\dt)} \leq A ~,
			\end{align*}
			where $\tau$ (stopping time) depends on $\dt$ and $A$ is independent of $\delta$.
		\end{property}
		
		\vspace{1em}
		\begin{property}[Exponential stopping tail]\label{prop:expfc}
			A fixed confidence algorithm is said to have a $(T_\delta,\kappa)$-exponential stopping tail if for any $\delta \in (0,1)$ it satisfies $\dt$-correctness and there exists a time step $T^*_\delta$ and a problem-dependent constant $\kappa>0$ (but not dependent on $T$) such that for all $T \geq T^*_\delta$, 
			\begin{align*}
				\PP\left( \tau \geq T\right)\leq \exp\left(-\fr{T}{\kappa\cdot\polylog(T)}\right). \label{def:exp_tail}
			\end{align*}
			where $\tau$ (stopping time) depends on $\dt$. See Definition~2.8 of \citet{balagopalan25brakebooster}.
		\end{property}
		
		The following claim holds,
		
		\begin{claim}\label{claim:weak}
			Any algorithm that enjoys either High probability sample complexity bound, Strong sample complexity bound, Expected sample complexity bound, or Exponential stopping tail, also enjoys Weak probability sample complexity bound.
			\begin{enumerate}[leftmargin=2cm] 
				\item  Property~\ref{prop:hps} $\implies$ Property~\ref{prop:wfc}
				
				This hold by definition.
                
				\item Property~\ref{prop:sfc} $\implies$ Property~\ref{prop:hps},~\ref{prop:wfc}
				
				This hold by definition.
                
				\item Property~\ref{prop:efc} $\implies$ Property~\ref{prop:hps},~\ref{prop:wfc}
				
				By Markov's inequality,
				\begin{align*}
					\PP\left(\tau \geq T \right) \leq \frac{\EE[\tau]}{T} \leq \frac{T^*_{\delta}}{T} \overbrace{=}^\text{set to} \delta'
				\end{align*}
                Hence, for any $\delta' \in (0,1)$
                
                Let
                \begin{align*}
                    T^*_{\delta'} = \frac{T^*_{\delta}}{\delta'}  = f(\delta')
                \end{align*}
                Therefore,
                \begin{align*}
                	\PP\left(\tau \geq T^*_{\delta'}  \right) \leq \delta' 
                \end{align*}
                
                Thus it satisfies Property~\ref{prop:hps} and Property~\ref{prop:wfc} follows.
                
				\item Property~\ref{prop:expfc} $\implies$ Property~\ref{prop:hps},~\ref{prop:wfc},~\ref{prop:efc}.
				
				See \cite{balagopalan25brakebooster}.
			\end{enumerate}
		\end{claim}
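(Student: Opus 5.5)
Items 1--3 of Claim~\ref{claim:weak} are already justified by definitions or by Markov's inequality. The remaining work is item 4, Property~\ref{prop:expfc} $\implies$ Properties~\ref{prop:hps}, \ref{prop:wfc}, \ref{prop:efc}, which I would prove directly rather than only citing \citet{balagopalan25brakebooster}.

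Fix the input $\delta$, so that $\delta$-correctness holds by hypothesis. Write $g(T) := T/(\kappa\,\polylog(T))$, so the assumption reads $\PP(\tau \ge T) \le e^{-g(T)}$ for all $T \ge T^*_\delta$.

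\textbf{Property~\ref{prop:hps}.} Since $\polylog(T)$ grows slower than any power of $T$, we have $g(T) \to \infty$. Hence for every $\delta' \in (0,1)$ the set $\{T \ge T^*_\delta : g(T') \ge \ln(1/\delta') \text{ for all } T' \ge T\}$ is nonempty. Let $f(\delta')$ be its infimum. Then $\PP(\tau > f(\delta')) \le \PP(\tau \ge f(\delta')) \le \delta'$, which together with correctness is Property~\ref{prop:hps}.

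I would also record the quantitative form. Solving $T \ge \kappa\,\polylog(T)\ln(1/\delta')$ by the usual inversion of $T/\log^c T$ gives $f(\delta') = \max\{T^*_\delta,\ O(\kappa \ln(1/\delta')\,\polylog(\kappa\ln(1/\delta')))\}$. This shows the dependence on $1/\delta'$ is only polylogarithmic. Property~\ref{prop:wfc} then follows from item~1, by specializing to any single $\delta'$.

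\textbf{Property~\ref{prop:efc}.} I would use the tail-sum formula for the nonnegative integer random variable $\tau$:
\[
\EE[\tau] = \sum_{T\ge 1}\PP(\tau \ge T) \le \lceil T^*_\delta\rceil + \sum_{T \ge T^*_\delta} e^{-g(T)}.
\]
This is the step I expect to be the only real obstacle: showing the series is finite with an explicit bound despite the $\polylog$ factor in the exponent.

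My first attempt is to pick $T_1 \ge T^*_\delta$ such that $g(T) \ge 2\ln T$ for all $T \ge T_1$. Such a $T_1$ exists because $T/(\kappa\,\polylog(T)\ln T) \to \infty$, and again $T_1 = O(\kappa\,\polylog(\kappa))$. Then
\[
\sum_{T \ge T^*_\delta} e^{-g(T)} \le (T_1 - T^*_\delta) + \sum_{T \ge T_1} T^{-2} \le T_1 + 2,
\]
so $\EE[\tau] \le T^*_\delta + T_1 + 3 =: T'_\delta < \infty$, which is Property~\ref{prop:efc}. If a sharper bound is wanted, I would instead split at a dyadic level where $g(T) \ge T/(2\kappa\,\polylog(T_1))$ and sum the resulting geometric tail. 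For the claim as stated, finiteness with a bound independent of the sample path suffices.
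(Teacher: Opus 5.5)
Your proposal is correct. It departs from the paper only in item~4: the paper gives no argument there and simply defers to \citet{balagopalan25brakebooster}, whereas you give a self-contained proof.

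Your argument needs nothing beyond the fact that $g(T)=T/(\kappa\,\polylog(T))$ eventually dominates $2\ln T$. This gives two things:
\begin{enumerate}
\item a finite quantile $f(\delta')$ for every $\delta'$, in particular for $\delta'=\delta$, which is what Property~\ref{prop:hps} requires since there the quantile level equals the input;
\item a finite bound $\EE[\tau]\le T^*_\delta+T_1+3$, via the tail-sum formula and $e^{-g(T)}\le T^{-2}$ for $T\ge T_1$, which is all Property~\ref{prop:efc} asks for.
\end{enumerate}
The citation keeps the paper short. Your route makes the implication checkable in place and shows it is a purely qualitative consequence of the tail shape.

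Note that your quantitative form $\max\{T^*_\delta,\tO(\kappa\ln(1/\delta'))\}$ is for a fixed input $\delta$. With $\delta'=\delta$, the term $T^*_\delta$ may depend on $\delta$ arbitrarily, so you obtain Property~\ref{prop:hps} but not Property~\ref{prop:sfc}. That is consistent with the claim, which does not assert the latter.

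Two minor points of rigor, neither of which affects the conclusion:
\begin{enumerate}
\item Take the infimum defining $f(\delta')$ over integer times, or argue via $\PP(\tau>f)=\lim_{T\downarrow f}\PP(\tau\ge T)$, so that the bound $\PP(\tau>f(\delta'))\le\delta'$ is fully justified.
\item Since you require $T_1\ge T^*_\delta$, its order is $\max\{T^*_\delta,O(\kappa\,\polylog(\kappa))\}$ rather than $O(\kappa\,\polylog(\kappa))$.
\end{enumerate}
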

		
		To help with the understanding, we now discuss a few representative algorithms from the literature.
		\begin{itemize}
			\item Successive Elimination (\cite{even-dar06action}) enjoys Property~\ref{prop:hps} and Property~\ref{prop:sfc}, but not Property~\ref{prop:efc} and Property~\ref{prop:expfc} (see Table 1 from \citet{balagopalan25brakebooster}).
			\item LUCB1 (\cite{kaly12pac}) enjoys Property~\ref{prop:hps}, Property~\ref{prop:sfc} and Property~\ref{prop:efc}, but whether it enjoys Property~\ref{prop:expfc} is unknown (see Table 1 from \citet{balagopalan25brakebooster}).
			\item Uniform sampling enjoys Property~\ref{prop:hps} and Property~\ref{prop:sfc}, Property~\ref{prop:efc} and Property~\ref{prop:expfc}. However, for uniform sampling, these bounds are not optimal in general. 
		\end{itemize}

		Since we have presented the framework FCW2S (Algorithm~\ref{alg_main_FCW2S}) to convert any Weak FC algorithm into a Strong FC algorithm, we can construct a fixed-budget algorithm by applying FC2FB (Algorithm~\ref{alg:fc2fb}) framework on most of the fixed-confidence algorithms in literature and theoretically analyze them. Moreover, we can still apply FC2FB on fixed-confidence algorithms that are not Weak FC (Eg: Track-and-Stop (\cite{garivier16optimal})). However, the analysis of the latter framework is an open problem.

\section{A Candidate for Strong Fixed-Confidence Algorithms in the Heterogeneous Noise Setting: PE-KHN} \label{section_app_elim}

In this section, we prove Theorem \ref{main_theorem_elim_stop} for Algorithm \ref{alg_main_FCESR}. The algorithm is an adaptation of the elimination algorithm for pure exploration presented in Section 1.2 of \citet{jamieson2021some}, extended to the heterogeneous noise case with the additional enhancement of accumulating samples across stages.
\begin{definition} \label{def_subopt_gap}
	Define $\Delta_j = \mu_1 - \mu_j \;\; \forall j \in [K], j  \neq 1 $.
\end{definition}

\begin{definition} \label{def_good_event_ind}
	Define
	\begin{align*}
		\mathcal{E}_{1,\ell} &= \bigg\{ \mu_{1} - \hat{\mu}_{1}^{(\ell)}  \leq \eps_\ell \bigg\}, \tag{where $\hat{\mu}^{\ell}_1$ denotes the average reward of arm $1$ across all stages up to and including stage $\ell$.}
	\end{align*}
	and, $\forall i \in [K], i \neq 1 $,
	\begin{align*}
		\mathcal{E}_{i,\ell} &= \bigg\{\hat{\mu}_{i}^{(\ell)} - \mu_{i} \leq \eps_\ell \bigg\}. \tag{where $\hat{\mu}^{\ell}_i$ denotes the average reward of arm $i$ across all stages up to and including stage $\ell$.}
	\end{align*}
\end{definition}

 Let  $ n_{i}^{(\ell)}$ denote the number of times arm $i$ has been pulled up to and including stage $\ell$.
 
We can assume that the sequence of rewards for each arm is drawn before the beginning of the game. Thus the empirical reward for arm $i$ after $ n_{i}^{(\ell)}$  pulls is well defined even if arm $i$ has not been actually pulled $ n_{i}^{(\ell)}$ times. 
This assumption, consistent with that of \citet{audibert10best}, not only facilitates the accessibility of the subsequent proof but also establishes the independence between samples.

By Hoeffding's inequality for sub-Gaussian random variables, $\forall i  \in [K]$,

\begin{align*}
	\PP \left(   \mathcal{E}_{i,\ell}^c   \right) &\leq \exp \left( - \frac{ n_{i}^{(\ell)} \eps_\ell^2}{2 \sigma_i^2}\right) \\
	&\leq \exp \left( - \frac{ \frac{2\sigma_i^2}{(\eps_\ell)^2} \ln \left( K / \delta_{\ell}\right) \eps_\ell^2}{2 \sigma_i^2}\right) \tag{by definition of Algorithm \ref{alg_main_FCESR}}\\
	&= \frac{\delta_l}{ K} \\
	&= \frac{\delta}{ \ell (\ell + 1)K}.
\end{align*}

Hence,

\begin{align*}
	\PP\left(\mathcal{E}_{i,\ell}^c \right) &\leq  \frac{\delta}{ \ell (\ell + 1)K}.
\end{align*}
Let us define
\begin{definition} \label{def_good_event}
	Define,
	\begin{align*}
		\mathcal{E} &= \bigcap_{i=1}^{K} \bigcap_{\ell = 1}^{\infty} \mathcal{E}_{i,\ell}.
	\end{align*}
\end{definition}
Hence,
\begin{align*}
	\mathcal{E}^c  &= \bigcup_{i=1}^{K} \bigcup_{\ell = 1}^{\infty} \mathcal{E}_{i,\ell}^c.
\end{align*}

\begin{align*}
	\PP \left(\mathcal{E}^c\right) &= \PP \left( \bigcup_{i=1}^{K} \bigcup_{\ell = 1}^{\infty} \mathcal{E}_{i,\ell}^c  \right) \\
	&\leq   \sum_{i=1}^{K} \sum_{\ell = 1}^{\infty}  \PP \left( \mathcal{E}_{i,\ell}^c  \right) \\
	&\leq  \sum_{i=1}^{K} \sum_{\ell = 1}^{\infty} \frac{\delta}{ \ell (\ell + 1)K}\\
	&=   \sum_{\ell = 1}^{\infty} \frac{\delta}{ \ell (\ell + 1)}\\
	&= \delta.
\end{align*}

For the remainder of the proof, assume that $\mathcal{E}$ holds.

\begin{proposition} [Optimal arm survives] \label{prop_best_arm}
	If $\mathcal{E}$ holds then, the optimal arm will never be eliminated.
	\begin{align*}
		1 \in \mathcal{S}_{\ell} \;\; \forall \ell.
	\end{align*}
\end{proposition}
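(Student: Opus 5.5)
The plan is an induction on the stage index $\ell$, showing that under $\mathcal{E}$ arm $1$ is never removed in the elimination step of Algorithm~\ref{alg_main_FCESR}. The base case is immediate because $\mathcal{S}_1 = [K]$ contains arm $1$. For the inductive step, assume $1 \in \mathcal{S}_\ell$. Arm $1$ is removed at stage $\ell$ only if $\hat{\mu}^{(\ell)}_1 \le \max_{j \in \mathcal{S}_\ell} \hat{\mu}^{(\ell)}_j - 2\eps_\ell$. It therefore suffices to show that $\hat{\mu}^{(\ell)}_j - \hat{\mu}^{(\ell)}_1 < 2\eps_\ell$ for every $j \in \mathcal{S}_\ell$.

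For $j = 1$ the difference is $0 < 2\eps_\ell$. For $j \neq 1$ with $j \in \mathcal{S}_\ell$, I will use the two one-sided events in Definition~\ref{def_good_event_ind}, both of which are contained in $\mathcal{E}$. Event $\mathcal{E}_{1,\ell}$ gives $\hat{\mu}^{(\ell)}_1 \ge \mu_1 - \eps_\ell$, and event $\mathcal{E}_{j,\ell}$ gives $\hat{\mu}^{(\ell)}_j \le \mu_j + \eps_\ell$. Combining these,
\begin{align*}
\hat{\mu}^{(\ell)}_j - \hat{\mu}^{(\ell)}_1 \le \mu_j - \mu_1 + 2\eps_\ell = 2\eps_\ell - \Delta_j < 2\eps_\ell,
\end{align*}
where the strict inequality holds because $\Delta_j > 0$, since arm $1$ is the unique best arm.

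Hence the elimination condition fails for arm $1$, so $1 \in \mathcal{S}_{\ell+1}$, which completes the induction.

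The only point needing care is the tie case: the elimination rule uses a non-strict inequality $\le$, so I need the strict gap above. That strictness is supplied by $\Delta_j > 0$. Nothing else is delicate. In particular, the events hold for all $\ell$ simultaneously on $\mathcal{E}$, which the paper already established via the union bound, so no extra probabilistic argument is needed.
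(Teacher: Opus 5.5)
Your proof is correct and follows the same route as the paper: induction on $\ell$ from $\mathcal{S}_1=[K]$, using the one-sided events $\mathcal{E}_{1,\ell}$ and $\mathcal{E}_{j,\ell}$ to bound $\hat{\mu}^{(\ell)}_j-\hat{\mu}^{(\ell)}_1$ by $2\eps_\ell$. Your version is slightly more careful: keeping the $-\Delta_j$ term gives the strict inequality that the non-strict elimination rule needs, and you take the maximum over $\mathcal{S}_\ell$, whereas the paper drops $\Delta_j$ (getting only $\le 2\eps_\ell$) and takes the maximum over $\mathcal{S}_{\ell+1}$.
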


\begin{proof}
	Fix any $\ell$ for which $1 \in \mathcal{S}_{\ell}$. 
	
	For any $j  \in \mathcal{S}_{\ell + 1}, j \neq 1$ we have,
	
	\begin{align*}
		\hat{\mu}_{j}^{(\ell)} - \hat{\mu}_{1}^{(\ell)}   &\leq \left(\hat{\mu}_{j}^{(\ell)} - \hat{\mu}_{1}^{(\ell)}  \right) + \left( \mu_1 - \mu_j\right) \tag{because $ \left( \mu_1 - \mu_j\right)  \geq 1 $} \\
		&= \left(\hat{\mu}_{j}^{(\ell)} - \mu_j  \right) + \left( \mu_1 - \hat{\mu}_{1}^{(\ell)}  \right)  \\
		&\leq 2 \eps_\ell. \tag{because $\mathcal{E}$ holds}
	\end{align*}
	
	Since the above statement is true for any $j \in \mathcal{S}_{\ell + 1}$, 
	\begin{align*}
		\max_{j \in \mathcal{S}_{\ell+1}} \hat{\mu}_{j}^{(\ell)} - \hat{\mu}_{1}^{(\ell)} \leq 2 \eps_\ell  .
	\end{align*}
	Hence,
	\begin{align*}
		1 \in \mathcal{S}_{\ell + 1}.
	\end{align*}
	
	Since $1 \in \mathcal{S}_1$, the optimal arm will never be eliminated. This leads to the following theorem.
	
\end{proof}

\begin{theorem}[Correctness of PE-KHN] \label{main_theorem_elimn_corr}
	For any $\delta \leq 1$, let $J$ be the output of Algorithm \ref{alg_main_FCESR} and $1$ be the optimal arm, then
	\begin{align*}
		\PP (\hat{J} \neq 1) \leq \delta.
	\end{align*}
\end{theorem}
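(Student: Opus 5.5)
The plan is to derive the statement from the good event $\mathcal{E}$ of Definition~\ref{def_good_event} and Proposition~\ref{prop_best_arm}. We have already shown $\PP(\mathcal{E}^c)\le\delta$, using Hoeffding's inequality for each $\mathcal{E}_{i,\ell}$ and a union bound over arms $i\in[K]$ and stages $\ell\ge1$, with the summable weights $\delta_\ell=\delta/(\ell(\ell+1))$. It therefore suffices to show that on $\mathcal{E}$ the output arm equals $1$, whenever the algorithm stops. Then
\begin{align*}
\PP(\hat J\neq 1)\le \PP(\hat J\neq 1,\mathcal{E})+\PP(\mathcal{E}^c)\le 0+\delta .
\end{align*}

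On $\mathcal{E}$, Proposition~\ref{prop_best_arm} gives $1\in\mathcal{S}_\ell$ for every $\ell$. The algorithm outputs only when $|\mathcal{S}_\ell|=1$, so the single remaining arm must be arm $1$, and hence $\hat J=1$.

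The one point needing care is how to interpret $\hat J$ when the algorithm never stops. We read the statement as the correctness event $\{\hat J\neq 1,\tau<\infty\}$ of \eqref{eq:correctness}, since no arm is output otherwise. If termination were needed as well, we would also show that on $\mathcal{E}$ every suboptimal arm $j$ is eliminated once $4\eps_\ell<\Delta_j$. The reason is that then
\begin{align*}
\hat\mu_1^{(\ell)}-\hat\mu_j^{(\ell)}\ge \Delta_j-2\eps_\ell>2\eps_\ell ,
\end{align*}
using the two-sided deviation control; termination at a finite stage then follows. This is handled more fully in the stopping-time argument for Theorem~\ref{main_theorem_elim_stop}.

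A secondary obstacle is that Proposition~\ref{prop_best_arm} rests on the one-sided events $\mathcal{E}_{i,\ell}$ holding for all stages simultaneously, with sample means accumulated across stages. This is legitimate under the "rewards drawn in advance" convention already adopted: $\hat\mu_i^{(\ell)}$ is an average of a deterministic number $n_i^{(\ell)}$ of i.i.d.\ samples, so Hoeffding applies directly without any optional-stopping issue. With that, the proof is a short combination of the union bound and Proposition~\ref{prop_best_arm}.
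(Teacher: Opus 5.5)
Your proposal is correct and matches the paper's argument: you bound $\PP(\mathcal{E}^c)\le\delta$ by Hoeffding plus a union bound with $\delta_\ell=\delta/(\ell(\ell+1))$. On $\mathcal{E}$, Proposition~\ref{prop_best_arm} keeps arm $1$ in every $\mathcal{S}_\ell$, so the output is arm $1$; the paper states this as the immediate consequence of that proposition.

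Two small remarks. Your termination aside needs only the one-sided events already in $\mathcal{E}$ (namely $\mu_1-\hat\mu_1^{(\ell)}\le\eps_\ell$ and $\hat\mu_j^{(\ell)}-\mu_j\le\eps_\ell$), not two-sided control. Also, keeping the $-\Delta_j$ term with $\Delta_j>0$ in Proposition~\ref{prop_best_arm} gives the strict inequality needed against the non-strict elimination rule.
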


\begin{proposition} [Near-optimal arms survive] \label{prop_near_opt_arm}
	If $\mathcal{E}$ holds then, and $j \in \mathcal{S}_{\ell + 1}$ then,
	\begin{align*}
		\Delta_j  &\leq  8 \eps_{\ell + 1}.
	\end{align*}
\end{proposition}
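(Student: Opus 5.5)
The claim is that, on $\mathcal{E}$, any arm $j$ that survives into $\mathcal{S}_{\ell+1}$ has $\Delta_j \le 8\eps_{\ell+1}$. I will argue by contraposition. Fix $\ell$ and a suboptimal arm $j\in\mathcal{S}_\ell$ with $\Delta_j > 8\eps_{\ell+1} = 4\eps_\ell$, using $\eps_{\ell+1}=\eps_\ell/2$. I will show that $j$ is removed at the end of stage $\ell$.

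By Proposition~\ref{prop_best_arm}, on $\mathcal{E}$ the optimal arm satisfies $1\in\mathcal{S}_\ell$. Hence
\[
\max_{i\in\mathcal{S}_\ell}\hat\mu^{(\ell)}_i \ge \hat\mu^{(\ell)}_1 .
\]
The elimination rule removes $j$ as soon as $\hat\mu^{(\ell)}_j \le \hat\mu^{(\ell)}_1 - 2\eps_\ell$, so it suffices to prove this inequality.

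Decompose the gap between the empirical means:
\[
\hat\mu^{(\ell)}_1-\hat\mu^{(\ell)}_j = \bigl(\hat\mu^{(\ell)}_1-\mu_1\bigr)+\Delta_j+\bigl(\mu_j-\hat\mu^{(\ell)}_j\bigr).
\]
The events $\mathcal{E}_{1,\ell}$ and $\mathcal{E}_{j,\ell}$ give $\hat\mu^{(\ell)}_1-\mu_1\ge -\eps_\ell$ and $\mu_j-\hat\mu^{(\ell)}_j\ge-\eps_\ell$. Substituting these,
\[
\hat\mu^{(\ell)}_1-\hat\mu^{(\ell)}_j \ge \Delta_j-2\eps_\ell > 4\eps_\ell-2\eps_\ell = 2\eps_\ell .
\]
So $j$ is eliminated and $j\notin\mathcal{S}_{\ell+1}$. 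Taking the contrapositive, every $j\in\mathcal{S}_{\ell+1}$ with $j\neq 1$ satisfies $\Delta_j\le 4\eps_\ell=8\eps_{\ell+1}$. For $j=1$ the claim is trivial because $\Delta_1=0$.

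The only step needing care is confirming that the good events point in the required direction. $\mathcal{E}_{1,\ell}$ gives a lower bound on $\hat\mu_1$ and $\mathcal{E}_{j,\ell}$ gives an upper bound on $\hat\mu_j$, which are exactly the one-sided bounds used in the decomposition. No two-sided concentration is needed.

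This argument actually yields the sharper bound $\Delta_j\le 4\eps_\ell$. The stated constant $8\eps_{\ell+1}$ is just the same bound rewritten in terms of $\eps_{\ell+1}$, presumably the form needed for the later stopping-time bound.
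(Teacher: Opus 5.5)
Your proof is correct and is essentially the paper's argument run in contrapositive form. The paper argues directly: it combines the survival condition with $\max_i \hat\mu^{(\ell)}_i \ge \hat\mu^{(\ell)}_1$ to get $\hat\mu^{(\ell)}_1-\hat\mu^{(\ell)}_j \le 2\eps_\ell$, then applies the same one-sided good events to conclude $\Delta_j \le 4\eps_\ell = 8\eps_{\ell+1}$. Your use of $1\in\mathcal{S}_\ell$ with the maximum taken over $\mathcal{S}_\ell$ matches the algorithm's elimination rule slightly more faithfully than the paper, which takes the maximum over $\mathcal{S}_{\ell+1}$.
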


\begin{proof}
	For any $j \in S_{\ell + 1}$ 
	\begin{align*}
		\max_{i \in S_{\ell+1}} \hat{\mu}_{i}^{(\ell)} -  \hat{\mu}_{j}^{(\ell)} \leq  2\eps_\ell.
	\end{align*}
	
	Also, since $1 \in S_{\ell + 1}$
	
	\begin{align*}
		\max_{i \in S_{\ell+1}} \hat{\mu}_{i}^{(\ell)} -  \hat{\mu}_{1}^{(\ell)} \geq 0.
	\end{align*}
	
	Combining both,
	
	\begin{align*}
		\hat{\mu}_{1}^{(\ell)} - \hat{\mu}_{j}^{(\ell)}  &\leq  2 \eps_\ell \\
			\hat{\mu}_{1}^{(\ell)} - \hat{\mu}_{j}^{(\ell)} + \Delta_j  &\leq  2 \eps_\ell + \left( \mu_1 - \mu_j\right) \\
		\Delta_j  &\leq  2 \eps_\ell  + \left( \hat{\mu}_{j}^{(\ell)}- \mu_j \right) + \left( \mu_1 - \hat{\mu}_{1}^{(\ell)}  \right) \\
		&\leq    4 \eps_\ell \tag{because $\mathcal{E}$ holds} \\
		&= 8 \eps_{\ell + 1}.
	\end{align*}
\end{proof}

\begin{proposition} [Elimination stages] \label{prop_elim_stage}
	If $\mathcal{E}$ holds, then the elimination stage of arm $j \neq 1$ is given by, 
	\begin{align*}
		\ell^*_j &\leq \lceil \log_2 \left(4 \Delta_j^{-1}\right)  \rceil.
	\end{align*}
\end{proposition}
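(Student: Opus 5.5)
The plan is to argue by contradiction using Proposition~\ref{prop_near_opt_arm}, which holds on the good event $\mathcal{E}$. Fix a suboptimal arm $j\neq 1$ and set $\ell_0 := \lceil \log_2(4\Delta_j^{-1})\rceil$. Suppose $j$ has not been eliminated by the end of stage $\ell_0$, i.e., $j \in \mathcal{S}_{\ell_0+1}$. Applying Proposition~\ref{prop_near_opt_arm} with $\ell=\ell_0$ gives $\Delta_j \le 8\eps_{\ell_0+1} = 4\eps_{\ell_0} = 4\cdot 2^{-\ell_0}$.

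Next, I would use the definition of $\ell_0$ to reach a contradiction. Since $\ell_0 \ge \log_2(4/\Delta_j)$, we have $2^{-\ell_0} \le \Delta_j/4$, and hence $4\cdot 2^{-\ell_0}\le \Delta_j$. Combined with the previous step, this forces $\Delta_j = 4\eps_{\ell_0}$ exactly, so there is only a boundary case to rule out.

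The main obstacle is this boundary case, where $\log_2(4/\Delta_j)$ is an integer and equality holds. I would first check whether the inequality chain in the proof of Proposition~\ref{prop_near_opt_arm} can be made strict. The elimination rule removes $i$ when $\hat\mu_i \le \max_j \hat\mu_j - 2\eps_\ell$, so survival gives the strict inequality $\hat\mu_1^{(\ell)} - \hat\mu_j^{(\ell)} < 2\eps_\ell$. Carrying this strictness through the proof yields $\Delta_j < 4\eps_\ell$, which contradicts $4\eps_{\ell_0}\le\Delta_j$. Therefore $j\notin\mathcal{S}_{\ell_0+1}$, so the elimination stage satisfies $\ell^*_j \le \ell_0$.

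If one prefers not to rely on strictness, the fallback is to use the non-strict form and conclude only $\ell_j^* \le \ell_0 + 1$; this costs a constant factor in Theorem~\ref{main_theorem_elim_stop}. The strict version matches the stated bound, so I would present that one. It remains to note that the argument applies to every stage and every arm under $\mathcal{E}$, because Proposition~\ref{prop_best_arm} guarantees $1\in\mathcal{S}_\ell$ for all $\ell$, which is the hypothesis used inside Proposition~\ref{prop_near_opt_arm}.
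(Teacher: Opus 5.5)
Your proposal is correct and follows the same route as the paper, which takes the contrapositive of Proposition~\ref{prop_near_opt_arm} ($\Delta_j > 8\eps_{\ell+1}$, i.e.\ $\ell > \log_2(4\Delta_j^{-1})$, forces $j \notin \mathcal{S}_{\ell+1}$) and reads off the ceiling. Your extra step is a real gain in rigor: under the rule ``eliminate if $\hat\mu_i^{(\ell)} \le \max_j \hat\mu_j^{(\ell)} - 2\eps_\ell$'', survival gives a strict inequality and hence $\Delta_j < 4\eps_\ell$, which handles the case where $\log_2(4\Delta_j^{-1})$ is an integer; the paper's non-strict argument passes over that case and only yields the bound up to an additive $+1$ in the stage index.
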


\begin{proof}
	If $\mathcal{E}$ holds then, by Proposition \ref{prop_near_opt_arm}, when an arm $j$ is getting eliminated, 
	\begin{align*}
		\Delta_j &> 8 \eps_{\ell + 1} \left(= \frac{8}{2^{\ell + 1}}\right) \implies j \notin \mathcal{S}_{\ell + 1} \\
		 \ell &> \log_2 \left(4 \Delta_j^{-1}\right) \implies j \notin \mathcal{S}_{\ell + 1}.
	\end{align*}
	Hence, if $\mathcal{E}$ holds then, the maximum stage that a sub-optimal arm $j$ can survive is,
	\begin{align*}
		\ell^*_j &\leq \lceil \log_2 \left(4 \Delta_j^{-1}\right)  \rceil 
	\end{align*}
\end{proof}

\newtheorem*{app_theorem_elim_stop}{Theorem \ref{main_theorem_elim_stop}}
\begin{app_theorem_elim_stop} [Sample complexity of PE-KHN] \label{app_theorem_elim_stop} 
		For any $\delta \leq 1$, let $T^*_{\delta} :=  \frac{64\sigma_1^2 }{\Delta^2}  \ln \left( \frac{ 4 K (\ln 2)^2 }{\delta} \left(\ln \left(\frac{4}{ \Delta}\right) \right)^2  \right)  + \sum_{j\neq 1}^{K}\frac{64\sigma_j^2 }{\Delta^2_j}  \ln \left( \frac{ 4 K (\ln 2)^2 }{\delta} \left(\ln \left(\frac{4}{ \Delta_j}\right) \right)^2  \right) $, assume $\Delta_j \leq 1 \;\; \forall j \neq 1 \in \mathcal{S}$, then		  
	\begin{align*}
		\PP (\tau  > T^*_{\delta}  ) \leq \delta.
	\end{align*}
\end{app_theorem_elim_stop}

\begin{proof}
	If $\mathcal{E}$ holds, tthen by Proposition \ref{prop_elim_stage}, the total samples consumed by arm $j$ up to its elimination is
	
	\begin{align*}
		T_j &=  \lceil \frac{2\sigma_j^2}{(\eps_{\ell_{j}^*})^2} \ln(K/\delta_{\ell_{j}^*}) \rceil \\
		&\leq   \frac{2\sigma_j^2}{(\eps_{\ell_{j}^*})^2} \ln(K/\delta_{\ell_{j}^*})+ 1\\
		&\leq  2\sigma_j^2 2^{ 2\log_2 \left(4 \Delta_j^{-1}\right)  + 2 } \ln \left( \frac{K\left(\log_2 \left(4 \Delta_j^{-1}\right) + 1 \right) \left(\log_2 \left(4 \Delta_j^{-1}\right)  + 2\right) }{\delta}\right) \\
		&= \frac{64\sigma_j^2 }{\Delta^2_j}  \ln \left( \frac{K\left(\log_2 \left(4 \Delta_j^{-1}\right) + 1 \right) \left(\log_2 \left(4 \Delta_j^{-1}\right)  + 2\right) }{\delta}\right) \\
		&\leq \frac{64\sigma_j^2 }{\Delta^2_j}  \ln \left( \frac{K\left(2 \log_2 \left(4 \Delta_j^{-1}\right)\right) \left( 2 \log_2 \left(4 \Delta_j^{-1}\right) \right) }{\delta}\right)  \tag{assume $ \forall j \;\; \Delta_j < 1$} \\
		&= \frac{64\sigma_j^2 }{\Delta^2_j}  \ln \left( \frac{4 K \left(\log_2 \left(4 \Delta_j^{-1}\right) \right)^2  }{\delta}\right) \\
		&= \frac{64\sigma_j^2 }{\Delta^2_j}  \ln \left( \frac{ 4 K (\ln 2)^2 }{\delta} \left(\ln \left(\frac{4}{ \Delta_j}\right) \right)^2  \right).
	\end{align*}
	
	Hence the stopping time $\tau$ will be upper bounded by
	\begin{align*}
		\tau \leq \sum_{j\neq 1}^{K}\frac{64\sigma_j^2 }{\Delta^2_j}  \ln \left( \frac{ 4 K (\ln 2)^2 }{\delta} \left(\ln \left(\frac{4}{ \Delta_j}\right) \right)^2  \right)  + T_1. \\
		\tag{where $T_1$ in the number of times arm 1 chosen up to that point}
	\end{align*}
	
	Given the $\mathcal{E}$ holds, the arm 1 will survive until the second best arm is eliminated. 
	\begin{align*}
		\ell_2 \leq \lceil \log_2 \left( 4 \Delta_2^{-1}\right)\rceil.
	\end{align*}
	
	\begin{align*}
		T_1 &\leq  \lcl \fr{2\sig_1^2}{(\eps_\ell)^2} \ln(K/\dt_\ell) \rcl \\
		&\leq\frac{64\sigma_1^2 }{\Delta_2^2}  \ln \left( \frac{ 4 K (\ln 2)^2 }{\delta} \left(\ln \left(\frac{4}{ \Delta_2}\right) \right)^2  \right).
	\end{align*}

	Hence the stopping time $\tau$ will be upper bounded by
	\begin{align*}
		\tau &\leq  \frac{64\sigma_1^2 }{\Delta_2^2}  \ln \left( \frac{ 4 K (\ln 2)^2 }{\delta} \left(\ln \left(\frac{4}{ \Delta_2}\right) \right)^2  \right)  + \sum_{j\neq 1}^{K}\frac{64\sigma_j^2 }{\Delta^2_j}  \ln \left( \frac{ 4 K (\ln 2)^2 }{\delta} \left(\ln \left(\frac{4}{ \Delta_j}\right) \right)^2  \right) \\
		&= \mathcal{O}\left(A\log\left( \frac{1}{\delta}\right) \right).  \tag{$A =\mathcal{O}\left( \frac{\sigma_1}{\Delta_2^2} + \sum_{i \in S_1, i \neq 1} \frac{\sigma_i}{\Delta_i^2} \right)$, ignoring doubly logarithmic factors and $\log K$}
	\end{align*}
	This completes the proof.
\end{proof}

\section{Linear Bandits}
\label{sec:appendix-linear}

In this section, we use $A \lsim B$ to denote that $A \le c\cd B$ for some numerical constant $c$.
Recall that the optimal sample complexity in FC is governed by $\rho^*$ defined in \citet{fiez19sequential}:
\begin{align*}
\rho^{\ast}=\min_{\lambda\in\Delta(\mathcal{X})}\max_{x\in \mathcal{X}\setminus \{x^{\ast}\}}\frac{\|x^{\ast}-x\|^2_{(\sum_{x\in \mathcal{X}}\lambda_x xx^{\top})^{-1}}}{\langle x^{\ast}-x, \theta\rangle ^2}~.
\end{align*}
On the other hand, in FB, \citet{yang2022minimax} showed that it is possible to obtain a sample complexity bound for which the leading term has the following problem-dependent constant:
\begin{align*}
    H_{2, \operatorname{lin}}=\max _{2 \leq i \leq d} \frac{i}{\Delta_i^2}~.
\end{align*}

\begin{theorem} 
We have $\rho^* \lsim H_{2,\text{lin}}\ln^2(d) $.
\end{theorem}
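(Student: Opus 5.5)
The idea is to exhibit one explicit design $\lambda\in\Delta(\mathcal{X})$ and show that its objective value is at most $c\,H_{2,\text{lin}}\ln d$. This is actually stronger than the claimed $\ln^2(d)$. Throughout I follow the convention of \citet{yang2022minimax}: arms are indexed by rank, $x_1=x^*$ and $\Delta_2\le\Delta_3\le\cdots\le\Delta_K$. Write $A(\lambda)=\sum_x\lambda_x xx^\top$. Since $\rho^*$ is a min over designs, any feasible $\lambda$ upper bounds it. The design is a uniform mixture of G-optimal designs on nested "top-$m$" sets, so that arm $i$ pays only roughly the dimension of the top-$i$ arms, which is at most $i$.

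\textbf{Construction.} Let $M=\lceil\log_2 d\rceil+1$. For $k=1,\dots,M-1$ let $S_k=\{x_1,\dots,x_{\min(2^k,K)}\}$, and let $S_M=\mathcal{X}$. Let $\lambda_k$ be a G-optimal design on $S_k$, with its information matrix restricted to $V_k=\mathrm{span}(S_k)$. By the Kiefer--Wolfowitz theorem,
\[
\max_{x\in S_k}\|x\|^2_{A(\lambda_k)^{+}}=\dim V_k\le\min(2^k,d),
\]
where $^{+}$ is the pseudo-inverse on $V_k$. Then set $\lambda=\frac1M\sum_k\lambda_k$.

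\textbf{Key comparison step.} We have $A(\lambda)\succeq\frac1M A(\lambda_k)$ for every $k$. I will show that for any $v\in V_k$,
\[
\|v\|^2_{A(\lambda)^{-1}}\le M\,\|v\|^2_{A(\lambda_k)^{+}}.
\]
This follows from the variational formula $\|v\|^2_{A^{-1}}=\sup_u\{2\langle u,v\rangle-u^\top Au\}$, restricting the sup appropriately, or from the standard Loewner-order monotonicity of pseudo-inverses on the range. If $A(\lambda)$ is singular, one works on $\mathrm{span}(\mathcal{X})$ throughout. I expect this subspace/pseudo-inverse bookkeeping to be the only delicate point; the remaining steps are routine.

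\textbf{Bounding each arm.}
\begin{itemize}
\item Take $2\le i\le d$, and let $k$ be the smallest index with $2^k\ge i$, so that $2^k<2i$. Then $x^*-x_i\in V_k$, and
\[
\|x^*-x_i\|^2_{A(\lambda)^{-1}}\le 2M\bigl(\|x^*\|^2_{A(\lambda_k)^+}+\|x_i\|^2_{A(\lambda_k)^+}\bigr)\le 4M\cdot 2^k\le 8Mi.
\]
Dividing by $\Delta_i^2$ gives at most $8M\,i/\Delta_i^2\le 8M H_{2,\text{lin}}$.
\item Take $i>d$, and use $k=M$. We get $\|x^*-x_i\|^2_{A(\lambda)^{-1}}\le 4Md$, and since $\Delta_i\ge\Delta_d$, the ratio is at most $4M\,d/\Delta_d^2\le 4MH_{2,\text{lin}}$.
\end{itemize}

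Taking the max over $x\ne x^*$ yields $\rho^*\lesssim H_{2,\text{lin}}\ln d\le H_{2,\text{lin}}\ln^2(d)$ for $d\ge3$; small $d$ is absorbed into the constant.
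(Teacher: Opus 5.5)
Your proof is correct, and it uses the same basic construction as the paper: a uniform mixture of $O(\ln d)$ G-optimal designs on nested top-ranked sets of geometrically varying size, Kiefer--Wolfowitz applied within each span, and a Loewner comparison that costs the number of mixture components.

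The difference is in how the levels are combined. The paper first bounds the maximum over $x\neq x^*$ by a \emph{sum} over levels $\ell=1,\dots,L+1$ of per-level maxima. It then bounds each level term by roughly $(L+1)\,d_{\ell-1}/\Delta^2_{(d_{\ell-1})}\lesssim \ln(d)\,H_{2,\text{lin}}$ and adds up $L+1=O(\ln d)$ such terms. That sum is where the second logarithm comes from. You instead assign each arm $x_i$ to the smallest level $S_k$ containing it and keep the maximum. The mixture factor $M$ is therefore paid only once, and you obtain the sharper bound $\rho^*\lesssim H_{2,\text{lin}}\ln d$.

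There are also two smaller differences.
\begin{itemize}
\item The paper puts half the design mass on $x^*$, so that $\|x^*\|^2_{V^{-1}}\le 2$ independently of dimension. You keep $x^*$ in every $S_k$ and pay $\dim V_k$ for it, which costs only a factor of $2$.
\item Your per-arm assignment divides by $\Delta_i^2$ itself. The paper's level-wise step divides by a level threshold $\Delta^2_{(d_{\ell-1})}$. That only goes in the right direction if each level is restricted to a shell of gaps, with the threshold taken at the shell's lower end. Arms ranked beyond the nested top sets must also be covered separately, which you do explicitly via $S_M=\mathcal{X}$.
\end{itemize}

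So your route is both tighter and needs less bookkeeping. The paper's sum over levels is simply a looser way of handling the maximum.
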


\begin{proof}
Let $\mathcal{X} \subset \mathbb{R}^d$ be the set of arms spanning $\mathbb{R}^d$, and $x^*$ be the unique optimal arm for a parameter $\theta^* \in \mathbb{R}^d$. The Dirac delta measure, $\delta_x$, is a distribution with its entire mass on arm $x$. A design $\lambda$ is a probability distribution on $\mathcal{X}$, and the design matrix is $V(\lambda) = \sum \lambda_x x x^\top$. Its pseudoinverse is denoted $V(\lambda)^\dagger$. The performance gap for an arm $x$ is $\Delta_x = \langle x^* - x, \theta^*\rangle$. We partition arms into nested sets $\mathcal{X}_\ell = \{x : \Delta_x \le \Delta_{(d_\ell)}\}$ based on the $i$-th smallest gap $\Delta_{(i)}$ and sizes $d_\ell = \lceil d \cdot 2^{-\ell+1} \rceil$, where $L$ is the smallest integer such that $d_{L+1}=1$.

We have
\begin{align*}
  \rho^* &= \min_{\lambda \in \Delta(\mathcal{X})} \max_{x \in \mathcal{X}^-} \frac{\|x^*-x\|^2_{V(\lambda)^\dagger}}{\Delta_x^2} \\
  &\le \min_{\lambda \in \Delta(\mathcal{X})} \sum_{\ell=1}^{L+1} \max_{x \in \mathcal{X}^-_{\ell-1}} \frac{\|x^*-x\|^2_{V(\lambda)^\dagger}}{\Delta_x^2}  \\
  &\le \min_{\{\lambda_\ell\}_{\ell=1}^{L+1}} \sum_{\ell=1}^{L+1} \max_{x \in \mathcal{X}^-_{\ell-1}} \frac{\|x^*-x\|^2_{V(\frac{1}{2}\delta_{x^*} + \frac{1}{2(L+1)}\sum_{\ell'}\lambda_{\ell'})^\dagger}}{\Delta_x^2}  \\
  &\le \min_{\{\lambda_\ell\}_{\ell=1}^{L+1}} \sum_{\ell=1}^{L+1} \max_{x \in \mathcal{X}^-_{\ell-1}} \frac{\|x^*-x\|^2_{V(\frac{1}{2}\delta_{x^*} + \frac{1}{2(L+1)}\lambda_\ell)^\dagger}}{\Delta_x^2}  \\
  &\le \sum_{\ell=1}^{L+1} \min_{\lambda \in \Delta(\mathcal{X}_{\ell-1}^-)} \max_{x \in \mathcal{X}^-_{\ell-1}} \frac{\|x^*-x\|^2_{V(\frac{1}{2}\delta_{x^*} + \frac{1}{2(L+1)}\lambda)^\dagger}}{\Delta_x^2}.
\end{align*}
Let $T_\ell$ denote the $\ell$-th term in the final summation.
\begin{align*}
  T_\ell &= \min_{\lambda \in \Delta(\mathcal{X}_{\ell-1}^-)} \max_{x \in \mathcal{X}^-_{\ell-1}} \frac{\|x^*-x\|^2_{V(\frac{1}{2}\delta_{x^*} + \frac{1}{2(L+1)}\lambda)^\dagger}}{\Delta_x^2} \\
  &\le \frac{1}{\Delta^2_{(d_{\ell-1})}} \min_{\lambda \in \Delta(\mathcal{X}_{\ell-1}^-)} \max_{x \in \mathcal{X}^-_{\ell-1}} \|x^*-x\|^2_{V(\frac{1}{2}\delta_{x^*} + \frac{1}{2(L+1)}\lambda)^\dagger} \\
  &\le \frac{1}{\Delta^2_{(d_{\ell-1})}} \left( 4 + 4(L+1) \min_{\lambda \in \Delta(\mathcal{X}_{\ell-1}^-)} \max_{x \in \mathcal{X}^-_{\ell-1}} \|x\|^2_{V(\lambda)^\dagger} \right).
\end{align*}
To bound the final $\min\max$ term, where $V(\lambda)$ is singular, we project the problem. Let $\mathcal{S}_\ell = \mathrm{span}(\mathcal{X}_{\ell-1}^-)$ be the subspace of dimension $k_\ell \le d_{\ell-1}-1$. Let the columns of $U_\ell \in \mathbb{R}^{d \times k_\ell}$ form an orthonormal basis for $\mathcal{S}_\ell$. For any $x \in \mathcal{S}_\ell$, the norm can be rewritten by projecting into the subspace:
$$ \|x\|^2_{V(\lambda)^\dagger} = x^\top V(\lambda)^\dagger x = (U_\ell^\top x)^\top (U_\ell^\top V(\lambda) U_\ell)^{-1} (U_\ell^\top x). $$
Let $x' = U_\ell^\top x$ be the projected vector in $\mathbb{R}^{k_\ell}$. Let $V'(\lambda) = U_\ell^\top V(\lambda) U_\ell$ be the invertible $k_\ell \times k_\ell$ design matrix in the subspace. The problem becomes:
$$ \min_{\lambda \in \Delta(\mathcal{X}_{\ell-1}^-)} \max_{x \in \mathcal{X}^-_{\ell-1}} \|x\|^2_{V(\lambda)^\dagger} = \min_{\lambda \in \Delta(\mathcal{X}_{\ell-1}^-)} \max_{x' \in \{U_\ell^\top x \mid x \in \mathcal{X}_{\ell-1}^-\}} \|x'\|^2_{V'(\lambda)^{-1}}. $$
By the Kiefer-Wolfowitz theorem in this $k_\ell$-dimensional space,
$$ \min_{\lambda \in \Delta(\mathcal{X}_{\ell-1}^-)} \max_{x' \in \{U_\ell^\top x \mid x \in \mathcal{X}_{\ell-1}^-\}} \|x'\|^2_{V'(\lambda)^{-1}} \le k_\ell \le d_{\ell-1}-1. $$
Substituting this back into $T_\ell$:
$$ T_\ell \le \frac{4 + 4(L+1)(d_{\ell-1}-1)}{\Delta^2_{(d_{\ell-1})}} \lesssim \frac{\ln(d) \cdot d_{\ell-1}}{\Delta^2_{(d_{\ell-1})}}. $$
Summing the bounds for all terms, we have
$$\rho^* \le \sum_{\ell=1}^{L+1} T_\ell \lesssim \ln(d) \sum_{\ell=1}^{L+1} \frac{d_{\ell-1}}{\Delta^2_{(d_{\ell-1})}}\lesssim  \ln(d)^2H_{2,\text{lin}}.$$
\end{proof}

In the following proposition, we show that the ratio between the two sample complexity measures can be made arbitrarily large.
\begin{proposition}
There exists a set of problem instances parametrized by $\eps>0$ where the sample complexity measures $\rho^*$ from \citet{fiez19sequential} is $\Th(d)$ but $H_{2,\text{lin}}$ from \citet{yang2022minimax} is $\Th(\frac{d^{1/2}}{\eps^2})$.

%
\end{proposition}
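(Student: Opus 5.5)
The plan is to give an explicit family of instances in which a few ``near-duplicate'' arms of the optimal arm have gap of order $\eps$. Both the numerator $\|x^*-x\|^2_{V(\lambda)^{-1}}$ and the denominator $\Delta_x^2$ in $\rho^*$ then scale like $\eps^2$, so $\rho^*$ does not depend on $\eps$. On the other hand, $H_{2,\text{lin}}$ only sees the sorted gaps and blows up like $\eps^{-2}$.

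\textbf{Construction.} Take $\theta^*=e_1$ and the arm set consisting of:
\begin{itemize}
\item the standard basis $e_1,\dots,e_d$, with $x^*=e_1$ and each $e_j$ ($j\ge2$) having gap $1$;
\item $m:=\lceil\sqrt d\,\rceil$ extra arms $y_j=(1-\eps)e_1+\eps e_j$ for $j=2,\dots,m+1$, each with gap $\langle e_1-y_j,\theta^*\rangle=\eps$.
\end{itemize}
Assume $\eps\le d^{-1/4}$, so that $1/\eps^2\ge\sqrt d$. Then $x^*$ is unique, and the sorted gaps are $m$ copies of $\eps$ followed by gaps equal to $1$.

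\textbf{Computing $H_{2,\text{lin}}$.} For $2\le i\le m+1$ we have $i/\Delta_i^2=i/\eps^2$. For $i>m+1$ we have $i/\Delta_i^2=i\le d\lesssim\sqrt d/\eps^2$. Hence $H_{2,\text{lin}}=\max_{2\le i\le d} i/\Delta_i^2=\Theta(\sqrt d/\eps^2)$.

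\textbf{Upper bound on $\rho^*$.} Use the uniform design on $\{e_1,\dots,e_d\}$, so $V(\lambda)=I/d$. Then:
\begin{itemize}
\item for a basis arm $e_j$, $\|e_1-e_j\|^2_{V^{-1}}/1^2=2d$;
\item for a near arm $y_j$, $x^*-y_j=\eps(e_1-e_j)$, so $\|x^*-y_j\|^2_{V^{-1}}/\eps^2=2d$.
\end{itemize}
Thus $\rho^*\le 2d$, and the $\eps$ cancels exactly.

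\textbf{Lower bound $\rho^*\gtrsim d$.} This is the step needing the most care, since the adversary may place mass on the $y_j$'s. Fix any $\lambda$. All arms lie in the span of $e_1,\dots,e_d$. For coordinates $j\ge m+2$, the only arm with a nonzero $j$-th coordinate is $e_j$.
\begin{itemize}
\item Since $\sum_j\lambda_{e_j}\le1$, some $j\ge m+2$ has $\lambda_{e_j}\le 1/(d-m-1)\lesssim 1/d$.
\item For this $j$, $V(\lambda)$ has $e_j$ as an eigenvector with eigenvalue $\lambda_{e_j}$. Indeed, no arm mixes coordinate $j$ with any other coordinate, so $V(\lambda)$ is block diagonal with $e_j$ as its own block.
\item Therefore $\|e_1-e_j\|^2_{V^{-1}}\ge e_j^\top V^{-1}e_j=1/\lambda_{e_j}\gtrsim d$. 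If $\lambda_{e_j}=0$ the quantity is infinite, which only helps.
\item Since the gap of $e_j$ is $1$, this gives $\max_x\|x^*-x\|^2_{V^{-1}}/\Delta_x^2\gtrsim d$ for every $\lambda$, i.e.\ $\rho^*=\Omega(d)$.
\end{itemize}
Combining the two bounds, $\rho^*=\Theta(d)$.

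\textbf{Conclusion.} We have $\rho^*=\Theta(d)$ while $H_{2,\text{lin}}=\Theta(\sqrt d/\eps^2)$. The ratio $H_{2,\text{lin}}/\rho^*=\Theta(1/(\sqrt d\,\eps^2))$ can be made arbitrarily large by letting $\eps\to0$.
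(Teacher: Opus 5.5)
Your argument is correct. It uses the same mechanism as the paper's proof: near-optimal arms whose difference vectors $x^*-x$ scale with $\eps$, so that $\|x^*-x\|^2_{V(\lambda)^{-1}}/\Delta_x^2$ does not depend on $\eps$, while $H_{2,\text{lin}}$, which only sees sorted gaps, grows like $\eps^{-2}$. The instance and the scope differ, however.

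The paper takes $\theta=(1,\ldots,1)^\top$ and makes all $d-1$ near arms collinear with $x^*$, namely $x_i=(1-i^{1/4}\eps)e_1$. Their gaps $i^{1/4}\eps$ are graded so that $i/\Delta_i^2=\sqrt{i}/\eps^2$ is maximized at $i=d$, and the far arms $0.5\,e_{j+1}$ never enter $H_{2,\text{lin}}$; this implicitly needs $d^{1/4}\eps<1/2$. It then shows $\rho^*\le 8d$ via the uniform design on $x_1$ and the far arms. You instead use $\lceil\sqrt d\,\rceil$ near arms with equal gap $\eps$ in directions $e_1-e_j$, with the standard basis as far arms. That is why you need $\eps\le d^{-1/4}$, to keep the far-arm terms $i\le d$ from dominating $H_{2,\text{lin}}$.

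What your route buys is the matching lower bound. The paper asserts $\rho^*=\Theta(d)$ but only proves the upper bound. Your isolated-coordinate and pigeonhole step genuinely gives $\rho^*=\Omega(d)$. It also transfers verbatim to the paper's instance, whose coordinates $2,\ldots,d$ are each touched only by a single far arm.

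Two cosmetic points. First, the pigeonhole needs $d-m-1\gtrsim d$, i.e.\ $d$ above a small constant (for $d=3$ there is no index $j\ge m+2$ at all); this is harmless for an asymptotic claim. Second, $\|e_1-e_j\|^2_{V^{-1}}\ge 1/\lambda_{e_j}$ follows in one line from $\|y\|^2_{V^{-1}}\ge (u^\top y)^2/(u^\top V u)$ with $u=e_j$. This also handles singular designs without invoking block structure.
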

\begin{proof}

Consider the following instance. Let the true parameter be $\theta=(1,1,\ldots,1)^\top$. The top $d$ arms are designed as follows:
for $i=1$,
\begin{align*}
    x_1 = (1, 0, \ldots, 0)^\top
\end{align*}
and for $i=2,\ldots,d$,
\begin{align*}
    x_i = (1-i^{1/4}\eps, 0, \ldots, 0)^\top.
\end{align*}
Let the total number of arms be $n = 2d-1$. The remaining $n-d=d-1$ arms are designed as
\begin{align*}
    x_{d+j} = (0, \ldots, 0, 0.5, 0, \ldots, 0)^\top,
\end{align*}
where the $0.5$ is at the $(j+1)$-th coordinate, for $j=1, \ldots, d-1$.

First, we compute $H_{2, \text{lin}}$ for this instance:
\begin{align*}
    H_{2, \text{lin}}=\max _{2 \leq i \leq d} \frac{i}{\Delta_i^2} = \max _{2 \leq i \leq d} \frac{i}{(i^{1/4}\eps)^2} = \max _{2 \leq i \leq d} \frac{i^{1/2}}{\eps^2} = \frac{d^{1/2}}{\eps^2}.
\end{align*}
Next, we analyze $\rho^*$. By definition:
\begin{align*}
    \rho^{\ast}=&\min_{\lambda\in\Delta(\mathcal{X})}\max_{x\in \mathcal{X}\setminus \{x^{\ast}\}}\frac{\|x^{\ast}-x\|^2_{(\sum_{x\in \mathcal{X}}\lambda_x xx^{\top})^{-1}}}{\langle x^{\ast}-x, \theta\rangle ^2}\\
    =&\min_{\lambda\in\Delta(\mathcal{X})}\max\cbr{\max_{i=2,\ldots,d}\frac{\|(i^{1/4}\eps,0,\ldots,0)\|^2_{(\sum_{x\in \mathcal{X}}\lambda_x xx^{\top})^{-1}}}{(i^{1/4}\eps)^2},\max_{j=1,\ldots,d-1}\frac{\|(1,0,\ldots,-0.5,0,\ldots,0)\|^2_{(\sum_{x\in \mathcal{X}}\lambda_x xx^{\top})^{-1}}}{0.5^2}}
\end{align*}
To bound this quantity, consider a uniform allocation $\lambda'$ over the arms $\{x_1\} \cup \{x_i\}_{i=d+1}^n$. This choice provides an upper bound on $\rho^{\ast}$:
\begin{align*}
    \rho^{\ast}\leq\max\cbr{\max_{i=2,\ldots,d}\frac{\|(i^{1/4}\eps,0,\ldots,0)\|^2_{(H')^{-1}}}{(i^{1/4}\eps)^2},\max_{j=1,\ldots,d-1}\frac{\|(1,0,\ldots,-0.5,0,\ldots,0)\|^2_{(H')^{-1}}}{0.25}}
\end{align*}
where the design matrix $H'$ under this allocation is:
\begin{align*}
	H'=\frac{1}{d}x_1x_1^\top+\frac{1}{d}\sum_{i=d+1}^nx_ix_i^\top=\begin{pmatrix}
		\frac{1}{d} & 0 & 0 & \ldots & 0 \\
		0 & \frac{1}{2d} & 0 & \ldots & 0 \\
		0 & 0 & \frac{1}{2d} & \ldots & 0 \\
		\vdots & \vdots & \vdots & \ddots & \vdots \\
		0 & 0 & 0 & \ldots & \frac{1}{2d}
	\end{pmatrix}
\end{align*}
This leads to the bound:
\begin{align*}
    \rho^{\ast}\leq\max\cbr{\frac{(i^{1/2}\eps^2)d}{i^{1/2}\eps^2},\frac{1^2\cdot d + (-0.5)^2 \cdot 4d}{0.25}} = \max\cbr{d, \frac{2d}{0.25}} = 8d = \Theta(d).
\end{align*}
Therefore, for a sufficiently small $\eps$, we have established the separation:
\begin{align*}
    H_{2, \text{lin}}=\frac{d^{1/2}}{\eps^2} \gg \Theta(d) \ge \rho^*.
\end{align*}
\end{proof}

\section{Unimodal Bandits}
\label{sec:appendix-unimodal}

\newtheorem*{unimodal-FC-FB-comparison}{Proposition~\ref{prop:unimodal-FC-FB-comparison}}
\begin{unimodal-FC-FB-comparison}
	Suppose the unimodal means lies in $[0,1]$, then
    \begin{align*}
        T_\mu(\delta_1) + K = \tO(\Delta^{-2})
    \end{align*}
    and there exists an instance where the above two quantities are order-wise different, i.e., $T_\mu(\delta_1) + K = o(\Delta^{-2})$.
    The quantity $T_\mu(\delta)$ is defined in their Theorem 3.7 in~\citet{poiani2024best} and $\Delta := \min_{2 \leq i \leq K} |\mu_i - \mu_{i-1}|$ as defined in~\citep{ghosh2024fixed}.
\end{unimodal-FC-FB-comparison}

\begin{proof}
We will use notations $C_{\mu,1}, T_{1/2}^*(\mu), T_0(\delta), \tT_0(\delta)$ defined in Theorem 3.7 in~\citet{poiani2024best}. Taking $\eps = 1$ and $\gamma = a_\mu/2$, one has $C_{\mu,1}= O(H_1 \log H_1), $ $T_{1/2}^*(\mu) = O(H_1)$, where $H_1 = \sum_{i \neq i^*} \Dt_i^{-2}$ and $\Dt_i = \mu_{i^*} - \mu_i$, where $i^* = \argmax_{i \in [K]} \mu_i$.

If the best arm has one neighbor, $T_\mu(\delta_1) \leq \max (T_0(\delta_1), C_{\mu,1}) = \max (O(T_{1/2}^*(\mu)), C_{\mu,1}) = O(H_1 \log H_1)$.

If the best arm has two neighbors, by the second part of Theorem 3.7, $T_\mu(\delta_1) \leq \max (\tT_0(\delta_1), C_{\mu,1}) = \max (O(T_{1/2}^*(\mu)), C_{\mu,1}) = O(H_1 \log H_1)$.

In both cases, $T_\mu(\delta_1) = O(H_1 \log H_1)$. As the unimodal means lies in $[0,1]$, $T_\mu(\delta_1) + K = O(H_1 \log H_1)$.

Furthermore, $H_1 = \sum_{i \neq i^*} \Dt_i^{-2} \leq  \sum_{i \neq i^*} ((i-i^*) \Dt)^{-2} \leq O(\Delta^{-2})$. 

To construct an instance where the quantities $H_1$ and $\Delta^{-2}$ are order-wise different, consider the unimodal means are $(0, \eps, \eps+\fr 12, \eps, 0)$ for some small $\eps$. In this case, $H_1 = \sum_{i \neq i^*} \Dt_i^{-2} = O(1)$, whereas $\Dt^{-2} = \fr{1}{\eps^2}$. 
\end{proof}



\section{Cascading Bandits}
\label{sec:appendix-cascading}
In cascading bandits, which model user interactions in recommender systems under a sequential click process, the best-arm identification problem has so far only been studied under the fixed-confidence formulation. \citet{zhong2020best} proposed \textsc{CascadeBAI}, the first algorithm with theoretical guarantees in this setting, and established upper and lower bounds on its sample complexity. To date, no theoretical result is known for the fixed-budget counterpart. This leaves open an important gap: fixed-budget algorithms are particularly relevant when the exploration horizon is limited by design constraints, as in recommender systems with fixed user engagement. 

Let the ground set of items be $[L] := \{1, \dots, L\}$.  
Each item $i \in [L]$ has an unknown click probability $w(i) \in [0,1]$.  
An arm is a list of $K$ items: $S = (i_1, \dots, i_K) \in [L]^{(K)}$.  
At each round $t$, the agent plays an arm $S_t$ and receives cascading feedback:  
the user examines items sequentially until clicking on one (if any).  

Feedback on item $i$ at time $t$ is $W_t(i) \sim \text{Bern}(w(i))$, i.i.d. across time.  
The feedback vector is $O_t \in \{0,1,?\}^K$, where ``?'' denotes unobserved.  
The goal is to identify an $\varepsilon$-optimal arm with probability at least $1-\delta$.  

\paragraph{Definitions.}  
Order the click probabilities as $w(1) \geq w(2) \geq \dots \geq w(L)$.  
Assume $w(K) > w(K+1)$, so that the top $K$ items are the unique optimal set.  

Define the gap for each item $i \in [L]$:
\[
\Delta_i = \begin{cases}
    w(i) - w(K+1), & 1 \leq i \leq K, \\
    w(K) - w(i), & K < i \leq L .
\end{cases}
\]

Define the adjusted gap for $\varepsilon \geq 0$:
\[
\overline{\Delta}_i = 
\begin{cases}
    \Delta_i + \varepsilon, & 1 \leq i \leq K, \\
    \Delta_K - \Delta_i + \varepsilon, & K < i \leq K', \\
    \Delta_i - \varepsilon, & K' < i \leq L ,
\end{cases}
\]
where $K' := \max\{i \in [L] : w(i) \geq w(K) - \varepsilon\}$.

Our meta-algorithm, which systematically converts fixed-confidence algorithms into fixed-budget algorithms, offers a principled way to leverage existing fixed-confidence results and extend them to fixed-budget guarantees without starting from scratch, and yields the following corollary.
\begin{corollary}
    Assume $K' < 2K - 1$.
Running Algorithm FC2FB with the base
 algorithm \textsc{CascadeBAI}$(\varepsilon,\cdot,K)$, base failure rate $\delta_0 = 1/e$, $Q=1$ and the total budget $B \geq   \left( A\ln \left(\frac{1}{\delta_0}\right) + \left(C+ 1\right) \right)\ln \left( 2\frac{A }{Q} \ln \left(\frac{1}{\delta_0}\right) + \frac{C+1}{Q} \right) $ outputs an $\varepsilon$-optimal arm with probabiltiy at least 
 	\begin{align*}
		1 - 3 \exp\left( - \frac{B }{\frac{Q}{\ln(1/\dt_0)} + 4 A  \ln\left(\frac{B}{Q} \right)} \right) \\ 
	\end{align*}
 where $A = c_1 \sum_{k=1}^{K-K_2} \frac{v^2_{K-k+1}}{\mu^2_{K-k+1}} + c_2 \frac{1}{\mu_K} \sum_{i=1}^{L-K} \frac{1}{\overline{\Delta}_{\sig(i)}^2} + c_3 \sum_{k=2}^{2K-K'} \frac{K-k+1}{\mu_{K-k+1}} 
  \Big( \frac{1}{\overline{\Delta}_{\sig(L-K+k)}^2} - \frac{1}{\overline{\Delta}_{\sig(L-K+k-1)}^2} \Big)$ and $C = c_1 \sum_{k=1}^{K-K_2} \frac{v^2_{K-k+1}}{\mu^2_{K-k+1}} 
  \log\!\left( \sum_{j=1}^{K-K_2} \frac{v^2_{K-j+1}}{\mu^2_{K-j+1}}\right)$ for some constants $c_1, c_2, c_3$, and $\mu, v, \overline{\Delta}, K_2$ are some instance-dependent constants that are defined precisely in~\citet{zhong2020best}. 
 
\end{corollary}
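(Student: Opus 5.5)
This corollary follows directly from Theorem~\ref{thoerem_main_FC2FB}. The only real work is to check that \textsc{CascadeBAI} is a strong FC algorithm in the sense of Definition~\ref{def_main_strongFC}, with the stated $A$ and $C$, where ``correct'' now means ``outputs an $\varepsilon$-optimal arm.''

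First I would restate the guarantee of \citet{zhong2020best}. Under $K' < 2K-1$, for every $\delta\in(0,1)$, \textsc{CascadeBAI}$(\varepsilon,\delta,K)$ outputs an $\varepsilon$-optimal arm with probability at least $1-\delta$. On that same event it stops within a number of rounds of the form $A\ln(1/\delta) + C$. Their bound typically appears as a sum of terms $\frac{v^2}{\mu^2}\log(\cdot/\delta)$, $\frac{1}{\mu_K\overline{\Delta}^2}\log(\cdot/\delta)$, and so on. I would expand each $\log(X/\delta)$ as $\ln(1/\delta)+\ln X$, collect the $\ln(1/\delta)$ coefficients into $A$, and put the $\delta$-free remainder into $C$, adjusting $c_1,c_2,c_3$ as needed.

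If their statement only gives a joint event ``correct and stopped by $T^*_\delta$'' of probability $\ge 1-\delta$, that already implies both requirements of Definition~\ref{def_main_strongFC}: $\PP(\tau > T^*_\delta)\le\delta$, and $\PP(\text{not } \varepsilon\text{-optimal},\ \tau<\infty)\le\delta$.

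Next I would observe that the proof of Theorem~\ref{thoerem_main_FC2FB} never uses that the target is the unique best arm. It uses only two facts: each stage with failure rate $\delta_0^{L_r}$ errs with probability at most $\delta_0^{L_r}$, and stage $r^*$ fails to self-terminate with probability at most $\delta_0^{L_{r^*}}$. So replacing the event $\{\hat J\neq 1\}$ by $\{\hat J \text{ not } \varepsilon\text{-optimal}\}$ leaves the argument unchanged.

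Finally, plug in $\delta_0=1/e$ and $Q=1$. Then $\ln(1/\delta_0)=1$ and $\log_2 B \le 2\ln B$, which gives the displayed exponent up to constants, with the constant absorbed into $A$.

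The main obstacle is the budget condition. The condition stated in the corollary lacks the factor $2$ that appears in Assumption~\ref{assump_app_budgetsuff}. I would handle this by absorbing the factor into the constants $c_1,c_2,c_3$ (so into $A$ and $C$), since $A$ and $C$ are specified only up to constants. Failing that, I would simply invoke the assumption as stated in Theorem~\ref{thoerem_main_FC2FB}. A secondary check is the extraction of $C$ from \citet{zhong2020best}: I need to confirm their additive $\log$ term does not depend on $\delta$, so that $C$ is $\delta$-free as Definition~\ref{def_main_strongFC} requires.
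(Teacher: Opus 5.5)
Your proposal is correct and is essentially the paper's argument: the paper gives no separate proof and simply instantiates Theorem~\ref{thoerem_main_FC2FB} with the high-probability guarantee of \textsc{CascadeBAI} from \citet{zhong2020best}, viewed as a strong FC algorithm for the $\varepsilon$-optimal target, with $\delta_0=1/e$ and $Q=1$. Your extra bookkeeping is exactly what that one-line instantiation implicitly relies on: turning the joint event into the two requirements of Definition~\ref{def_main_strongFC}, replacing $\{\hat J\neq 1\}$ by the $\varepsilon$-suboptimality event, and absorbing the constant mismatches (the missing factor $2$ in the budget condition, $Q$ versus $4Q$, and $\ln$ versus $\log_2$) by enlarging $c_1,c_2,c_3$, which is legitimate because larger $(A,C)$ preserve the strong FC property and, for the additive terms, because $A$ is bounded away from zero.
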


\section{Additional Experiments} \label{section_app_experiments}

Our experiments in the main paper showed that there exists problems where our algorithm is better than SH and SHVar.
However, the gap between ours and SHVar did not seem to be sufficiently large.
Thus, one may argue that SHVar may actually enjoy the same performance as ours in general and that the analysis presented in~\citet{lalitha2023fixed} is loose.
While we believe their analysis is loose, we speculate that there are instances where SHVar's sample complexity can't match ours.

In this section, we provide a supporting experiment for our speculation above. Recall that SHVar operates based on two key mechanisms. First, at each stage $\ell$, it allocates samples to arms proportionally to $N_i \propto \fr{B}{\log_2(K)}\cdot \fr{\sigma_i^2}{\sum_{j \in \cA_{\ell}} \sigma_j^2}$ as given in Equation~(4) of \citet{lalitha2023fixed}, where $B$ is the total budget and $\cA_{\ell}$ denotes the set of active arms at the beginning of stage $\ell$. Secondly, at each stage $\ell$, SHVar eliminates the bottom half of the active arms in $\cA_{\ell}$ based on their empirical mean rankings. The key idea of this experiment is to construct instances that mislead SHVar’s sampling distribution such that the best arm and several near-optimal arms (i.e., those with very small suboptimality gaps) are each sampled only once. This design makes it highly likely that the best arm will be mistakenly eliminated during the halving step in the first stage, leading to poor performance.  
We now formally describe the instance as follows:
\begin{align*}
    &\sigma^2_i = \fr{1}{\fr{B}{\log_2(K)} - (K-1)},\, \forall i \in [1,\ldots,K-1]  \\
    &\sigma^2_K = 1, \\  
    &\Dt^2_i = \sig^2_i,\, \forall i \in [2,\ldots,K].
\end{align*}
where $B$ is the budget. In this instance, all arms except the last one (arm $K$) are near-optimal, each with a small suboptimal gap of $\Delta_i^2 = \fr{1}{\fr{B}{\log_2(K)} - (K-1)}$. More importantly, the best arm and the near-optimal arms ($\forall i \in [1,\ldots,K-1]$) are sampled once: 
\begin{align*}
    N_i \propto \fr{B}{\log_2(K)}\fr{\sigma_i^2}{\sum_{j \in [K]\sigma}\sigma_j^2} = \fr{B}{\log_2(K)}\fr{\fr{1}{\fr{B}{\log_2(K)} - (K-1)}}{\fr{K-1}{\fr{B}{\log_2(K)} - (K-1)} + 1} = 1
\end{align*}
The remaining budget is then allocated to the last arm
\begin{align*}
    N_K \propto \fr{B}{\log_2(K)}\fr{\sigma_K^2}{\sum_{j \in [K]\sigma}\sigma_j^2} = \fr{B}{\log_2(K)}\fr{1}{\fr{K-1}{\fr{B}{\log_2(K)} - (K-1)} + 1} = \fr{B}{\log_2(K)} - (K-1).
\end{align*}
As a result, in the first stage, the last arm--which has the largest suboptimal gap ($\Delta_K^2 = \sigma_K^2 = 1$) and receives the largest number of samples--is certainly be eliminated. The best arm--being very close to the near-optimal arms and sampled only once--is also likely to be eliminated, thereby achieving our intended objective. 

For comparison with our FC2FB(PE-KHN), recall that the PE-KHN algorithm (Algorithm~\ref{alg_main_FCESR}) within our FC2FB framework allocates samples to arms according to $N_i = \lceil \fr{2\sigma_i^2}{\eps_{\ell}^2}\ln\del{\fr{K}{\delta_{\ell}}} \rceil $ and eliminates arms based on the gap $\eps_{\ell}$ ( $\mathcal{S}_{\ell+1} \gets \mathcal{S}_\ell \setminus \{i \in S_\ell \mid \hat{\mu}^{(\ell)}_i \le \max_{j\in S_\ell} \hat{\mu}^{(\ell)}_{j} - 2 \epsilon_\ell\}$). Although the sample allocation strategies of PE-KHN and SHVar are similar, the key difference lies in their elimination rules. PE-KHN adaptively removes arms based on the gap sizes, whereas SHVar follows a fixed elimination schedule--halving the set of arms according to empirical rankings. Consequently, at the end of the first stage, half of the arms being eliminated by SHVar likely includes the best arm. In contrast, at the end of the first stage, PE-KHN tends to eliminate only the last arm (arm $K$), and gradually removes the near-optimal arms over subsequent stages.

Furthermore, for our FC2FB(PE-KHN) algorithm, the sample complexity of this instance is 
\begin{align*}
    O\del{\frac{\sigma_1^2}{\Delta^2_2} +   \sum_{i=2}^{K} \frac{\sigma_i^2}{\Delta_i^2}}
    = O\del{K}~,
\end{align*}
which indicates that this is a relatively easy instance for our FC2FB(PE-KHN). The poor empirical performance of SHVar, illustrated in Figures~\ref{exp:exp_prob_vs_budget_extreme} and~\ref{exp:exp_prob_vs_arms_extreme_case}, further reinforces the greater performance of our algorithm in comparison.
Figure~\ref{exp:exp_prob_vs_budget_extreme} shows the probability of misidentifying the best arm among $K = 32$ arms as the budget increases across different methods. As anticipated, SHVar’s misidentification probability appears roughly the same because the best arm tends to be eliminated in the first stage, independent of the available budget. In contrast, our FC2FB(PE-KHN) method consistently identifies the best arm, achieving a misidentification probability of zero. Furthermore, the standard SH algorithm also avoids this distributional pitfall and reliably identifies the best arm, underscoring that the failure is specific to SHVar’s heterogeneous sampling rule rather than to SH itself.  Figure~\ref{exp:exp_prob_vs_arms_extreme_case} shows similar trends as the number of arms $K$ increases: while SHVar’s misidentification probability remains high, both FC2FB(PE-KHN) and SH maintain strong performance and successfully identify the best arm even in larger problem instances.   
\begin{figure}[h!]
	\begin{center}
		{\centering
			\includegraphics[width=.45\textwidth]{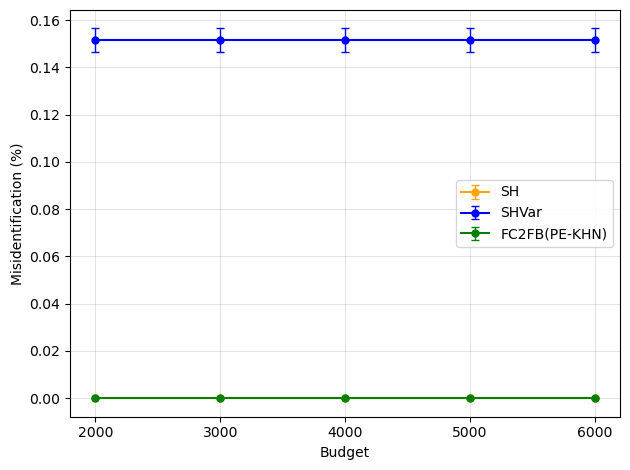}}
	\end{center}
	\caption{Probability of misidentifying the best arm in the Gaussian bandit as a function of the budget $B$, with $K = 32$ arms. Results are averaged over $5{,}000$ trials.}
	\label{exp:exp_prob_vs_budget_extreme}
\end{figure}

\begin{figure}[h!]
	\begin{center}
		{\centering
			\includegraphics[width=.45\textwidth,valign=t]{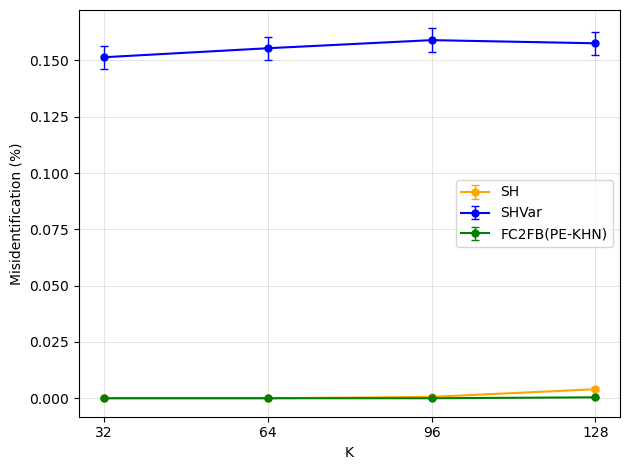}}
	\end{center}
	\caption{Probability of misidentifying the best arm in the Gaussian bandit as a function of the number of arms. The budget is fixed at $6000$. Results are averaged over $5{,}000$ trials.
}
	\label{exp:exp_prob_vs_arms_extreme_case}
\end{figure}

\section{Lemmata}
\begin{lemma} \label{lemma_app_budgetsuff}
	For the budget $B$ and the number of stages $R$ defined in Algorithm \ref{alg:fc2fb}, $A,C$ defined in Definition \ref{def_main_strongFC}, $r^*$ defined in Definition \ref{def_app_rstar}, $\delta_0 < 1$, and any $Q>0$, the following statements hold.
	\begin{align*}\label{eq1_lemma_app_budgetsuff}
		B \geq 2\left( A\ln \left(\frac{1}{\delta_0}\right) + \left(C+ 1\right) \right)\ln \left( 2\frac{A }{Q} \ln \left(\frac{1}{\delta_0}\right) + \frac{2(C+1)}{Q} \right) \implies r^* \leq R \text{ and } B \geq  2 R \cdot (C + 1) 
	\end{align*}
\end{lemma}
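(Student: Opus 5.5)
The plan is to reduce both conclusions to one elementary inequality of the form $y \ge a\log_2 y$, where $y := B/Q$ is the normalized budget. I write $M := A\ln(1/\delta_0) + C + 1$ and $a := M/Q$, so the hypothesis reads $y \ge 2a\ln(2a)$. The standing condition $Q \le B/2$ of Algorithm~\ref{alg:fc2fb} gives $y \ge 2$, hence $R = \lfloor \log_2 y\rfloor \ge 1$ and $R \le \log_2 y$.

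\textbf{First claim, $r^*\le R$.} By Definition~\ref{def_app_rstar} it suffices to show that stage $R$ is admissible. Since $L_R = 2^0 = 1$, this means $B' \ge T^*_{\delta_0} = A\ln(1/\delta_0) + C = M-1$. Using $B' = \lfloor B/R\rfloor \ge B/R - 1$, it is enough to have $B \ge RM$. With $R\le\log_2 y$, this follows from $y \ge a\log_2 y$. To obtain that inequality:
\begin{itemize}
\item The map $y\mapsto y/\log_2 y$ is increasing for $y\ge e$, so it suffices to check it at $y_0 = 2a\ln(2a)$, at least when $y_0 \ge e$.
\item Writing $u=\ln(2a)$, the check at $y_0$ becomes $2\ln 2\cdot u \ge u + \ln u$, i.e.\ $(2\ln2 - 1)u \ge \ln u$.
\item The function $u\mapsto(2\ln 2-1)u-\ln u$ is minimized at $u = 1/(2\ln2-1)\approx 2.59$, where it equals roughly $0.05>0$. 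Hence the inequality holds for all $u>0$.
\end{itemize}
The degenerate regime where $2a\ln(2a)$ is below $e$ (or $2a\le 1$) I will handle directly. There $a$ is $O(1)$, so $y\ge 2$ together with $\log_2 y \le y/2$ for $y \ge 2$ (equality at $y=2$ and $y=4$) should give $y \ge a\log_2 y$ whenever $a\le 2$. I will also check the intermediate range $2<a\le e/2$ by hand.

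\textbf{Second claim, $B \ge 2R(C+1)$.} This is where I expect the main obstacle, because the factor $2$ is not automatically available from $B\ge RM$: we have $M \ge C+1$, but not $M\ge 2(C+1)$. I would first try the same monotonicity argument with $a$ replaced by $c := (C+1)/Q \le a$. The hypothesis gives $y \ge 2a\ln(2a) \ge 2c\ln(2c)$, and the goal is $y \ge 2c\log_2 y$. Checking at $y_1 = 2c\ln(2c)$ now requires $\ln 2\cdot u \ge u + \ln u$ with $u = \ln(2c)$, which fails. So slack must come from elsewhere. Two options:
\begin{itemize}
\item Exploit $a \ge c + A\ln(1/\delta_0)/Q$. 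When $A\ln(1/\delta_0) \gtrsim C+1$, this gives $a\ge 2c$ and the bound follows from the first claim's inequality applied with $a$.
\item Otherwise, consider raising the constant in Assumption~\ref{assump_app_budgetsuff} (e.g.\ $2\to 4$), or using $\lfloor\log_2 y\rfloor$ more carefully.
\end{itemize}
I would treat this constant-chasing as the delicate step. The rest is routine calculus.
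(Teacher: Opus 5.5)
Your argument for the first claim is correct in its main case, and it is more careful than the paper's. The paper argues by contraposition from $B<RM$, where $M=A\ln(1/\delta_0)+C+1$, and then replaces $R$ by $\ln(B/Q)$. That step goes the wrong way, because $R=\lfloor\log_2(B/Q)\rfloor$ is typically larger than $\ln(B/Q)$ (already $B/Q=4$ gives $R=2>\ln 4$). Your direct route keeps $\log_2$. Your check $(2\ln 2-1)u\ge\ln u$ then shows that the constant $2$ in the hypothesis still suffices, with a margin of only about $0.05$.

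The degenerate case needs repair. The inequality $\log_2 y\le y/2$ fails on $(2,4)$ (e.g.\ $y=3$), so ``$y\ge a\log_2 y$ whenever $a\le 2$'' is false for $y\ge 2$. Also, the range $2<a\le e/2$ is empty, since $e/2<2$. The regime $2a\ln(2a)<e$ is exactly $a<e/2$. There it suffices to use $\min_{y>1} y/\log_2 y=e\ln 2\approx 1.88>e/2>a$.

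For the second claim, the gap you flag is genuine, and it cannot be closed: $B\ge 2R(C+1)$ does not follow from the stated hypothesis. Take $Q=1$, $\delta_0=1/e$, $A=1$, $C=99$. The hypothesis reads $B\ge 202\ln 202\approx 1072.3$, so $B=1073$ is allowed. Yet $R=10$ and $2R(C+1)=2000>B$. The first claim does hold here, since $B'=107\ge 100=T^*_{\delta_0}$.

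The paper obtains the second claim only from $B\ge RM$, plus the parenthetical remark that $A\ln(1/\delta_0)$ is ``generally orderwise larger than'' $C+1$. That is precisely your first option. It amounts to the unstated assumption $A\ln(1/\delta_0)\ge C+1$, which gives $M\ge 2(C+1)$ and hence $B\ge RM\ge 2R(C+1)$.

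Your second option does not rescue the statement as written. The counterexample already uses the exact floored $R$, so handling the floor more carefully gains nothing. Raising the leading constant from $2$ to $4$ also fails: the same parameters with $B=2145$ give $R=11$ and $2R(C+1)=2200$. A leading constant of $8$ would suffice, but that changes both this lemma and the budget condition of Theorem~\ref{thoerem_main_FC2FB}.

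So rather than searching for a proof of the lemma as stated, add $A\ln(1/\delta_0)\ge C+1$ (or a larger budget constant) as an explicit hypothesis. Your first option then completes the argument.
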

\begin{proof}
	\begin{align*}
		r* > R &\implies \max\Bigg\{\left(R -  \max \{ r \in \mathbb{Z}: B^{'} \geq T^*_{\delta_0^{2^{r}}} \} \right) , 1 \Bigg\} > R \\
		&\implies R -  \max \{ r \in \mathbb{Z}: B^{'} \geq T^*_{\delta_0^{2^{r}}} \}   > R \tag{$R \geq 1$}\\
		&\implies \max \{ r \in \mathbb{Z}: B^{'} \geq T^*_{\delta_0^{2^{r}}} \} < 0 \\
		&\implies B^{'} <  T^*_{\delta_0}  \tag{$ T^*_{\delta_0}$ is a decreasing function in $\delta$  }\\
		 &\implies B^{'} <  A \ln \left(\frac{1}{\delta_0}\right) + C \tag{Definition \ref{def_main_strongFC}}  \\
		&\implies \lfloor \frac{B}{R} \rfloor  <  A \ln \left(\frac{1}{\delta_0}\right) + C \\
		&\implies  \frac{B}{R}  <  A \ln \left(\frac{1}{\delta_0}\right) + C + 1 \\
		&\implies  B <  R \cdot A \ln \left(\frac{1}{\delta_0}\right) +  R \cdot (C + 1) \tag{step (a)}\\
		&\implies  B <  \ln(\frac{B}{Q}) \cdot A \ln \left(\frac{1}{\delta_0}\right) +  \ln(\frac{B}{Q}) \cdot (C + 1)\\
		&\implies \frac{B}{Q} < \ln(\frac{B}{Q}) \frac{A}{Q} \ln \left(\frac{1}{\delta_0}\right)  + \ln(\frac{B}{Q}) \frac{C+1}{Q}   \\ 
		&\implies X < \ln(X ) Z \tag{Let $Z := \frac{A }{Q} \ln \left(\frac{1}{\delta_0}\right) + \frac{C+1}{Q} $ and $X :=  \frac{B}{Q}$}   \\ 
		&\implies X < Z \ln \left(\frac{X}{2Z} \cdot 2Z\right) =  Z \ln \left(\frac{X}{2Z} \right) + Z \ln \left(2Z  \right) \leq  Z (\frac{X}{2Z} - 1) + Z\ln 2Z \tag{$ln (x) \leq x - 1$} \\
		&\implies  \frac{X}{2} <    Z\ln 2Z - Z   \leq Z\ln 2Z \\
		&\implies B <  2Q\left( \frac{A }{Q} \ln \left(\frac{1}{\delta_0}\right) + \frac{C+1}{Q}  \right)\ln \left( 2\frac{A }{Q} \ln \left(\frac{1}{\delta_0}\right) + \frac{2(C+1)}{Q} \right) \\
		&\implies B <  2\left( A\ln \left(\frac{1}{\delta_0}\right) + \left(C+ 1\right) \right)\ln \left( 2\frac{A }{Q} \ln \left(\frac{1}{\delta_0}\right) + \frac{2(C+1)}{Q} \right)
	\end{align*}
	Hence, by contra-position,
	\begin{align*}
		B \geq 2 \left( A\ln \left(\frac{1}{\delta_0}\right) + \left(C+ 1\right) \right)\ln \left( 2\frac{A }{Q} \ln \left(\frac{1}{\delta_0}\right) + \frac{2(C+1)}{Q} \right) \implies r^* \leq R
	\end{align*}

Also note that, from step (a),
\begin{align*}
	B &\geq 2\left( A\ln \left(\frac{1}{\delta_0}\right) + \left(C+ 1\right) \right)\ln \left( 2\frac{A }{Q} \ln \left(\frac{1}{\delta_0}\right) + \frac{2(C+1)}{Q} \right) \\
	\implies  B &\geq  R \cdot A \ln \left(\frac{1}{\delta_0}\right) +  R \cdot (C + 1) \\
	\implies  B &\geq  2 R \cdot (C + 1) \tag{Generally, $ A \ln \left(\frac{1}{\delta_0}\right) $ is orderwise larger than $(C + 1) $}
\end{align*}
 This completes the proof.
\end{proof}

\begin{lemma}\label{lem:alphafractionNew} 
	Let $\mathcal{E}$ be an event from a random trial such that $\PP(\mathcal{E}) \leq \delta$.
	Let $\alpha$ satisfy $\delta < \alpha < 1$.
	Let $N$ be the number of trials where $\mathcal{E}$ holds true out of $L$ independent trials.
	Then,
	\begin{align*}
		\PP(N \geq \alpha L ) \le \exp \left( -  \alpha L  \ln \left(\frac{\alpha}{e\delta} \right) \right)
	\end{align*}
\end{lemma}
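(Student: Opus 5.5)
The plan is a Chernoff bound for a Binomial tail. The trials are independent and each has $\PP(\cE)\le\dt$. So $N=\sum_{\ell=1}^L Z_\ell$ with $Z_\ell$ independent Bernoulli indicators of mean $p_\ell\le\dt$. We may therefore assume without loss of generality that $N$ is stochastically dominated by $\mathrm{Bin}(L,\dt)$. Concretely, the moment generating function is monotone in each $p_\ell$, so the exponential-moment bound below only improves if every $p_\ell$ is replaced by $\dt$.

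\textbf{Step 1: Markov on the exponential moment.} For any $s>0$,
\begin{align*}
\PP(N\ge\alpha L)\le e^{-s\alpha L}\,\EE[e^{sN}] = e^{-s\alpha L}\prod_{\ell=1}^L\bigl(1+p_\ell(e^s-1)\bigr)\le \exp\bigl(-s\alpha L + L\dt(e^s-1)\bigr),
\end{align*}
where the last step uses $1+x\le e^x$.

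\textbf{Step 2: choose $s$.} Take $s=\ln(\alpha/\dt)$, which is positive because $\alpha>\dt$. Then $L\dt(e^s-1)=L(\alpha-\dt)\le L\alpha$, and the exponent becomes at most
\begin{align*}
-\alpha L\ln(\alpha/\dt)+\alpha L=-\alpha L\ln\bigl(\alpha/(e\dt)\bigr),
\end{align*}
which is the claimed bound. Equivalently, one can quote the standard KL form $\PP(N\ge\alpha L)\le\exp(-L\,\mathrm{kl}(\alpha,\dt))$ and lower bound $\mathrm{kl}(\alpha,\dt)\ge\alpha\ln(\alpha/\dt)-\alpha$. That inequality follows from $(1-\alpha)\ln\frac{1-\alpha}{1-\dt}\ge(1-\alpha)\bigl(1-\frac{1-\dt}{1-\alpha}\bigr)=\dt-\alpha\ge-\alpha$.

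No step is a real obstacle. The only care needed is in Step 1, where the independence of the trials is needed to factor the MGF, and the bound $p_\ell\le\dt$ must be used monotonically. When the bound is applied as in Proposition~\ref{proposition_main_strong_correct}, only incorrect terminations are counted; those events have probability at most $\dt_0$ per instance and are independent across instances, since the instances use disjoint samples. The bound is meaningful only when $\alpha>e\dt$, which is why the propositions require $\dt_0<1/(4e)$.
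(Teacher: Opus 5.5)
Your proof is correct and is essentially the paper's argument. The paper quotes the Bernoulli KL Chernoff bound $\exp(-L\,\mathrm{KL}(\alpha,\mu))$ and then lower-bounds the $(1-\alpha)$ term via $\ln(1+x)\le x$, which is exactly your alternative route. Your Steps 1 and 2 instead derive the same exponent $\alpha\ln(\alpha/\dt)-\alpha+\dt$ directly from the moment generating function with the tilt $s=\ln(\alpha/\dt)$, before dropping $\dt$, so they are slightly more self-contained and need only independence and $p_\ell\le\dt$ rather than identically distributed trials.
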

\begin{proof}
	Recall the standard KL divergence based concentration inequality where $\hat{\mu}_n$ is the sample mean of $n$ Bernoulli i.i.d. random variables with head probability $\mu$:
	\begin{align*}
		\forall\;\;\eps \ge 0, \PP(\hat{\mu}_n - \mu \le \epsilon) \le \exp(-n\textbf{KL}(\mu+\epsilon,\mu)) ~.
	\end{align*}
	Note that $N/L$ can be viewed as the sample mean of Bernoulli trials with $\mu:= \PP(\mathcal{E})$.
	Then,
	\begin{align*}
		\PP(N \ge \alpha L)
		&= \PP(\fr{N}{L} \ge \alpha)
		\\&= \PP(\fr{N}{L} - \mu \ge \alpha - \mu)
		\\&\le \exp(-L \textbf{KL}(\alpha, \mu)) \tag{$\alpha > \delta \ge \mu$ }
		\\&= \exp\del{-L \del{\alpha\ln(\frac{\alpha}{\mu}) + (1-\alpha) \ln \fr{1-\alpha}{1-\mu} }}
		\\&\sr{(a)}{\le} \exp\del{-L \del{\alpha\ln(\frac{\alpha}{\mu}) - \alpha}}
		\\&= \exp\del{-L \alpha\ln(\frac{\alpha}{e\mu})}
	\end{align*}
	where $(a)$ is by the following derivation:
	\begin{align*}
		(1-\alpha) \ln \fr{1-\alpha}{1-\mu}
		&= - (1-\alpha) \ln \fr{1-\mu}{1-\alpha}
		\\&= - (1-\alpha) \ln \del{1 + \fr{\alpha - \mu}{1-\alpha} }  
		\\&\ge - (1-\alpha) \cdot \frac{\alpha - \mu}{1-\alpha} \tag{$\forall\;\;x, \ln(1+x) \leq x$}\\
		&= -(\alpha - \mu) \\
		&\ge -\alpha 
	\end{align*}
	Hence,
	\begin{align*}
		\PP(N \geq  \alpha L  ) \leq \exp\del{-L \alpha\ln(\frac{\alpha}{e\mu})} \leq \exp\del{-L \alpha\ln(\frac{\alpha}{e\delta})}  \tag{$\mu = \PP(\mathcal{E}) \leq \delta$}\\
	\end{align*}
\end{proof}

\end{document}